\definecolor{Annotation}{rgb}{255,0,0}
\begin{document}			
	\title{A Safe Screening Rule with Bi-level Optimization of $\nu$ Support Vector Machine}		
	
	\author{Zhiji Yang, Wanyi Chen, 
		Huan Zhang, Yitian Xu, Lei Shi, Jianhua Zhao
		
		\thanks{This work was supported in part by the National Natural Science Foundation under Grant 62006206, 12161089, 11931015, 12271471, the Scientific Research Fund Project of Yunnan Provincial Department of Science and Technology under Grant 202001AU070064, and the Yunnan Provincial Department of Education Science Research Fund Project under Grant 2023Y0658.}
		\thanks{Zhiji Yang, Wanyi Chen, Huan Zhang, Lei Shi, and Jianhua Zhao are with the College of Statistics and Mathematics, Yunnan University of Finance and Economics, Yunnan 650221, China (e-mail: yangzhiji@ynufe.edu.cn).}
		\thanks{Yitian Xu is with the College of Science, China
			Agricultural University,  Beijing 100083, China.}}

	\maketitle	
	\begin{abstract}
		Support vector machine (SVM) has achieved many successes in machine learning, especially for a small sample problem. As a famous extension of the traditional SVM, the $\nu$ support vector machine ($\nu$-SVM) has shown outstanding performance due to its great model interpretability. However, it still faces challenges in training overhead for large-scale problems. To address this issue, we propose a safe screening rule with bi-level optimization for $\nu$-SVM (SRBO-$\nu$-SVM) which can screen out inactive samples before training and reduce the computational cost without sacrificing the prediction accuracy. Our SRBO-$\nu$-SVM is strictly deduced by integrating the Karush-Kuhn-Tucker (KKT) conditions, the variational inequalities of convex problems and the $\nu$-property. Furthermore, we develop an efficient dual coordinate descent method (DCDM) to further improve computational speed. Finally, a unified framework for SRBO is proposed to accelerate many SVM-type models, and it is successfully applied to one-class SVM. Experimental results on 6 artificial data sets and 30 benchmark data sets have verified the effectiveness and safety of our proposed methods in supervised and unsupervised tasks.
	\end{abstract}
	
	\begin{IEEEkeywords}
		$\nu$ support vector machine, One-class support vector machine, Safe screening, Acceleration method. 
	\end{IEEEkeywords}
	
	\IEEEpeerreviewmaketitle
	
	\section{Introduction}
	Support vector machine (SVM) is one of the most successful classification methods in the field of machine learning \cite{Steinwart3}. Its basic idea is to construct two parallel hyperplanes to separate two classes of instances and maximize the distance between the hyperplanes. The original version of support vector machine was proposed in 1995 by Vapnik et al. \cite{Vapnik}. It is often called $C$-SVM since there is a trade-off parameter $C$ that gives the penalty for incorrectly separated samples. When $C$ is large, the model tends to separate all training samples as correctly as possible. Otherwise, the model is more inclined to implement the maximal-margin principle of positive and negative hyperplanes. 
	However, selection of the parameter $C$ often lacks theoretical guidance and is generally selected in a wide range, such as the interval [$2^{-3}$, $2^{8}$], by cross-validation with grid search. It is usually difficult to select the optimal parameter from a wide range by a grid search. In addition, this approach needs to repeatedly solve many complete optimization problems, which is faced with the problem of computational overhead when data sets are large.
	
	The classic $\nu$ support vector machine ($\nu$-SVM) \cite{scholkopfNewSupportVector2000} is a successful modification of $C$-SVM. More importantly, in a skillful manner, a more interpretable parameter $\nu$ is introduced to replace the original parameter $C$. The parameter $\nu$ controls the bounds in the proportion of the support vectors and the error bound. It ranges from 0 to 1, and provides great convenience for parameter selection. Furthermore, some improved models based on $\nu$-SVM are proposed, such as the parametric insensitive / marginal model \cite{haoNewSupportVector2010}. Although its prediction accuracy is improved to some extent, more parameters are added, making parameter selection more expensive. In addition, an unsupervised version of $\nu$-SVM is presented, named the one-class support vector machine (OC-SVM) \cite{scholkopfEstimatingSupportHighDimensional2001}. It learns a hyperplane and gets a region which contains all training instances. If a test instance lies outside the region, it is declared an outlier. 
	The performance of OC-SVM has been fully analyzed and has a high prediction accuracy. It is widely used to deal with anomaly detection \cite{yinFaultDetectionBased2014,chalapathyAnomalyDetectionUsing2019}, user recommendation \cite{yajimaOneClassSupportVector2006a}, Handwritten Signature Verification System (HSVS) \cite{guerbai2015effective}. \cite{kauffmann2020towards} combines OC-SVM with deep Taylor decomposition (DTD) to propose OC-DTD, which is applicable to many common distance-based kernel functions and outperforms baselines such as sensitivity analysis. A variant of OC-SVM, called support vector data description (SVDD) \cite{tax_support_2004}, finds a hypersphere rather than a hyperplane to improve prediction performance, and has recently been extended to deep learning \cite{2018Deep}. Furthermore, \cite{xing2023contrastive} proposes contrastive deep SVDD (CDSVDD) to improve the performance of SVDD in processing large-scale data sets.
	
	Although $\nu$-SVM type models achieve admirable prediction performance in many applications, high time complexity is an obstacle in dealing with large-scale problems. Some research is devoted to saving computational cost or improving parameter selection. For example, in \cite{changTrainingSupportVector2001}, the authors investigate the relationship between $\nu$ -SVM and $C$ -SVM and propose a decomposition method for $\nu$-SVM to improve its efficiency. \cite{steinwartOptimalParameterChoice2003a} has proved that the parameter $\nu$ is a close upper estimate of twice the optimal Bayes risk, and this result can be used to improve the standard cross-validation for $\nu$-SVM. \cite{changLIBSVMLibrarySupport2011} develops a popular library for SVM (LIBSVM), which considers a sequential minimal optimization (SMO) type decomposition method \cite{platt1998fast} to solve optimization problems. \cite{hsieh_dual_2008} proposes a dual coordinate descent method (DCDM) for large-scale linear SVM starting from the dual problem, which has proven very suitable for use in large-scale sparse problems. To further improve efficiency, \cite{wenthundersvm18} has presented an effective open source ThunderSVM software toolkit that takes advantage of the high performance of graphics processing units (GPUs) and multi-core CPUs.
	
	Although these methods greatly improve the efficiency of traditional SVMs, the sparsity of SVM-type models is ignored. That is, these efficient algorithms above do not consider that the hyperplanes of the models can be completely determined by only a few support vectors and most of the samples are not necessary. Based on the above considerations, we focus on the sample screening (often called sample selection) method for the $\nu$-SVM type model in this paper.
	
	Many traditional sample screening methods \cite{xiajiantaoFastTrainingAlgorithm2003} cannot guarantee safety. That is, screened samples may directly affect the prediction accuracy of the final classifier. In recent years, a novel approach called `safe screening' has been presented. One of them directly achieves the safe screening rule (SSR) based on optimality conditions from the optimization problem of the classification model. It can guarantee to achieve the same solution as the original models. The safe screening method has been widely used to handle large-scale data for sparse models, such as LASSO \cite{wangLassoScreeningRules2014}, sparse logistic regression \cite{wangSafeScreeningRule2013,panSafeFeatureElimination2021}, and SVM \cite{pmlr-v28-ogawa13b} \cite{dantasSafeScreeningSparse2021}. In particular, SSR to solve the dual problem of SVM (the dual problem of SVM via variational inequalities, DVI-SVM) \cite{pmlr-v32-wangd14} has achieved remarkable results in $C$-SVM. This paper embeds SSR in the training process and safely identifies non-support vectors before solving the optimization problem, which could greatly reduce computational cost and memory. Furthermore, this method has been extended to several modified SVMs \cite{caoMultivariableEstimationbasedSafe2020,yangSafeAccelerativeApproach2018,YANG20181}. There is another kind of safe rules, which is constructed based on feasible solutions, such as GAP \cite{fercoqMindDualityGap2015}. It could guarantee safety more strictly in both theory and real applications. However, it has to be applied repeatedly in the solving process and sometimes that will be inefficient.
	
	Since SSR achieves outstanding performance in $C$-SVM, the natural idea is to further apply this idea to more interpretable and flexible models of type $\nu$-SVM. In fact, it is difficult to figure this out. There are at least two important bottlenecks.
	
	First, in $C$-SVM, the constraint conditions of the dual problem could be rewritten without parameters. In this way, no matter how the parameter $C$ is set, the feasible region of the optimization problem will not change. In this paper, we call this trait the invariance property of the feasible region (IPFR). The IPFR provides a very suitable condition for establishing SSR. However, $\nu$-SVM does not have IPFR, which brings great difficulty in establishing SSR to reduce computational cost. This is the most important problem in our work.
	
	Second, in the original $C$-SVM, the two support hyperplanes are constructed as $\bm{w}\cdot \bm{x}+b = \pm 1$,  only related to the variables $\bm{w}$ and $b$ to be solved. For comparison, in $\nu$-SVM, the two support hyperplanes $\bm{w}\cdot \bm{x}+b = \pm \rho$ are more flexible. When a new variable $\rho$ is added, the feasible region of solutions is more difficult to estimate. A more detailed discussion is given in Section 2.l.
	
	Although \cite{yuanBoundEstimationbasedSafe2021} derived a safe screening rule for the maximum margin of twin spheres support vector machine with pinball loss (SSR-Pin-MMTSM), and applied it to $\nu$-SVM at the end of the paper, its safe region for this method is not tight enough to identify as many redundant samples as possible. Furthermore, it did not provide the corresponding derivations and algorithms in detail. Therefore, the above questions still need some exploration.
	
	In this paper, we overcome the theoretical difficulties above and propose the SSR with bi-level optimization for $\nu$-SVM type models. The main contributions are as follows:
	
	\begin{enumerate}
	\item The bottleneck above is overcome by constructing a bi-level optimization problem, so that the screening proportion and the computational cost could be ideally balanced.

	\item Our proposed method could greatly accelerate the original $\nu$ -SVM with safety.

	\item It is the first time that the idea of safe screening is introduced into an unsupervised problem, i.e., OC-SVM. Our work provides guidance for raising a screening rule for sparse optimization with parameter constraints.
	\end{enumerate}
	
	The rest of this paper is organized as follows. Section \ref{Preliminaries} simply reviews the $\nu$-SVM and then analyzes its Karush-Kuhn-Tucker (KKT) conditions and dual problems. Our SRBO-$\nu$-SVM and DCDM are proposed in Section \ref{SSR-nu-SVM}. Section \ref{GeneralDiscussion} gives a general discussion on the proposed SRBO and further provides SRBO-OC-SVM. Section \ref{experiment} conducts numerical experiments on artificial data sets and benchmark data sets to verify the safety and validity of our proposed method. The last part is the conclusion.
			
	\section{Preliminaries}\label{Preliminaries}
	In this section, the basic knowledge of $\nu$-SVM and the motivation of our proposed SRBO-$\nu$-SVM are given.
	\subsection{The Model of \texorpdfstring{$\nu$}.-SVM}
	Given training vectors $\bm{{x}_{i}}\in\mathbb{R}^{p},i=1,2,\cdots,l$ of two classes, and a label vector $\bm{Y} \in\mathbb{R}^{l}$ such that $y_{i}\in \{1,-1\}$, $\nu$-SVM solves the following primal problem
	\begin{eqnarray}\label{001}
		\displaystyle{\min_{\bm{w},\bm{\xi},\rho}}~~&& \frac{1}{2}\|\bm{w}\|^{2}-\nu\rho+\frac{1}{l}\sum_{i=1}^{l}\xi_{i}\\
		\mbox{s.t.}~~&&y_i(\langle	\bm{w},\Phi(\bm{x_{i}})\rangle+b) \geq \rho -\xi_{i}, \nonumber\\
		~~&&\rho \geq 0,~\xi_{i}\geq 0, i=1,2,\cdots,l, \nonumber
	\end{eqnarray}	
	where $\bm{w}$, $b$ and $\rho$ are the variables of the support hyperplanes $\langle	\bm{w},\Phi(\bm{x})\rangle+b = \pm\rho$. $\xi_{i}$ is the slack factor. $\nu$ is a parameter manually selected that could control the proportions of support vectors and misclassified samples in training.

    As an excellent variant of $C$-SVM, $\nu$-SVM is unique in its own way. It seems that the rewritten hyperplanes $(\bm{w}/\rho) ^T \bm{x}+b/\rho=\pm 1$ are equivalent to $\bm{w}_{new}^T \bm{x} + b_{new} = \pm 1$. However, looking at the optimization problem (1) of $\nu$-SVM, the objective function is quadratic with respect to $\bm{w}$. Therefore, the formulation rewritten by simple linear scaling with $\rho$ is not equivalent to $C$-SVM. In other words, the existing solvers or algorithms of traditional $C$-SVM cannot directly work for $\nu$-SVM.

	For the sake of discussion, let $\Phi(\bm{x_{i}})\leftarrow[\Phi(\bm{x_{i}}),1], \bm{w}\leftarrow[\bm{w};~b]$. Then, the formulation (\ref{001}) can be rewritten as
	 \footnote{Note that, the first term $\frac{1}{2}\|\bm{w}\|^{2}$ in (\ref{001a}) stands for $\frac{1}{2}(\|\bm{w}\|^{2}+b^{2})$ of previous $\nu$-SVM. In this sense, it is not a strict derivation from (\ref{001}) to (\ref{001a}). But the added term $b^{2}$ could make the optimization problem more stable (to guarantee the achievement of the global optimal $b$). Some literatures have well studied this issue \cite{yuan-haishaoImprovementsTwinSupport2011,xuImprovedNtwinSupport2014}, and researchers named the models included $\frac{1}{2}(\|\bm{w}\|^{2}+b^{2})$ as \emph{bounded} SVMs.}  
	\begin{eqnarray}\label{001a}
		\displaystyle{\min_{w,\xi,\rho}}~~&& \frac{1}{2}\|\bm{w}\|^{2}-\nu\rho+\frac{1}{l}\sum_{i=1}^{l}\xi_{i}\\
		\mbox{s.t.}~~&&y_i \langle	\bm{w},\Phi(\bm{x_{i}})\rangle \geq \rho -\xi_{i}, \nonumber\\
		~~&&\rho \geq 0,~\xi_{i}\geq 0, i=1,2,\cdots,l. \nonumber
	\end{eqnarray}
	After $\bm{w}$ and $\rho$ are obtained, the label of a new test sample $\bm{x_{0}}$ can be predicted by the following decision function
	\begin{eqnarray}\label{decision1}
		g(\bm{x_{0}})=\textmd{sgn}(\langle	\bm{w},\Phi(\bm{x_{0}})\rangle).
	\end{eqnarray}
 
	\begin{figure}[htbp!]
		\centering
		\includegraphics[width=0.5\textwidth]{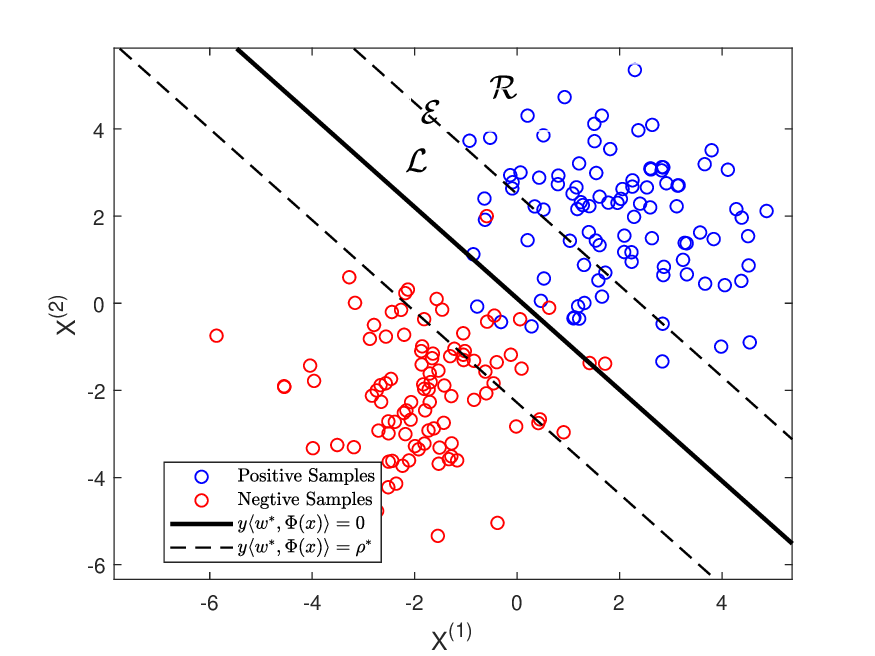}
		\caption{An illustration of $\nu$-SVM on a 2-D artificial data set. $X^{(1)}$ and $X^{(2)}$ represent two features of the samples, respectively. The black dotted lines are support hyperplanes of two classes. The solid black line represents the decision hyperplane.}
		\label{fig0}
	\end{figure}
	
	An illustration of $\nu$ -SVM on a 2D artificial data set is shown in Fig. \ref{fig0}. Two black dotted lines represent positive and negative hyperplanes (support hyperplanes), respectively. First, they separate the two classes of samples as correctly as possible, that is, $y_i \langle \bm{w}\cdot \Phi(\bm{x_{i}}) \rangle \geq \rho  -\xi_{i}$ and minimize $\frac{1}{l}\sum_{i=1}^{l}\xi_{i}$. Second, it is required that the margin between these two hyperplanes be as large as possible, that is, $\max \frac{2\rho}{\|\bm{w}\|}$. In fact, the parameter $\nu$ is a tunable trade-off between the two above points.
	
	For convenience in using the kernel technique and converting the optimization problem (\ref{001a}) to a standard form, we commonly derive its dual problem by introducing the Lagrange function and the KKT conditions. 
	
	Finally, the dual problem of $\nu$-SVM can be rewritten as
	\begin{eqnarray}\label{dualQPPv}
		\min_{\bm{\alpha} \in \mathcal{A}_{\nu}}~~F(\bm{\alpha}),
	\end{eqnarray}
	where $F(\bm{\alpha}) = \frac{1}{2}\bm{\alpha}^{T}\bm{Q}\bm{\alpha}$, $\bm{Q}=\mathrm{diag}(\bm{Y})\kappa(\bm{X},\bm{X})\mathrm{diag}(\bm{Y})$, and $\mathcal{A}_{\nu} = \{\bm{\alpha} ~|~ \bm{e^{T}}\bm{\alpha}\geq \nu, 0\leq\bm{\alpha}\leq\frac{1}{l}\}$.
	
    Specifically, the KKT conditions consists of the following formulations.
	\begin{equation}\label{KKT}
		\begin{aligned}
			&\bm{w^*}-\sum_{i=1}^{l}\alpha_{i}^*y_{i}\Phi(\bm{x_{i}})^T=0,~\sum_{i=1}^{l}\alpha_{i}^{*}-\nu-\gamma^*=0,~\frac{1}{l}-\alpha_{i}^*-\beta_{i}^*=0,~\gamma^*\geq 0,~\gamma_{i}^*\rho^*=0,\\
			&\alpha_{i}^*(y_{i}\bm{w^*}\cdot \Phi(\bm{x_{i}})-\rho^*+\xi_{i}^*)=0,~\beta_{i}^*\xi_{i}^*=0,~\bm{\alpha^*}\geq 0,~\beta^*\geq 0,~y_{i}\bm{w^*} \cdot \Phi(\bm{x_i})\geq \rho^*-\xi^*_{i},
		\end{aligned}
	\end{equation}
	where $\bm{\alpha^*}, \beta^*,\gamma^*, \bm{w^*}, \rho^* and~\xi_{i}^*$ are the optimal solutions.

	The optimal solution $\bm{w^*}$ of (\ref{001a}) can be obtained by $\bm{w^*}={\Phi{(\bm{X})}}^T\mathrm{diag}(\bm{Y})\bm{\alpha^*}$.
	
	When the optimal solution $\bm{\alpha^*}$ of the problem (\ref{dualQPPv}) is obtained, the decision function (\ref{decision1}) can be calculated by
	\begin{eqnarray}\label{decision2}
		g(\bm{x_{0}})=\textmd{sgn}(\kappa(\bm{x_{0}},\bm{X}) \mathrm{diag}(\bm{Y})\bm{\alpha^*}).
	\end{eqnarray}

    For notational convenience, the main symbols used in this paper are summarized in Table \ref{symbols and definitions}.
 \begin{table}[htbp!]
		\centering
		\caption{Description of Symbols  }\label{symbols and definitions}
		\resizebox{\linewidth}{!}
  {
		\begin{tabular}{cc}\hline
			Symbols & Description  \\
			\noalign{\smallskip} \hline \noalign{\smallskip}
		    $\bm{x_{i}}$
      &The $i^{\mathrm{th}}$ samples with $p$ attributes\\
		    $\bm{X} = [\bm{x_1}, \bm{x_2}, \cdots, \bm{x_l}]^{T}$
      & The matrix of training set that contains $l$ samples\\
		    $\langle \bm{w},\bm{c}\rangle=\sum_{i=1}w_{i}c_{i}$
      &Inner product of $\bm{w},\bm{c}$\\
		    $\|\bm{w}\|^{2}=\langle \bm{w},\bm{w}\rangle$
      &The $l_2$ norm of $\bm{w}$\\
		    $\Phi(\cdot)$
      &A nonlinear mapping that transforms samples to a high-dimensional feature space\\
		    $\kappa(\bm{x_i},\bm{x_j})=\langle \Phi(\bm{x_i}),\Phi(\bm{x_j})\rangle$
      &Kernel function in Hilbert space\\
		    $\bm{\alpha}$
      &Feasible solution in an optimization problem\\
		    $\bm{\alpha^{*}}$
      &Optimal solution corresponding to $\bm{\alpha}$\\
		    $\bm{e}$&An appropriate dimensional ones vector\\
		    $|\mathcal{S}|$
      &The cardinality of a given set $\mathcal{S}$\\
		    \bottomrule
		\end{tabular}}
          \begin{tablenotes}
          \footnotesize
           \item[1] In this paper, all vectors and matrices are represented with boldface letters.
      \end{tablenotes}
	\end{table}
 
	\subsection{The Sparsity of Dual Solution \texorpdfstring{$\bm{\alpha^*}$}.}    
	Here, three sample index sets are defined as follows.
	\begin{equation}\label{DefineERL}
		\begin{aligned}
			&\mathcal{E}=\{i~|~y_i \langle	\bm{w^*},\Phi(\bm{x_{i}})\rangle=\rho^{*}, i=1,2,\cdots,l \},\\
			&\mathcal{R}=\{i~|~y_i \langle	\bm{w^*},\Phi(\bm{x_{i}})\rangle>\rho^{*}, i=1,2,\cdots,l \},\\
			&\mathcal{L}=\{i~|~y_i \langle	\bm{w^*},\Phi(\bm{x_{i}})\rangle<\rho^{*}, i=1,2,\cdots,l \}.
		\end{aligned}
	\end{equation}
	
	The index sets $\mathcal{R}$, $\mathcal{E}$, and $\mathcal{L}$ corresponding to positive samples are shown in Fig. \ref{fig0}. The positive hyperplane divides the positive samples into three parts. The samples in $\mathcal{E}$ correspond to them exactly lying in the support hyperplane $y_i \langle	\bm{w^*},\Phi(\bm{x_{i}})\rangle=\rho^{*}$. The samples in $\mathcal{R}$ correspond to the index of these properly separated by the support hyperplane. The samples in $\mathcal{L}$ correspond to the index of these misclassified by the support hyperplane. Similar results can be obtained for negative samples.
	
	By analyzing the KKT conditions (\ref{KKT}), the following results can be easily drawn
	\begin{eqnarray}\label{L022}	
		i \in \mathcal{E} \Longrightarrow 0\leq \alpha_{i}^{*} \leq \frac{1}{l} ,
	\end{eqnarray}   
	\begin{eqnarray} \label{L021}
		i \in \mathcal{R} \Longrightarrow \alpha_{i}^{*}=0,
	\end{eqnarray}
	\begin{eqnarray}\label{L023}
		i \in \mathcal{L} \Longrightarrow \alpha_{i}^{*}=\frac{1}{l} .
	\end{eqnarray}
	
	The length of the solution vector $\bm{\alpha}$ of the dual problem (\ref{dualQPPv}) is consistent with the number of training samples. The above results further imply that we could determine the value of $\bm{\alpha^*}$ by observing the situations of samples separated by supported hyperplanes. Generally, there are a few samples exactly on the support hyperplane (corresponding to the situation (\ref{L022})). That is, there are only a small number of elements in $\bm{\alpha^*}$ in the range of ($0$, $1/l$). On the contrary, most of the samples are in situations (\ref{L021}) or (\ref{L023}), that is, most of the elements in $\bm{\alpha^*}$ value the constant $0$ or $1/l$. This is what we usually call the sparsity of SVMs. 

 Note that if $\alpha^{*}_{i}=0$, then it satisfies formula (8) or formula (9). Thus, we cannot identify whether the corresponding $i$ is in $\mathcal{E}$. A similar situation occurs when $\alpha^{*}_{i}=1/l$. Only if $0<\alpha^{*}_{i}<1/l$, can we determine the corresponding $i$ is in $\mathcal{E}$.
 
For convenience, we define the samples corresponding to $i \in \mathcal{E}$ as active samples and those instances corresponding to $i \in \mathcal{R}$ or $i \in \mathcal{L}$ as inactive samples.
	
	\subsection{Motivation}
	For the $\nu$-SVM, the training process corresponds to solving the dual problem (\ref{dualQPPv}), which is in the form of a standard QPP with computational complexity $O(l^3)$. The test process is to calculate the decision function (\ref{decision2}) that requires only simple multiplication of the matrix, and its calculation cost is negligible. Therefore, the computational cost of the entire model is determined mainly by the scale of the dual problem (\ref{dualQPPv}), that is, the sample size.
	
	In addition, a parameter $\nu$ is manually selected in the model. To determine its optimal value, the common approach is cross-validation with grid search. Therefore, for each value of $\nu$ taken, the dual problem (\ref{dualQPPv}) involving all samples must be solved repeatedly. This is a daunting computational challenge for large-scale data sets. 
	
	Fortunately, considering the solution sparsity indicated in (\ref{L021}) and (\ref{L023}), if we can precisely identify the inactive samples as much as possible, the corresponding elements of $\bm{\alpha^*}$ that are constant can be determined. Thus, it only needs to solve a smaller optimization problem, and it guarantees to achieve the same solution as the original model. It will greatly reduce computational overhead by embedding this approach in the parameter selection of the grid search. Furthermore, when solving QPP, a traditional approach is to use MATLAB's own quadratic programming solution function `quadprog', but we would like to propose a fast solution method for $\nu$-SVM to improve efficiency. This is the motivation of our work in this paper.
	
	\section{A Safe Screening Rule with Bi-level Optimization for \texorpdfstring{$\nu$}.-SVM} \label{SSR-nu-SVM}

	\subsection{The Basic Rule} \label{SecBasicRule}
	Based on the preliminaries, our most important issue is to detect samples that belong to $\mathcal{R}$ and $\mathcal{L}$. As can be seen in (\ref{DefineERL}), when given training data, the sets $\mathcal{R}$ and $\mathcal{L}$ depend only on $\bm{w^*}$ and $\rho^*$ that must be obtained by solving the primal problem (\ref{001a}). That seems like a paradox.
	
	Instead of solving the optimization problem (\ref{001a}), we managed to give a feasible region $\mathcal{W}$ that includes $\bm{w^*}$, that is, $\bm{w^*} \in \mathcal{W}$, and provide a feasible lower bound $\rho_{\mathtt{lower}}$ and upper bound $\rho_{\mathtt{upper}}$ of $\rho^*$, i.e. $\rho_{\mathtt{lower}}\leq \rho^{*} \leq  \rho_{\mathtt{upper}}$. Then, (\ref{L021}) and (\ref{L023}) can be transformed into 
	\begin{equation}\label{L026} 
		\begin{aligned}
			&\inf \limits_{\bm{w}\in \mathcal{W}} \{y_i \langle	\bm{w^*},\Phi(\bm{x_{i}})\rangle \}>\rho_{\mathtt{upper}}
			\Longrightarrow  \alpha_{i}^{*}=0, \\
			&\sup \limits_{\bm{w} \in \mathcal{W}} \{y_i \langle	\bm{w^*},\Phi(\bm{x_{i}})\rangle \}<\rho_{\mathtt{lower}}
			\Longrightarrow   \alpha_{i}^{*}=\frac{1}{l}.
		\end{aligned}
	\end{equation}
	This is the basic rule of our screening method.
	
	\subsection{Feasible Region \texorpdfstring{$\mathcal{W}$}.}\label{SubsectionFeasibleRegionW} 
	To find the feasible region $\mathcal{W}$ included $\bm{w^*}$, firstly, we introduce the following lemma.   
	\newtheorem{lemma}{Lemma}
	\begin{lemma} \label{lemmavi}
		(Variational Inequality) \cite{gulerFoundationsOptimization2010} Let $F(\bm{\alpha})$ be a differentiable function on the open set containing the convex set $\mathcal{A}$. When $\bm{\alpha^{*}}$ is the local minimum of $F(\bm{\alpha})$, the following inequality holds
		\begin{eqnarray}\label{L027}
			\langle\nabla F(\bm{\alpha^{*}}),\bm{\alpha}-\bm{\alpha^{*}}\rangle \geq 0.
		\end{eqnarray}
		Here, $\nabla F(\bm{\alpha^{*}})$ denotes the gradient of function $F$ at $\bm{\alpha^{*}}$.
	\end{lemma}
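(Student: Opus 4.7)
The plan is to convert the local optimality of $\bm{\alpha^{*}}$ into a first-order condition along an arbitrary feasible direction. First I would fix any $\bm{\alpha} \in \mathcal{A}$ and set the feasible direction $\bm{d} = \bm{\alpha} - \bm{\alpha^{*}}$. Convexity of $\mathcal{A}$ immediately gives that the entire segment $\{\bm{\alpha^{*}} + t\bm{d} : t \in [0,1]\}$ lies in $\mathcal{A}$, so one can perturb $\bm{\alpha^{*}}$ along $\bm{d}$ by any small positive amount without leaving the feasible set.

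Next I would exploit the local minimality of $\bm{\alpha^{*}}$ to pick $t_{0} > 0$ such that
\begin{equation*}
F(\bm{\alpha^{*}} + t\bm{d}) \geq F(\bm{\alpha^{*}}) \qquad \text{for all } t \in (0,\, t_{0}].
\end{equation*}
Rearranging into the difference-quotient form and passing to the limit $t \to 0^{+}$, the assumed differentiability of $F$ on an open set containing $\mathcal{A}$ lets me identify the limit of $\bigl(F(\bm{\alpha^{*}} + t\bm{d}) - F(\bm{\alpha^{*}})\bigr)/t$ with the directional derivative $\langle \nabla F(\bm{\alpha^{*}}), \bm{d}\rangle$. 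Since weak inequalities survive limits, I obtain $\langle \nabla F(\bm{\alpha^{*}}),\, \bm{\alpha} - \bm{\alpha^{*}}\rangle \geq 0$, and because $\bm{\alpha} \in \mathcal{A}$ was arbitrary, the variational inequality holds for every feasible $\bm{\alpha}$.

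This is a textbook first-order optimality argument and I do not anticipate a substantive obstacle. The only delicate point worth monitoring is the coexistence of two smallness constraints on $t$: one coming from the local neighborhood in which minimality holds, and one coming from the segment parameter range $[0,1]$. Taking $t \in \bigl(0,\, \min\{t_{0},\,1\}\bigr]$ resolves this trivially, so the argument is robust; the same proof adapts verbatim if the hypothesis is later strengthened to a global minimizer over $\mathcal{A}$, which is the regime in which the lemma will actually be applied to $F(\bm{\alpha}) = \tfrac{1}{2}\bm{\alpha}^{T}\bm{Q}\bm{\alpha}$ on the convex polytope $\mathcal{A}_{\nu}$.
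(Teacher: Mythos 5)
Your argument is correct: the paper itself gives no proof of this lemma, deferring entirely to the citation of G\"uler's textbook, and your segment-plus-difference-quotient derivation is precisely the standard first-order optimality argument that reference uses. The one delicate point you flag --- reconciling the local-minimality radius with the segment parameter range via $t \in \bigl(0, \min\{t_{0}, 1\}\bigr]$ --- is handled correctly, so there is nothing to repair.
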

	
	In addition, without loss of generality, assume that given a parameter value $\nu_0$, the corresponding optimal solution $\bm{\alpha^*}$ for the dual problem (\ref{dualQPPv}) is achieved. The task is to find a feasible region $\mathcal{W}$ included $\bm{w^*}$ under a new parameter value $\nu_1$, where $0 < \nu_{0} < \nu_{1} <1$.
	
	For convenience of notation, the optimal solutions of (\ref{dualQPPv}) under the parameter values $\nu_0$ and $\nu_1$ are denoted by $\bm{\alpha^{0}}$ and $\bm{\alpha^{1}}$, respectively. That is
	\begin{eqnarray*}
		\bm{\alpha^{0}}= \arg \min_{\bm{\alpha} \in \mathcal{A}_{\nu_0}}  F(\bm{\alpha}),~~ 
		\bm{\alpha^{1}}= \arg \min_{\bm{\alpha} \in \mathcal{A}_{\nu_1}}  F(\bm{\alpha}).	
	\end{eqnarray*}
	Similarly, $\bm{w_{0}}$ and $\bm{w_{1}}$ represent the optimal solution of the primal problem (\ref{001a}) under the parameters $\nu_0$ and $\nu_1$, respectively. Thus, we obtain the following theorem.
	
	\newtheorem{theorem}{Theorem}
	\begin{theorem}\label{theorem01}
		For problem (\ref{001a}), given $0 < \nu_{0}<\nu_{1} <1$, the following holds 
		\begin{eqnarray}\label{L029}
			\|\bm{w_{1}}-\bm{c}\|^{2}\leq r.
		\end{eqnarray}
		Here, $\bm{c}=\bm{w_{0}}+\frac{1}{2}\bm{Z^{T}}\bm{\delta}$ where $\bm{Z}=\mathrm{diag}(\bm{Y})\Phi(\bm{X})$, $\forall \bm{\delta} \in \Delta$, and $\Delta = \{\bm{\delta} | \bm{e^{T}}(\bm{\alpha^{0}}+\bm{\delta})\geq \nu_1, 0\leq\bm{\alpha^{0}} +\bm{\delta} \leq \frac{1}{l}\}$. $r=\bm{c^{T}}\bm{c}-\bm{w_{0}^{T}}\bm{w_{0}}$.
	\end{theorem}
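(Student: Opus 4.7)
The plan is to translate the variational inequality characterization of the two dual minimizers into geometric inequalities for the primal vectors $\bm{w_0}$ and $\bm{w_1}$, using the KKT identification $\bm{w}=\bm{Z}^{T}\bm{\alpha}$, and then to recognize the spherical bound by completing the square on two half-space inequalities.

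As a preliminary step I would note that $\nu_{1}>\nu_{0}$ yields the nesting $\mathcal{A}_{\nu_{1}}\subseteq\mathcal{A}_{\nu_{0}}$, so that $\bm{\alpha^{1}}$ is itself feasible in $\mathcal{A}_{\nu_{0}}$ and, by construction of $\Delta$, every $\bm{\alpha^{0}}+\bm{\delta}$ with $\bm{\delta}\in\Delta$ is feasible in $\mathcal{A}_{\nu_{1}}$. Since $F(\bm{\alpha})=\tfrac{1}{2}\bm{\alpha}^{T}\bm{Q}\bm{\alpha}$ with $\bm{Q}=\bm{Z}\bm{Z}^{T}$, the gradient at either minimizer takes the clean form $\nabla F(\bm{\alpha^{k}})=\bm{Z}\bm{Z}^{T}\bm{\alpha^{k}}=\bm{Z}\bm{w_{k}}$ for $k\in\{0,1\}$, which is what turns dual inner products into primal ones.

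I would then invoke Lemma \ref{lemmavi} twice, at each minimizer, with the test points made available by the nesting. At $\bm{\alpha^{0}}$ with test point $\bm{\alpha^{1}}$, the inequality $\langle\bm{Z}\bm{w_{0}},\bm{\alpha^{1}}-\bm{\alpha^{0}}\rangle\geq 0$ collapses to the crucial monotonicity bound $\bm{w_{0}}^{T}(\bm{w_{1}}-\bm{w_{0}})\geq 0$. At $\bm{\alpha^{1}}$ with test point $\bm{\alpha^{0}}+\bm{\delta}$ for an arbitrary $\bm{\delta}\in\Delta$, the same lemma gives $\bm{w_{1}}^{T}(\bm{w_{0}}+\bm{Z}^{T}\bm{\delta}-\bm{w_{1}})\geq 0$, which written using $\bm{c}=\bm{w_{0}}+\tfrac{1}{2}\bm{Z}^{T}\bm{\delta}$ is exactly $\bm{w_{1}}^{T}(2\bm{c}-\bm{w_{1}})\geq\bm{w_{1}}^{T}\bm{w_{0}}$.

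The last step is a clean combination: applying the first inequality to lower-bound the right-hand side by $\|\bm{w_{0}}\|^{2}$ yields $2\bm{w_{1}}^{T}\bm{c}\geq\|\bm{w_{0}}\|^{2}+\|\bm{w_{1}}\|^{2}$, and adding $\bm{c}^{T}\bm{c}$ to both sides and rearranging gives precisely $\|\bm{w_{1}}-\bm{c}\|^{2}\leq\bm{c}^{T}\bm{c}-\bm{w_{0}}^{T}\bm{w_{0}}=r$. The main obstacle I anticipate is not any individual algebraic estimate but the conceptual choice of test points: one inequality must be applied at the known minimizer $\bm{\alpha^{0}}$ using $\bm{\alpha^{1}}$ (to transfer information from the old problem to the new one), while the other must be applied at $\bm{\alpha^{1}}$ with a \emph{parameterized} test point $\bm{\alpha^{0}}+\bm{\delta}$, which is precisely what produces a whole family of enclosing balls indexed by $\bm{\delta}\in\Delta$ and sets up the bi-level structure to be optimized later.
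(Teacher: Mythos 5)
Your proposal is correct and follows essentially the same route as the paper: both apply the variational inequality at $\bm{\alpha^{0}}$ with test point $\bm{\alpha^{1}}$ and at $\bm{\alpha^{1}}$ with the parameterized test point $\bm{\alpha^{0}}+\bm{\delta}$, combine the two, and complete the square to obtain the ball centered at $\bm{c}$; your version merely carries out the algebra in the primal variables via $\nabla F(\bm{\alpha})=\bm{Z}\bm{w}$ instead of in $\bm{\alpha}$ and $\bm{Q}$. Your explicit justification of the test points' feasibility through the nesting $\mathcal{A}_{\nu_{1}}\subseteq\mathcal{A}_{\nu_{0}}$ is a detail the paper leaves implicit, but it does not change the argument.
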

	
	\begin{proof}   
		According to lemma \ref{lemmavi}, we have
		\begin{eqnarray}\label{028}
				\langle\nabla F(\bm{\alpha^{1}}),\bm{\alpha^{0}}+\bm{\delta}-\bm{\alpha^{1}}\rangle \geq 0, ~~~~
    \langle\nabla F(\bm{\alpha^{0}}),\bm{\alpha^{1}}-\bm{\alpha^{0}}\rangle \geq 0.
		\end{eqnarray}
		Where $\bm{\delta}$ can be any vector satisfying $\bm{\alpha^{0}}+\bm{\delta} \in \mathcal{A}_{\nu_1}$, namely    
		\begin{eqnarray}\label{delta1}
				\bm{e^{T}}(\bm{\alpha^{0}}+\bm{\delta})\geq \nu_1, ~~~~0\leq\bm{\alpha^{0}} +\bm{\delta} \leq \frac{1}{l}.
		\end{eqnarray}
		
		Combining the inequality (\ref{028}) with $\nabla F(\bm{\alpha})=\bm{Q}\bm{\alpha}$, we can get the folowing inequation		
		\begin{eqnarray*}
			(\bm{\alpha^{1}})^{T}\bm{Q}\bm{\alpha^{1}}-(\bm{\alpha^{1}})^{T}\bm{Q}(2\bm{\alpha^{0}}+\bm{\delta}) + \frac{1}{4}(2\bm{\alpha^{0}}+\bm{\delta})^{T}\bm{Q}(2\bm{\alpha^{0}}+\bm{\delta}) \\
			\leq -(\bm{\alpha^{0}})^{T}\bm{Q}\bm{\alpha^{0}} + \frac{1}{4}(2\bm{\alpha^{0}}+\bm{\delta})^{T}\bm{Q}(2\bm{\alpha^{0}}+\bm{\delta}).
		\end{eqnarray*}
		
		Let $\bm{Z}=\mathrm{diag}(\bm{Y})\Phi(\bm{X})$. We have $\bm{Q} = \bm{Z}\bm{Z^{T}}$. Denoting $\bm{c} = \frac{1}{2}\bm{Z^{T}}(2\bm{\alpha^{0}}+\bm{\delta})$, we can obtain
		$$
		\|\bm{Z^{T}}\bm{\alpha}^1 - \bm{c}\|^2  \leq \bm{c^{T}}\bm{c}-(\bm{\alpha^{0}})^{T}\bm{Q}\bm{\alpha^{0}}.
		$$
		
		According to $\bm{w_{1}}=\bm{Z^{T}}\bm{\alpha}^1$ and $\bm{w_{0}^{T}}\bm{w_{0}}=(\bm{\alpha^{0})^{T}}\bm{Q}\bm{\alpha^{0}}$, $c$ can also be written as $\bm{w_{0}}+\frac{1}{2}\bm{Z^{T}}\bm{\delta}$.
		
		This completes the proof.
	\end{proof} 
	
	Based on Theorem \ref{theorem01}, we can define a spherical region $\mathcal{W} = \{\bm{w}~|~\|\bm{w}-\bm{c}\|^{2}\leq r\}$ that must contain $\bm{w_{1}}$. The geometric implication of Theorem \ref{theorem01} can be shown in Fig. \ref{Region}. $\bm{w_{1}}$ is located in a sphere $\mathcal{W}$ with center $\bm{c}$ and radius $r ^{\frac{1}{2}}$. 
	
	\newtheorem{corollary}{Corollary}
	\begin{corollary}\label{corollary1} For optimization problem (\ref{001a}), given parameter values $\nu_{0}$ and $\nu_{1}$, the following holds 
		\begin{eqnarray}
			\inf \limits_{\bm{w}\in \mathcal{W}} \{y_i \langle	\bm{w},\Phi(\bm{x_{i}})\rangle \} \geq \bm{Z_{i}}\bm{c} - \|\bm{Z_{i}}\|\cdot| r|^{\frac{1}{2}}\label{InfBound},\\
			\sup \limits_{\bm{w}\in \mathcal{W}} \{y_i \langle	\bm{w},\Phi(\bm{x_{i}})\rangle \} \leq \bm{Z_{i}}\bm{c} + \|\bm{Z_{i}}\|\cdot| r|^{\frac{1}{2}}\label{SupBound},
		\end{eqnarray}
		where $\bm{Z}=\mathrm{diag}(\bm{Y})\Phi(\bm{X})$, $\mathcal{W} = \{\bm{w}~|~\|\bm{w}-\bm{c}\|^{2}\leq r\}$, $\bm{c}=\bm{w_{0}}+\frac{1}{2}\bm{Z^{T}}\bm{\delta}$, $\forall \bm{\delta} \in \Delta$, and $r=\bm{c^{T}}\bm{c}-\bm{w_{0}^{T}}\bm{w_{0}}$.
	\end{corollary}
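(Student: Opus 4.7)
The plan is to reduce the corollary to a single application of the Cauchy--Schwarz inequality on the sphere $\mathcal{W}$ guaranteed by Theorem~\ref{theorem01}. First I would rewrite the quantity of interest in matrix-vector form: since $\bm{Z}=\mathrm{diag}(\bm{Y})\Phi(\bm{X})$, its $i$-th row $\bm{Z_i}$ equals $y_i\Phi(\bm{x_i})^{T}$, and therefore
\begin{equation*}
y_i\langle\bm{w},\Phi(\bm{x_i})\rangle \;=\; \bm{Z_i}\bm{w}.
\end{equation*}
This turns both the infimum and the supremum into optimizations of a \emph{linear} functional of $\bm{w}$ over the Euclidean ball $\mathcal{W}=\{\bm{w}:\|\bm{w}-\bm{c}\|^{2}\leq r\}$, which is the canonical setting for a Cauchy--Schwarz bound.

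Next I would centre the expression at $\bm{c}$ by writing
\begin{equation*}
\bm{Z_i}\bm{w} \;=\; \bm{Z_i}\bm{c} \;+\; \bm{Z_i}(\bm{w}-\bm{c}).
\end{equation*}
The first term is constant over $\mathcal{W}$, so only the second term needs to be estimated. Applying the Cauchy--Schwarz inequality gives $|\bm{Z_i}(\bm{w}-\bm{c})|\leq \|\bm{Z_i}\|\cdot\|\bm{w}-\bm{c}\|$, and the defining constraint of $\mathcal{W}$ yields $\|\bm{w}-\bm{c}\|\leq |r|^{1/2}$. Combining these two bounds immediately produces
\begin{equation*}
\bm{Z_i}\bm{c}-\|\bm{Z_i}\|\cdot |r|^{1/2} \;\leq\; \bm{Z_i}\bm{w} \;\leq\; \bm{Z_i}\bm{c}+\|\bm{Z_i}\|\cdot |r|^{1/2}
\end{equation*}
for every $\bm{w}\in\mathcal{W}$, and taking the infimum and supremum over $\mathcal{W}$ establishes (\ref{InfBound}) and (\ref{SupBound}).

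There is essentially no hard step here; the only subtlety worth flagging is the absolute-value notation $|r|$. Since Theorem~\ref{theorem01} guarantees $\bm{w_{1}}\in\mathcal{W}$, the ball is non-empty and hence $r\geq 0$, so $|r|^{1/2}=r^{1/2}$; the author's use of $|r|$ is simply a safeguard against numerical round-off when $r$ is computed explicitly via $\bm{c}^{T}\bm{c}-\bm{w_{0}^{T}}\bm{w_{0}}$. Under this trivial remark, the proof is complete, and both extremal bounds are in fact attained by choosing $\bm{w}=\bm{c}\mp |r|^{1/2}\bm{Z_i}^{T}/\|\bm{Z_i}\|$, which confirms that the estimates in the corollary are the \emph{tightest} linear bounds derivable from the spherical localization in Theorem~\ref{theorem01}.
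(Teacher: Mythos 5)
Your proposal is correct and follows essentially the same route as the paper's own proof: writing $y_i\langle\bm{w},\Phi(\bm{x_i})\rangle=\bm{Z_i}\bm{c}+\bm{Z_i}(\bm{w}-\bm{c})$, applying Cauchy--Schwarz, and then invoking the definition of $\mathcal{W}$. Your added remarks on why $r\geq 0$ and on the attainability of the bounds are correct but not needed for the statement itself.
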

 
	\begin{proof} 
 For all $\bm{w} \in \mathcal{W}$, we have
		\begin{equation*}
				y_i \langle \bm{w},\Phi(\bm{x_{i}})\rangle 
				= \bm{Z_{i}}(\bm{w} - \bm{c}) + \bm{Z_{i}}c\\	
				\geq \bm{Z_{i}}\bm{c} -\|\bm{Z_{i}}\|\cdot \|\bm{w}-\bm{c}\| \geq \bm{Z_{i}}\bm{c} -\|\bm{Z_{i}}\|\cdot| r|^{\frac{1}{2}}.
		\end{equation*}
		The first inequality comes from the Cauchy-Schwarz inequality, and the second inequality is based on the definition of $\mathcal{W}$. Then, the inequality (\ref{InfBound}) can be easily proved.
		
		Similarly, the inequality (\ref{SupBound}) can be proved.
	\end{proof}

	(\ref{SupBound}) and (\ref{InfBound}) give the upper and lower bounds on the left side of the inequalities (\ref{L026}) that constitute the basic rule of our screening method discussed in Section 3.1.

	\begin{figure}[htbp!]
		\centering
		\subfloat[]{\label{Region1}
			\includegraphics[width=0.35\textwidth]{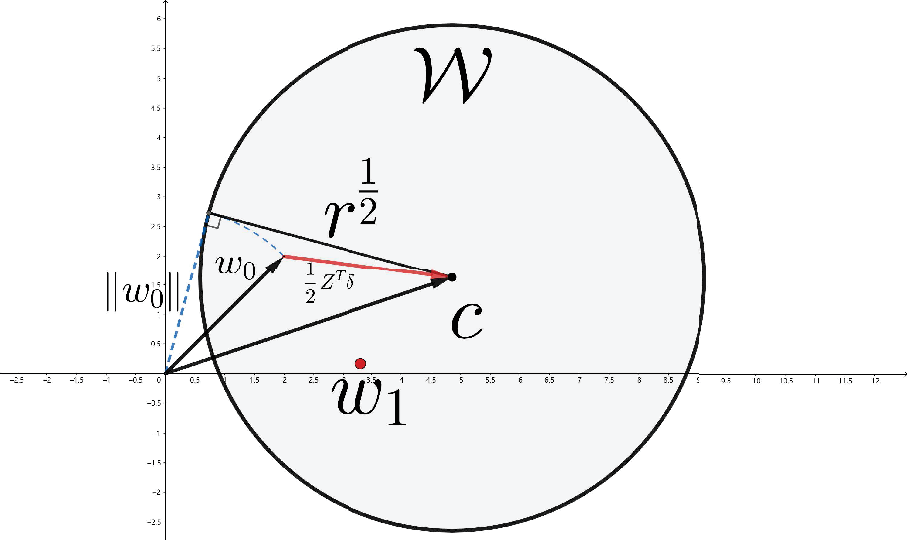}}
		\subfloat[]{\label{Region2}
			\includegraphics[width=0.35\textwidth]{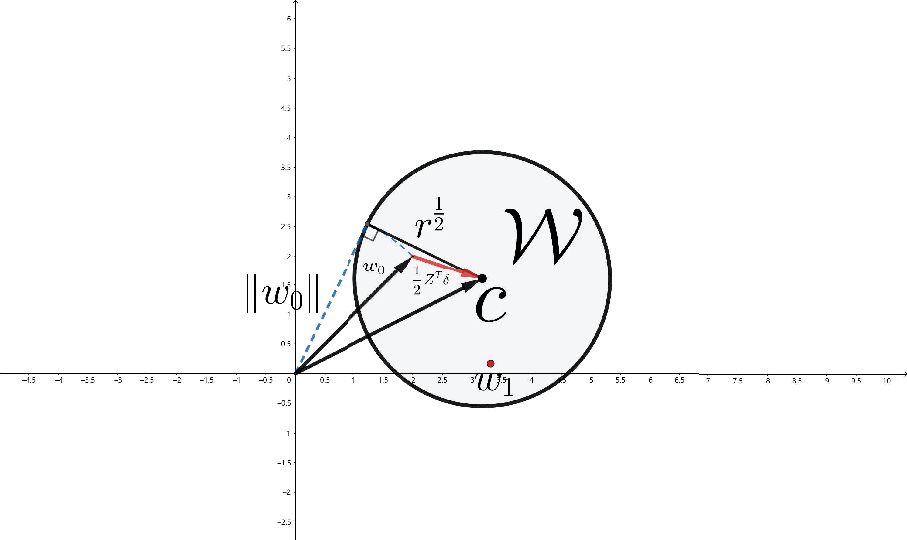}}
		\caption{Illustrations of $\mathcal{W}$ when given $\bm{w_{0}}$. $\mathcal{W}$ is a spherical region with center $c$ and radius $r ^{\frac{1}{2}}$. The difference between (a) and (b) is in the size of the sphere, which is only due to the different selection of $\bm{\delta}$.}
		\label{Region}
	\end{figure}
 
	In addition, as illustrated in Fig. \ref{Region}, the size of the sphere varies with the selection of $\bm{\delta}$. Specifically, when the training data and $\bm{\alpha^{0}}$ (or $\bm{w_{0}}$) are given, the center $\bm{c}$ is fixed and the radius $r ^{\frac{1}{2}}$ will take different values depending on the choice of $\bm{\delta}$. To estimate $\bm{w_{1}}$ as accurately as possible, the smaller $r$ is, the better. Regarding $r$ as a function of $\bm{\delta}$, that is, $r(\bm{\delta}) = \frac{1}{4}\bm{\delta}^{T}\bm{Q}\bm{\delta} + \bm{\alpha^{0}}\bm{Q}\bm{\delta}$, the optimal $\bm{\delta}$ can be obtained by
	\begin{equation}\label{deltaQPP}
		\begin{aligned}
			\bm{\delta}^*&=\arg \min_{\bm{\delta}\in \Delta}~r(\bm{\delta}),
		\end{aligned}
	\end{equation}
	where $\Delta = \{\bm{\delta} | \bm{e^{T}}(\bm{\alpha^{0}}+\bm{\delta})\geq \nu_1, 0\leq\bm{\alpha^{0}} +\bm{\delta} \leq \frac{1}{l}\}$. The formulation above also involves a QPP, and its size is determined by the scale of training samples. 
	
	Although taking $\bm{\delta}^*$ guarantees the best performance to estimate $\bm{w_{1}}$, it brings additional computational overhead to solve the QPP (\ref{deltaQPP}). To compensate for the screening proportion and computational cost, we establish a bi-level optimization structure by introducing a variable $\bm{\delta}$, which has not been mentioned in related previous papers. An efficient algorithm to calculate $\bm{\delta}$ is described in Section 3.5.
		
	\subsection{Upper and Lower Bounds of \texorpdfstring{$\rho^*$}.}
	To estimate the upper bound $\rho_{\mathtt{upper}}$ and the lower bound $\rho_{\mathtt{lower}}$ in inequalities (\ref{L026}) for our basic rule, the $\nu$-property in $\nu$-SVM is introduced.
	
	\begin{lemma}\label{lemmaNu} ($\nu$-property) \cite{scholkopfNewSupportVector2000} For $\nu$-SVM, support vector set $\mathcal{S}$ and margin error sample set $\mathcal{M}$ are define as
		\begin{eqnarray*}		
				\mathcal{S}=\{\bm{x_{i}}~|~\alpha_{i}^{*}\neq 0, i=1,2,\cdots l\},~~~~
				\mathcal{M}=\{\bm{x_{i}}~|~y_i \langle	\bm{w^*},\Phi(\bm{x_{i}})\rangle<\rho^{*}, i=1,2,\cdots l\}, 
		\end{eqnarray*}    	    	
		$s = |\mathcal{S}|$ and $m = |\mathcal{M}|$. Then, the following holds 
		\begin{eqnarray}
			\frac{m}{l}\leq \nu \leq \frac{s}{l}.
		\end{eqnarray}
	\end{lemma}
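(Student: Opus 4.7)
The plan is to derive the $\nu$-property directly from the KKT conditions (\ref{KKT}) of the primal $\nu$-SVM together with primal feasibility. Two blocks of equalities do almost all of the work: the stationarity relations $\sum_{i=1}^{l}\alpha_{i}^{*}=\nu+\gamma^*$ and $\alpha_{i}^{*}+\beta_{i}^{*}=\frac{1}{l}$, combined with non-negativity of every multiplier, pin down the range of each $\alpha_{i}^{*}$, while the complementary slackness pairs $\gamma^*\rho^*=0$, $\alpha_{i}^{*}(y_{i}\langle\bm{w^*},\Phi(\bm{x_{i}})\rangle-\rho^*+\xi_{i}^{*})=0$ and $\beta_{i}^{*}\xi_{i}^{*}=0$ let me connect the dual variables to the two index sets $\mathcal{S}$ and $\mathcal{M}$.

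For the upper bound $\nu\leq s/l$, I would first read off $0\leq\alpha_{i}^{*}\leq\frac{1}{l}$ from $\alpha_{i}^{*}+\beta_{i}^{*}=\frac{1}{l}$ with both multipliers non-negative, and $\sum_{i}\alpha_{i}^{*}\geq\nu$ from $\gamma^*\geq 0$. Because only indices in $\mathcal{S}$ contribute non-zero terms to the sum, this chains together into
\begin{equation*}
\nu \;\leq\; \sum_{i=1}^{l}\alpha_{i}^{*} \;=\; \sum_{i\in\mathcal{S}}\alpha_{i}^{*} \;\leq\; |\mathcal{S}|\cdot\frac{1}{l} \;=\; \frac{s}{l}.
\end{equation*}

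For the lower bound $m/l\leq\nu$, the key is to show the inclusion $\mathcal{M}\subseteq\{i:\alpha_{i}^{*}=\frac{1}{l}\}$. If $i\in\mathcal{M}$, primal feasibility $y_{i}\langle\bm{w^*},\Phi(\bm{x_{i}})\rangle\geq\rho^*-\xi_{i}^{*}$ combined with the defining strict inequality $y_{i}\langle\bm{w^*},\Phi(\bm{x_{i}})\rangle<\rho^*$ forces $\xi_{i}^{*}>0$; then $\beta_{i}^{*}\xi_{i}^{*}=0$ yields $\beta_{i}^{*}=0$, and $\alpha_{i}^{*}=\frac{1}{l}$ follows. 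Summing over $\mathcal{M}$ gives $\sum_{i}\alpha_{i}^{*}\geq m/l$, which combined with $\sum_{i}\alpha_{i}^{*}=\nu+\gamma^*$ produces $\nu+\gamma^*\geq m/l$. To close the gap to $\nu\geq m/l$ I would invoke $\gamma^*\rho^*=0$: under the non-degenerate regime $\rho^*>0$ this forces $\gamma^*=0$ and turns the inequality into the desired bound.

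The main obstacle is precisely this last step, the degenerate case $\rho^*=0$. When the optimal margin collapses, the notion of margin error in $\mathcal{M}$ still makes sense but the equality $\sum_{i}\alpha_{i}^{*}=\nu$ can fail, leaving only $\nu+\gamma^*\geq m/l$. The cleanest remedy is to argue, by a perturbation of $\nu$ or by directly inspecting the primal objective $\frac{1}{2}\|\bm{w}\|^{2}-\nu\rho+\frac{1}{l}\sum_{i}\xi_{i}$, that for the parameter range $0<\nu<1$ used throughout the paper the optimal $\rho^*$ must be strictly positive (otherwise the $-\nu\rho$ term shows $\rho$ could be increased slightly to reduce the objective); then $\gamma^*=0$ and the lower bound follows cleanly.
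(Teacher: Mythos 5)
The paper itself offers no proof of this lemma: it is imported verbatim from Sch\"olkopf et al.\ with a citation, so you are supplying an argument where the authors give none. Your KKT-based derivation is the standard one and both halves of the main chain are sound: the upper bound $\nu\leq\sum_i\alpha_i^*=\sum_{i\in\mathcal{S}}\alpha_i^*\leq s/l$ follows unconditionally from $\gamma^*\geq 0$ and $\alpha_i^*\leq 1/l$, and the inclusion $\mathcal{M}\subseteq\{i:\alpha_i^*=1/l\}$ via $\xi_i^*>0\Rightarrow\beta_i^*=0\Rightarrow\alpha_i^*=1/l$ is exactly right.

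The genuine gap is in your patch for the degenerate case. Your claim that $\rho^*>0$ must hold for every $0<\nu<1$ because ``$\rho$ could be increased slightly to reduce the objective'' is false: raising $\rho$ from $0$ to $\epsilon$ while keeping $\bm{w}^*$ fixed forces $\xi_i$ to increase by up to $\epsilon$ for every sample with $y_i\langle\bm{w^*},\Phi(\bm{x_i})\rangle<\epsilon$, so the objective changes by at least $-\nu\epsilon+\tfrac{1}{l}\,|\{i:y_i\langle\bm{w^*},\Phi(\bm{x_i})\rangle<\epsilon\}|\,\epsilon$, which is nonnegative whenever at least a fraction $\nu$ of the samples sit at or below the old margin. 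In the extreme case of heavily overlapping classes the optimum is $\bm{w^*}=0$, $\rho^*=0$, and no perturbation of $\rho$ helps, so your argument cannot rule out $\gamma^*>0$ and the lower bound $m/l\leq\nu$ is left unproved. This is not a defect you can argue away: the original result in Sch\"olkopf et al.\ (Proposition~5) is explicitly stated under the hypothesis that the solution satisfies $\rho>0$, a hypothesis the present paper silently drops. The honest fix is to state that assumption (or verify it for the data at hand), not to claim it holds automatically for all $\nu\in(0,1)$.
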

	
	The lemma \ref{lemmaNu} implies that $\nu$ is an upper bound on the fraction of margin errors and is a lower bound on the fraction of support vectors.
	
	Based on the $\nu$-property, we can get the following theorem to estimate $\rho_{\mathtt{upper}}$ and $\rho_{\mathtt{lower}}$.
	\begin{theorem}\label{rhoBound}
		Define $d(i')=y_{i'} \langle \bm{w^*},\Phi(\bm{x_{i'}})\rangle$ where $i'$ the index of training samples, which is in descending order, i.e. $d(1) > d(2)> \cdots > d(l)$. Then, the following holds
		\begin{equation}
			d(\lceil i^* \rceil)\leq\rho^{*}\leq d(\lfloor i^* \rfloor),	
		\end{equation}
		where $i^* = l-\nu l$ and $\lceil i^* \rceil$ and $\lfloor i^* \rfloor$ denote the ceil and floor operations of $i^*$, respectively.
	\end{theorem}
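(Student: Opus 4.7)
The plan is to turn the $\nu$-property (Lemma \ref{lemmaNu}), which is a fractional statement about sample counts, into an ordinal statement locating $\rho^*$ in the sorted list $d(1) > d(2) > \cdots > d(l)$. The KKT implications (\ref{L021})--(\ref{L023}) are the bridge: every $i$ with $d(i) < \rho^*$ (i.e.\ $i \in \mathcal{L}$) has $\alpha_i^* = 1/l > 0$ and is therefore a support vector, while every $i$ with $d(i) > \rho^*$ (i.e.\ $i \in \mathcal{R}$) has $\alpha_i^* = 0$ and is not. Combined with the definitions of $\mathcal{M}$ and $\mathcal{S}$ in Lemma \ref{lemmaNu}, this yields the count chain $m = |\{i : d(i) < \rho^*\}|$ and $m \le s \le |\{i : d(i) \le \rho^*\}|$, which is really all the structural information the proof uses.

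First I would plug this chain into $m/l \le \nu \le s/l$ and rewrite everything in terms of ``how many $d$-values sit above $\rho^*$'': the inequality $m \le \nu l$ gives $|\{i : d(i) > \rho^*\}| \le l - \nu l = i^*$, and the inequality $\nu l \le s$ gives $|\{i : d(i) \ge \rho^*\}| \ge i^*$. These are the two facts I need; everything after is combinatorics on the sorted list.

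Second, I would translate each count inequality into a position statement. Because $d(1) > d(2) > \cdots > d(l)$ is strictly descending, $|\{i : d(i) \ge \rho^*\}| \ge i^*$ means the $\lceil i^* \rceil$-th entry still satisfies $d(\lceil i^* \rceil) \ge \rho^*$, and so \emph{a fortiori} $d(\lfloor i^* \rfloor) \ge \rho^*$, which is the upper bound on $\rho^*$. Symmetrically, $|\{i : d(i) > \rho^*\}| \le i^*$ forces $d(\lfloor i^* \rfloor + 1) \le \rho^*$, which is exactly $d(\lceil i^* \rceil) \le \rho^*$ whenever $i^* \notin \mathbb{Z}$, delivering the lower bound on $\rho^*$. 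Chaining the two gives $d(\lceil i^* \rceil) \le \rho^* \le d(\lfloor i^* \rfloor)$.

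The main obstacle I expect is the edge case $i^* = l - \nu l \in \mathbb{Z}$, where $\lceil i^* \rceil$ and $\lfloor i^* \rfloor$ coincide and the claim degenerates to the equality $\rho^* = d(i^*)$. Here the integer-valued counts $|\{i : d(i) > \rho^*\}|$ and $|\{i : d(i) \ge \rho^*\}|$ must both equal $i^*$, and under the strict-descent assumption on the $d(i)$ this pins $\rho^*$ exactly at $d(i^*)$. Once this boundary bookkeeping with ceilings and floors is carried out, no further analytic ingredient beyond (\ref{KKT}), (\ref{L021})--(\ref{L023}), and the $\nu$-property is needed.
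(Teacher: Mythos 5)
Your proposal is correct and follows essentially the same route as the paper: identify $\mathcal{M}=\mathcal{L}$ and $\mathcal{R}\subseteq\mathcal{S}^{c}$ from the KKT implications, convert the $\nu$-property into the two count bounds $|\{i: d(i)>\rho^{*}\}|\le l-\nu l$ and $|\{i: d(i)\ge\rho^{*}\}|\ge l-\nu l$, and read off positions in the strictly descending list. The only slip is that you attribute these two counts to the wrong halves of $m\le \nu l\le s$ (the first follows from $\nu l\le s$ via $s\le|\{i:d(i)\le\rho^{*}\}|$, the second from $m\le\nu l$ via $m=|\{i:d(i)<\rho^{*}\}|$); otherwise your bookkeeping, including the explicit flagging of the integer case $i^{*}\in\mathbb{Z}$, is if anything more careful than the paper's, which asserts the strict inequality $|\mathcal{R}|<l-\nu l$ without justification.
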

	\begin{proof}
		For any samples $\bm{x_{i'}} \notin \mathcal{S}$, we have $i' \in \mathcal{R}$. And because $s\geq \nu l$ from Lemma \ref{lemmaNu}, we get $|\mathcal{R}|<l-\nu l$. According to the definition of $d(i')$, we have $\forall i' \notin \mathcal{R}$, $d(i') \leq \rho^*$. Thus, we achieve $d(\lceil l-\nu l \rceil)\leq\rho^{*}$.
		
		For any sample $\bm{x_{i'}} \notin \mathcal{M}$, we have $i' \in \mathcal{E} \cup \mathcal{L}$. And because $m \leq \nu l$ from Lemma \ref{lemmaNu}, we get $|\mathcal{E} \cup \mathcal{L}|>l-\nu l$. According to the definition of $d(i')$, we have $\forall i' \in \mathcal{E} \cup \mathcal{L} \mathcal{R}$, $d(i') \geq \rho^*$. Thus, we achieve $\rho^{*}\leq d(\lfloor l-\nu l \rfloor)$.
		
		This completes the proof.
	\end{proof}
	
	Combining Corollary \ref{corollary1} and Theorem \ref{rhoBound}, we can easily find the upper and lower bounds of $\rho^*$ by the following Corollary \ref{corollary2}.
	
	\begin{corollary}\label{corollary2} For optimization problem (\ref{001a}), given parameter values $\nu_{0}$ and $\nu_{1}$, $\rho_{\mathtt{upper}}$ and $\rho_{\mathtt{lower}}$ are defined as 
		\begin{equation}\label{rhoUL}
				\rho_{\mathtt{upper}}=\bm{Z_{\lfloor i^*\rfloor}}\bm{c}+|r|^{\frac{1}{2}}\|\bm{Z_{\lfloor i^*\rfloor}}\|, 
			~~~~\rho_{\mathtt{lower}}=\bm{Z_{\lfloor i^*\rfloor}}\bm{c}-| r|^{\frac{1}{2}}\|\bm{Z_{\lfloor i^*\rfloor}}\|,
		\end{equation}
		where $\bm{Z}=\mathrm{diag}(\bm{Y})\Phi(\bm{X})$, $\bm{c}=\bm{w_{0}}+\frac{1}{2}\bm{Z^{T}}\bm{\delta}$, $\forall \bm{\delta} \in \Delta$, $r=\bm{c^{T}}\bm{c}-\bm{w_{0}^{T}}\bm{w_{0}}$, $i^* = l-\nu l$. The index of samples is sorted in descending order by the values of $y_{i} \langle	\bm{w^*},\Phi(\bm{x_{i}})\rangle$. Then, the following holds
		\begin{equation*}
			\rho_{\mathtt{lower}}\leq \rho^{*} \leq  \rho_{\mathtt{upper}}.
		\end{equation*}
	\end{corollary}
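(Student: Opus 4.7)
The plan is to chain three results already in the excerpt: the containment $\bm{w_1}\in\mathcal{W}$ from Theorem \ref{theorem01}, the pointwise sup/inf bounds for $y_i\langle\bm{w},\Phi(\bm{x_i})\rangle$ over $\bm{w}\in\mathcal{W}$ from Corollary \ref{corollary1}, and the order-statistic sandwich $d(\lceil i^*\rceil)\le\rho^*\le d(\lfloor i^*\rfloor)$ from Theorem \ref{rhoBound}. Every estimate I need is already proved, so the corollary should follow by a pure inequality chase rather than by introducing any new technique.

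Concretely, I would first invoke Theorem \ref{theorem01} to assert $\bm{w^*}=\bm{w_1}\in\mathcal{W}$, which means $\bm{w^*}$ is one of the $\bm{w}$'s admissible in the sup and inf of Corollary \ref{corollary1}. Substituting $\bm{w^*}$ into inequalities (\ref{InfBound}) and (\ref{SupBound}) at a single index $i$ then yields
\begin{equation*}
\bm{Z_i}\bm{c}-\|\bm{Z_i}\|\cdot|r|^{1/2}\ \le\ d(i)\ \le\ \bm{Z_i}\bm{c}+\|\bm{Z_i}\|\cdot|r|^{1/2}.
\end{equation*}
Taking $i=\lfloor i^*\rfloor$ in the right-hand side and combining with $\rho^*\le d(\lfloor i^*\rfloor)$ from Theorem \ref{rhoBound} gives $\rho^*\le\rho_{\mathtt{upper}}$ directly. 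A mirror-image step at the lower end, using $d(\lceil i^*\rceil)\le\rho^*$ together with the left half of the displayed inequality, produces the remaining estimate $\rho_{\mathtt{lower}}\le\rho^*$.

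The one subtlety I foresee is a small index mismatch: Theorem \ref{rhoBound} naturally supplies $d(\lceil i^*\rceil)$ as the lower sandwich, while $\rho_{\mathtt{lower}}$ as written in the statement uses $\bm{Z_{\lfloor i^*\rfloor}}$. When $\nu l$ is an integer the two indices coincide and there is nothing to do; otherwise the cleanest route is to read the lower bound as $\bm{Z_{\lceil i^*\rceil}}\bm{c}-|r|^{1/2}\|\bm{Z_{\lceil i^*\rceil}}\|$, after which the chain closes verbatim. A softer point worth flagging is that $d(\cdot)$ and the descending order are defined through the unknown $\bm{w^*}$; this is harmless for the theoretical corollary, because the proof of Theorem \ref{rhoBound} only uses the counts of indices on each side of $\rho^*$ (not the explicit ordering), but any algorithmic use of these bounds will need to separately defend the approximation of the ordering that is actually computed.
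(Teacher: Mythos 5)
Your proposal is correct and is exactly the argument the paper intends: the paper gives no explicit proof, stating only that the corollary follows by ``combining Corollary \ref{corollary1} and Theorem \ref{rhoBound},'' which is precisely your chain $\bm{w^*}=\bm{w_1}\in\mathcal{W}$, then the sup/inf bounds evaluated at the sandwich indices of Theorem \ref{rhoBound}. Your observation about the index mismatch is also well taken --- the lower bound should indeed be read at $\lceil i^*\rceil$ to match Theorem \ref{rhoBound}, and the $\lfloor i^*\rfloor$ in the stated $\rho_{\mathtt{lower}}$ appears to be a typo in the paper (coinciding with the correct index only when $\nu l$ is an integer).
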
	

 Note that Eq. (\ref{rhoUL}) implies that the distance between $\rho_{\mathtt{upper}}$ and $\rho_{\mathtt{lower}}$ is $2|r|^{\frac{1}{2}} \|\bm{Z_{\lfloor i^*\rfloor}}\| $.
 
	\subsection{The Proposed Safe Screening Rule with Bi-level Optimization for \texorpdfstring{$\nu$}.-SVM}
	Taking Corollary \ref{corollary1} and Corollary \ref{corollary2} into the basic rule of (\ref{L026}), the proposed SRBO for $\nu$-SVM can be derived.
	
	\begin{corollary}\label{corollary3} (SRBO-$\nu$-SVM)
		For problem (\ref{001a}), suppose that the parameter value $\nu_{0} > 0$ and the corresponding optimal solution $\bm{\alpha^{0}}$ are given. Then, for a parameter value $\nu_{1}$ ( $\nu_{1} > \nu_{0} > 0$) and the corresponding optimal solution $\bm{\alpha^1}$.
		
		1. We have $\alpha^{1}_{i}=0$, if the following holds
		\begin{eqnarray}\label{v_alpha0}
			\bm{Z_{i}}\bm{c}-| r|^{\frac{1}{2}}\|\bm{Z_{i}}\| >\rho_{\mathtt{upper}}.
		\end{eqnarray}
		
		2. We have $\alpha^{1}_{j}=\frac{1}{l}$, if the following holds
		\begin{eqnarray}\label{v_alpha1}
			\bm{Z_{j}}\bm{c}+| r|^{\frac{1}{2}}\|\bm{Z_{j}}\| <\rho_{\mathtt{lower}}.
		\end{eqnarray}
	\end{corollary}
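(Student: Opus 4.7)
The plan is to treat Corollary \ref{corollary3} as a direct specialization of the basic screening rule (\ref{L026}) once the abstract quantities $\inf_{\bm{w}\in\mathcal{W}}\{y_i\langle\bm{w},\Phi(\bm{x_i})\rangle\}$, $\sup_{\bm{w}\in\mathcal{W}}\{y_i\langle\bm{w},\Phi(\bm{x_i})\rangle\}$, $\rho_{\mathtt{upper}}$, and $\rho_{\mathtt{lower}}$ are all replaced by the computable bounds produced in Sections 3.2 and 3.3. In other words, I would not reprove anything from scratch; I would simply chain the inequalities from Corollary \ref{corollary1}, Corollary \ref{corollary2}, Theorem \ref{theorem01}, and the sparsity relations (\ref{L021})--(\ref{L023}).

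For the first claim, the chain I would write down is the following. Since $\nu_1>\nu_0>0$, Theorem \ref{theorem01} guarantees $\bm{w_1}\in\mathcal{W}$, so $y_i\langle\bm{w_1},\Phi(\bm{x_i})\rangle$ is bounded below by $\inf_{\bm{w}\in\mathcal{W}} y_i\langle\bm{w},\Phi(\bm{x_i})\rangle$. Corollary \ref{corollary1} then lower-bounds the latter by $\bm{Z_i}\bm{c}-|r|^{\frac{1}{2}}\|\bm{Z_i}\|$. If the hypothesis (\ref{v_alpha0}) holds, this quantity strictly exceeds $\rho_{\mathtt{upper}}$, which in turn upper-bounds $\rho^{*}$ at $\nu_1$ by Corollary \ref{corollary2}. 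Concatenating gives $y_i\langle\bm{w_1},\Phi(\bm{x_i})\rangle>\rho^{*}_1$, i.e.\ $i\in\mathcal{R}$ for the parameter $\nu_1$, so (\ref{L021}) yields $\alpha^{1}_i=0$.

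The second claim is symmetric: Corollary \ref{corollary1} upper-bounds the supremum by $\bm{Z_j}\bm{c}+|r|^{\frac{1}{2}}\|\bm{Z_j}\|$; the hypothesis (\ref{v_alpha1}) makes this strictly smaller than $\rho_{\mathtt{lower}}\leq\rho^{*}_1$; since $\bm{w_1}\in\mathcal{W}$, the value $y_j\langle\bm{w_1},\Phi(\bm{x_j})\rangle$ is bounded above by that supremum. Hence $y_j\langle\bm{w_1},\Phi(\bm{x_j})\rangle<\rho^{*}_1$, putting $j\in\mathcal{L}$, and (\ref{L023}) gives $\alpha^{1}_j=\tfrac{1}{l}$.

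The only real friction, which is bookkeeping rather than a substantive obstacle, is keeping track of which optimal quantities the symbols refer to. The ball $\mathcal{W}$, the center $\bm{c}$, the radius $|r|^{\frac{1}{2}}$, and the bounds $\rho_{\mathtt{upper}}, \rho_{\mathtt{lower}}$ are all built from the known warm-start solution $\bm{w_0}$ (and some choice of $\bm{\delta}\in\Delta$), while the conclusion concerns the unknown $\bm{\alpha^1},\bm{w_1},\rho^{*}_1$. I would therefore state explicitly in the proof that (i) Theorem \ref{theorem01} is what licenses applying the Corollary \ref{corollary1} bounds to $\bm{w_1}$, and (ii) Corollary \ref{corollary2} is what legitimizes comparing a value involving $\bm{w_0}$ to the unknown $\rho^{*}_1$. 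After these two pointers, the proof collapses into a two-line inequality chain for each of the two claims, which completes the argument.
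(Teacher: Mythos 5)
Your proposal is correct and matches the paper's own (very terse) justification, which simply substitutes the bounds of Corollary \ref{corollary1} and Corollary \ref{corollary2} into the basic rule (\ref{L026}) and invokes the sparsity relations (\ref{L021}) and (\ref{L023}); your version merely makes explicit the bookkeeping that $\bm{w_1}\in\mathcal{W}$ (Theorem \ref{theorem01}) licenses the inequality chain. No gap.
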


        \begin{figure*}[htbp!]
		\centerline{\includegraphics[width=1.5\textwidth]{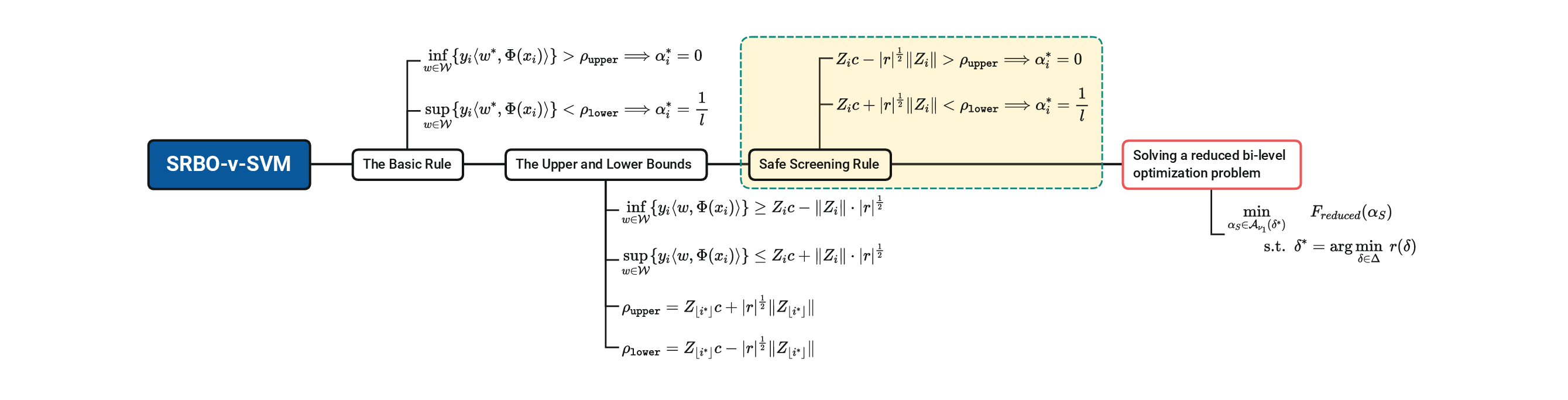}}
		\caption{Mathematical framework of our SRBO-$\nu$-SVM.}
		\label{fig3}
	\end{figure*}
 
	For a clear understanding of our entire derivation of SRBO-$\nu$-SVM, Fig. \ref{fig3} is given. As shown in the figure, the entire derivation is summarized into four points. First, the basic rule (\ref{L026}) is obtained from the KKT condition of the primal problem (\ref{001a}). Second, using the variational inequality and $\nu$-property, the upper and lower bounds related to the basic rule are derived from Corollary \ref{corollary1} and Corollary \ref{corollary2}. Third, we obtain the safe screening rule of $\nu$-SVM. Finally, to obtain the solution with parameter $\nu_1$ when given $\nu_0$ and the corresponding solution $\bm{\alpha^0}$, only a small-scale optimization problem is required to solve.  
	\begin{eqnarray*}\label{L030}
		\min_{\bm{\alpha_S} \in \mathcal{\bar{A}}_{\nu_{1}}}&F_{reduced}(\bm{\alpha_S}).
	\end{eqnarray*}
	Here, $F_{reduced}(\bm{\alpha_S}) = \frac{1}{2}\bm{\alpha_{S}^{T}}\bm{Q_{1}}\bm{\alpha_{S}}+\bm{f^{T}}\bm{\alpha_{S}}$, $\mathcal{\bar{A}}_{\nu_{1}} = \{\bm{\alpha_S}~|\bm{e^{T}}\bm{\alpha_{S}}\geq\nu_{1}-\bm{e^{T}}\bm{\alpha_{D}},0\leq{\bm{\alpha_{S}}}\leq \frac{1}{l}\}$,    
	where $D$ denotes the index of the identified samples in $\mathcal{R}$ and $\mathcal{L}$ by the screening rule, i.e. inactive sample index. $S$ denotes the index of the remaining samples, that is, the active sample index, and $\bm{Q_{1}}=\bm{Q_{S,S}},~\bm{f}=\bm{Q_{S,D}}\bm{\alpha_{D}}$.
	
	Note that in order to make the screening rule as efficient as possible, the choice of $\bm{\delta}$ is critical. The optimal $\bm{\delta}^*$ given in (\ref{deltaQPP}) should be considered. 
			
	We further consider embedding the SRBO-$\nu$-SVM in the process of selecting the optimal value of the parameter $\nu$ in the grid search and give the sequential version of the SRBO-$\nu$-SVM.
	
	\begin{corollary} \label{cor-2}
		(Sequential SRBO-$\nu$-SVM) Given the parameter sequence $\nu_{0}<\nu_{1}<\cdots<\nu_{K}$, for any integer $0\leq k \leq K$. Suppose that the primal optimal solution $\bm{w_{k}}$ and the dual optimal solution $\bm{\alpha^{k}}$ under the parameter value $\nu_k$ are known. 
		Let $\bm{c_{k}}=\bm{w_{k}}+\frac{1}{2}\bm{Z^{T}}\bm{\delta}_{k},  r_{k}=\bm{c_{k}^{T}}\bm{c_{k}}-\bm{w_{k}^{T}}\bm{w_{k}}$. 
		$\bm{\delta}_{k}$ can be any vector satisfying $\bm{\alpha^{k}}+\bm{\delta}_{k} \in \mathcal{A}_{\nu_{k+1}}$.	 
		
		1. We have $\alpha^{k+1}_{i}=0$, if the following holds
		\begin{eqnarray}\label{cor2a}
			\bm{Z_{i}}\bm{c_{k}}-| r_{k}|^{\frac{1}{2}}\|\bm{Z_{i}}\|>\rho_{\mathtt{upper}}.
		\end{eqnarray}
		
		2. We have $\alpha^{k+1}_{j}=\frac{1}{l}$, if the following holds
		\begin{eqnarray}\label{cor2b}
			\bm{Z_{i}}\bm{c_{k}}+| r_{k}|^{\frac{1}{2}}\|\bm{Z_{i}}\|<\rho_{\mathtt{lower}}.
		\end{eqnarray} 
		where the definition of $\rho_{\mathtt{upper}}$ and $\rho_{\mathtt{lower}}$ is given in (\ref{rhoUL}).
	\end{corollary}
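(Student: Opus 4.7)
The plan is to reduce Corollary \ref{cor-2} to a direct reapplication of Corollary \ref{corollary3} at each consecutive pair $(\nu_k,\nu_{k+1})$ in the parameter sequence. Since Corollary \ref{corollary3} only requires a starting parameter $\nu_0>0$ with the corresponding optimal dual solution and a target parameter $\nu_1>\nu_0$, the assumption $\nu_0<\nu_1<\cdots<\nu_K$ supplies precisely the same hypotheses at every transition $k\mapsto k+1$. I would therefore run the argument once for a generic index $k\in\{0,1,\ldots,K-1\}$ and conclude by letting $k$ vary.

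Concretely, first I would invoke Theorem \ref{theorem01} with the roles of $(\nu_0,\bm{w_0},\bm{\alpha^0})$ played by $(\nu_k,\bm{w_k},\bm{\alpha^k})$ and $(\nu_1,\bm{w_1},\bm{\alpha^1})$ played by $(\nu_{k+1},\bm{w_{k+1}},\bm{\alpha^{k+1}})$. Because $\bm{\delta}_k$ is chosen so that $\bm{\alpha^k}+\bm{\delta}_k\in\mathcal{A}_{\nu_{k+1}}$, the theorem yields $\|\bm{w_{k+1}}-\bm{c_k}\|^2\leq r_k$, i.e.\ $\bm{w_{k+1}}$ lies in the ball $\mathcal{W}_k=\{\bm{w}\mid\|\bm{w}-\bm{c_k}\|^2\leq r_k\}$. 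Second, I would apply Corollary \ref{corollary1} to this ball to obtain the Cauchy--Schwarz estimates
\begin{equation*}
\bm{Z_i}\bm{c_k}-|r_k|^{\frac{1}{2}}\|\bm{Z_i}\|\;\leq\;y_i\langle\bm{w_{k+1}},\Phi(\bm{x_i})\rangle\;\leq\;\bm{Z_i}\bm{c_k}+|r_k|^{\frac{1}{2}}\|\bm{Z_i}\|.
\end{equation*}
Third, I would apply Corollary \ref{corollary2} with the same center $\bm{c_k}$ and radius $r_k^{1/2}$ to obtain the $\rho$-bounds $\rho_{\mathtt{lower}}\leq\rho^{k+1}\leq\rho_{\mathtt{upper}}$ with $\rho_{\mathtt{upper}},\rho_{\mathtt{lower}}$ as in (\ref{rhoUL}).

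Finally, inserting these four bounds into the basic rule (\ref{L026}) completes the argument: if $\bm{Z_i}\bm{c_k}-|r_k|^{1/2}\|\bm{Z_i}\|>\rho_{\mathtt{upper}}$, then $\inf_{\bm{w}\in\mathcal{W}_k} y_i\langle\bm{w},\Phi(\bm{x_i})\rangle > \rho^{k+1}$, so $i\in\mathcal{R}$ at parameter $\nu_{k+1}$ and hence $\alpha_i^{k+1}=0$ by (\ref{L021}); symmetrically, $\bm{Z_j}\bm{c_k}+|r_k|^{1/2}\|\bm{Z_j}\|<\rho_{\mathtt{lower}}$ forces $j\in\mathcal{L}$ and therefore $\alpha_j^{k+1}=1/l$ via (\ref{L023}). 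I do not anticipate any substantive obstacle, since every ingredient has already been established; the only point requiring a line of care is that the $\nu$-property underlying Corollary \ref{corollary2} is parameter-agnostic, so Lemma \ref{lemmaNu} applies verbatim at $\nu_{k+1}$, and the monotonicity hypothesis $\nu_k<\nu_{k+1}$ is used only to guarantee that $\Delta_k=\{\bm{\delta}\mid\bm{\alpha^k}+\bm{\delta}\in\mathcal{A}_{\nu_{k+1}}\}$ is nonempty, so that $\bm{\delta}_k$ can actually be selected.
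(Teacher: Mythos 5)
Your proposal is correct and follows essentially the same route as the paper: the paper presents the sequential rule as an immediate consequence of Corollary \ref{corollary3} (itself obtained by inserting Corollary \ref{corollary1} and Corollary \ref{corollary2} into the basic rule (\ref{L026})), applied at each consecutive pair $(\nu_k,\nu_{k+1})$, which is exactly the reduction you carry out. Your added remark that $\nu_k<\nu_{k+1}$ only serves to guarantee $\bm{\delta}_k$ can be selected is a correct and slightly more careful observation than the paper makes explicit.
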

	
	In summary, the key point of our proposed method is to find the constant elements of $\bm{\alpha^{k+1}}$ before solving the optimization problem, to reduce the total computational cost of the training process.

	\subsection{The Algorithm for SRBO-\texorpdfstring{$\nu$}.-SVM}\label{SectionAlgorithm}
	According to the discussion above, when given the parameter sequence $\{\nu_{k}|k=0,1,\cdots,K\}$, the training procedure of our SRBO-$\nu$-SVM can be summarized as follows.
	
	Step 1 (Initialization). Solving the entire dual problem (\ref{dualQPPv}) under the parameter value $\nu_0$ and getting the corresponding optimal solution $\bm{\alpha^0}$.
	
	Then for each $k =0, 1, \cdots, K-1$, perform the following steps. 
	
	Step 2 (Screening). find an appropriate vector $\bm{\delta}_k$, identify the inactive training samples in $\mathcal{R}$ and $\mathcal{L}$, and directly obtain the corresponding $\bm{\alpha^{k+1}_{D}}$ by the sequential SRBO-$\nu$-SVM given in Corollary \ref{cor-2}.
	
	Step 3 (Optimization). Solve a small-scaled problem and find the corresponding solution
	\begin{equation}\label{QPPforAlaphaD}
		\bm{\alpha^{k+1}_{S}} = \arg \min_{\bm{\alpha_S} \in \mathcal{\bar{A}}_{\nu_{k}}} F_{reduced}(\bm{\alpha_S}).
	\end{equation}
	
	Step 4 (Combination). achieve the entire dual optimal solution $\bm{\alpha^{k+1}}$ by combining $\bm{\alpha^{k+1}_{D}}$ and $\bm{\alpha^{k+1}_{S}}$.
	
	Another consideration is to determine the appropriate $\bm{\delta}_k$. From Fig. \ref{Region}, we can see that the choice of $\bm{\delta}_k$ will severely affect the estimation of $\mathcal{W}$. Too large a feasible range $\mathcal{W}$ may even result in no sample being screened, i.e. $\mathcal{D} = \emptyset$. The QPP (\ref{deltaQPP}) gives the optimal $\bm{\delta}_k$ in theory. However, it should be noted that this optimization problem is also a QPP with $l$ variables. It will increase the computational overhead of our SRBO to directly solve this problem. That is, we have to make a trade-off between the screening ratio of our SRBO and the additional computational overhead to solve $\bm{\delta}_k$.
	
	We further design an algorithm to address this issue. Define $\Delta_{k+1}$ as the feasible region of $\bm{\delta}_{k+1}$ when the parameter is $\nu_{k+1}$, that is, $\Delta_{k+1} = \{\bm{\delta}_{k+1} | \bm{e^{T}}(\bm{\alpha^{k+1}}+\bm{\delta}_{k+1})\geq \nu_{k+1}, 0\leq\bm{\alpha^{k}} +\bm{\delta}_{k+1} \leq \frac{1}{l}\}$. $\bm{\delta}^{\mathcal{N}}_{k}$ is the partial element of $\bm{\delta}_{k}$ that satisfies $\bm{\delta}_{k} \notin \Delta_{k+1}$, and the corresponding index set is denoted $\mathcal{N}$. $\bm{\delta}^{\mathcal{\bar{N}}}_{k}$ is the partial element of $\bm{\delta}_{k}$ satisfying $\bm{\delta}_{k} \in \Delta_{k+1}$.

	Then, we solve the following smaller optimization problem instead of the original QPP (\ref{deltaQPP}).
	
	\begin{equation}\label{deltakQPP}
		\begin{aligned}
			\bm{\delta}'_{k+1}=\arg \min_{\bm{\delta^{\mathcal{N}}_{k+1}} \in \bar{\Delta}_{k+1}}~r_{k}(\bm{\alpha^k}, \bm{\delta^{\mathcal{\bar{N}}}_{k}},\bm{\delta^{\mathcal{N}}_{k+1}}).
		\end{aligned}
	\end{equation}
	And combine $\bm{\delta'_{k+1}}$ and $\bm{\delta^{\mathcal{N}}_{k+1}}$ as the final $\bm{\delta_{k+1}}$ in the screening process.
	
	The pseudo-code of SRBO-$\nu$-SVM is summarized in Algorithm 1. 

    In summary, our SRBO is embedded in the parameter selection process of $\nu$. We just need to solve the first full optimization problem under the parameter $\nu_{0}$. In the following parameter loop, we use the solution information from the previous step to identify inactive samples. Then, the training computational cost can be greatly reduced. The safety of the solutions is guaranteed.

	\begin{algorithm}[H] 
		\caption{SRBO-$\nu$-SVM}
		\begin{algorithmic}[1] \label{Algorithm1}
			\renewcommand{\algorithmicrequire}{\textbf{Input:}}
			\renewcommand{\algorithmicensure}{\textbf{Output:}}
			\REQUIRE training set $T_{train}$, test data $T_{test}$, one vector of parameter $\nu$, $\bm{\nu}=\{\nu_{0},\nu_{1},\cdots,\nu_{K}\}$\\
			\ENSURE solution $\bm{\alpha^{k}}$, $k=0,1,\cdots,K$, predicted labels\\
			\STATE $k=0$\\
			\STATE $\bm{\alpha^{k}}\leftarrow$ solve QPP (\ref{dualQPPv}) with $T_{train}$ and $\nu_k$\\
			\STATE $\bm{\delta}^{k}\leftarrow$ solve QPP (\ref{deltaQPP}) with $T_{train}$ and $\nu_k$\\		
			\FOR{$k=0:K-1$}
			\STATE $\bm{\delta_{k+1}}\leftarrow$ solve problem (\ref{deltakQPP}) with $\nu_{k+1}$
			\STATE $\rho_{\mathtt{upper}}, \rho_{\mathtt{lower}}\leftarrow$ formula (\ref{rhoUL})
			\STATE $\bm{\alpha^{k+1}_{D}}\leftarrow$ screen from  Corollary \ref{cor-2}
			\STATE $\bm{\alpha^{k+1}_{S}}\leftarrow$ solve reduced QPP (\ref{QPPforAlaphaD}) with $\bm{\delta}_{k+1}$ and $\nu_{k+1}$
			\STATE $\bm{\alpha^{k+1}}\leftarrow$ combine $\bm{\alpha^{k+1}_{D}}$ and $\bm{\alpha^{k+1}_{S}}$
			\STATE Labels $\leftarrow$ predict on $T_{test}$ with each $\alpha^{k}$ according to (\ref{decision2}) 
			\ENDFOR
		\end{algorithmic}
	\end{algorithm}

	\begin{algorithm}[H]
		\caption{DCDM for $\nu$-SVM and SRBO-$\nu$-SVM} 
		\label{alg2} 
		\renewcommand{\algorithmicrequire}{\textbf{Input:}}
		\renewcommand{\algorithmicensure}{\textbf{Output:}}
		\begin{algorithmic}
			\REQUIRE
			matrix $\bm{\bar{Q}}$, 
			parameter $\nu$, 
			size of the optimization problem $\bar{l}$,
			vector $\bm{e}$,
			a threshold $\epsilon$
			\ENSURE 
			dual vector $
   \bm{\alpha}  = 
   (\alpha_{1},\alpha_{2},\cdots,\alpha_{\bar{l}})^{T}
   $
			\STATE Initialize $\bm{\alpha} \in \mathcal{A}_{\nu}$
            \STATE $k = 1$
			\WHILE{$\bm{\alpha}$ is not an $\epsilon$-accurate solution}
			\FOR{$i = 1,2,\cdots,\bar{l}$}
            \STATE $G = \bm{\bar{Q}[i,:]}*\bm{\alpha}$ 
			\IF{$\alpha_{i}-\max(0,\nu-\sum_{k \neq i} \alpha_{k}) < \epsilon $}
			\STATE  $G^{\prime} = \min(G,0)$
			\ELSIF{$\frac{1}{\bar{l}} - \alpha_{i} < \epsilon $}
			\STATE  $G^{\prime} = \max(G,0)$
			\ELSIF{$\max(0,\nu-\sum_{k \neq i} \alpha_{k}) < \alpha_{i} < \frac{1}{\bar{l}}$}
			\STATE  $G^{\prime} = G$
			\ENDIF
			\IF{$ \left|G^{\prime}-0\right| > \epsilon $}
			\STATE $\alpha_{i} = \min(\frac{1}{\bar{l}},\max(\alpha_{i}-G^{\prime}*s,0,\nu-\sum_{k \neq i} \alpha_{k})) $ 
			\ENDIF
			\ENDFOR
                \STATE $k = k+1$
			\ENDWHILE
		\end{algorithmic} 
	\end{algorithm}

	\subsection{The Algorithm for DCDM}\label{DCDMAlgorithm}

 It is easy to find that our acceleration method of SRBO is independent from the solvers of QPP. That is, the solver will not have an effect on our safe screening rule. Furthermore, we develop an efficient dual coordinate descent method (DCDM) for solving QPP of $\nu$-SVM and SRBO-$\nu$-SVM. The pseudocode of DCDM for $\nu$-SVM and SRBO-$\nu$-SVM is summarized in Algorithm 2, in which $\bm{\bar{Q}} \in \mathbb{R}^{\bar{l}\times \bar{l}} $ is the kernel matrix of input.
	
	The DCDM is independent of the screening process. The dual objective is updated by completely solving for one coordinate while keeping all other coordinates fixed. It is simple and reaches an $\epsilon$-accurate solution in $O$(log($1$/$\epsilon$)) iterations \cite{hsieh_dual_2008}. Note that an $\epsilon$-accurate solution $\bm{\alpha}$ is defined if $F(\bm{\alpha})\leq F(\bm{\alpha}^{*}) + \epsilon $. Therefore, the total time complexity of DCDM will not exceed $O$($l$ log($1$/$\epsilon$)). On the contrary, if we use ``quadprog'' toolbox of MATLAB to solve the dual problem (\ref{dualQPPv}), the time complexity is $O(l^3)$. Especially when $l$ is large, DCDM will have more computational advantages. 		
	
	\section{A General Discussion on the Proposed SRBO} \label{GeneralDiscussion}

	We could rewrite the primal problem of SVM-type models in the following unified formulation.
	\begin{equation}\label{GeneralForm}
		\min \frac{1}{2}\|\bm{w}\|_{2}^{2}+C \cdot L(h, \rho)-v \rho.
	\end{equation}
	For the supervised $C$-SVM and $\nu$-SVM, the decision boundary $h$ is 
	$$
	h(\bm{w};\bm{x},y) = y \langle	\bm{w},\Phi(\bm{x})\rangle = 0.
	$$
	The loss $L(h(\bm{w};\bm{x},y), \rho)$ is defined as a hinge function.
	\begin{equation}\label{HingeLoss}
		L(h,  \rho) = \sum_{i=1}^{l} max\{0, \rho- h\}.
	\end{equation}
	Specifically, the classical $C$-SVM corresponds to the case where $\nu = 0$, $\rho = 1$. In contrast, in $\nu$-SVM, $\nu$ is a manual parameter selected from the interval $(0, 1)$ and $\rho$ is a variable to be solved.
	
	From this point of view, \cite{pmlr-v32-wangd14} provides a screening approach for formulation (\ref{GeneralForm}) when $\nu = 0$ and $\rho = 1$. In comparison, our screening method provides a more general rule for the case that $\rho$ is a variable rather than a fixed number. The breakthrough point is that we provide the upper and lower bounds for the optimal $\rho^{*}$ in Corollary \ref{corollary2} .

	\newcolumntype{L}[1]{>{\raggedright\arraybackslash}m{#1}}
	\newcolumntype{C}[1]{>{\centering\arraybackslash}m{#1}}
	\newcolumntype{R}[1]{>{\raggedleft\arraybackslash}m{#1}}
	
	\begin{table*}[htbp!]
		\centering
		\caption{The Main Formulations of SSR-$\nu$-SVM and SSR-OC-SVM}
      	\renewcommand{\arraystretch}{0.1}
		\resizebox{0.9\linewidth}{!}{
		\begin{tabular}{m{2cm}m{7.3cm}m{7cm}}
			\toprule
			&\multicolumn{1}{c}{ $\nu$-SVM} & \multicolumn{1}{c}{ OC-SVM} \\
			\midrule
			Primal Problem & 
			\begin{eqnarray*}
				\displaystyle{\min_{\bm{w},\xi,\rho}}~~&& \frac{1}{2}\|\bm{w}\|^{2}-\nu\rho+\frac{1}{l}\sum_{i=1}^{l}\xi_{i}\\
				\mbox{s.t.}~~&&y_i \bm{w}\cdot \Phi(\bm{x_{i}}) \geq \rho -\xi_{i} \nonumber\\
				~~&&\xi_{i}\geq 0, i=1,2,\cdots,l \nonumber\\
				~~&&\rho \geq 0 \nonumber
			\end{eqnarray*}
			&
			\begin{eqnarray*}
				\displaystyle{\min_{\bm{w},\xi,\rho}}~~&& \frac{1}{2}\|\bm{w}\|^{2}-\rho+\frac{1}{\nu l}\sum_{i=1}^{l}\xi_{i}\\
				\mbox{s.t.}~~&&(\bm{w}\cdot\Phi(\bm{x_{i}})) \geq \rho -\xi_{i},\nonumber\\
				~~&&\xi_{i}\geq 0, i=1,2,\cdots,l \nonumber
			\end{eqnarray*}\\
			\midrule
			Dual Problem & 
			\begin{eqnarray*}
				\displaystyle{\min_{\alpha}}&&\frac{1}{2}\alpha^{T}Q\alpha\\
				\mbox{s.t.}	&&\bm{e^{T}}\alpha\geq \nu \nonumber\\
				&&0\leq\alpha\leq\frac{1}{l} \nonumber
			\end{eqnarray*}
			~~~~~~~~~~~~~~where $Q_{ij}=y_{i}\kappa(\bm{x_{i}},x_{j})y_{j}$.
			&
			\begin{eqnarray*}
				\displaystyle{\min_{\alpha}}&&\frac{1}{2}\alpha^{T}H\alpha\\
				\mbox{s.t.}&&\bm{e^{T}}\alpha=1,\nonumber\\
				&&0\leq\alpha \leq\frac{1}{\nu l} \nonumber
			\end{eqnarray*}
			~~~~~~~~~~~~~~where $H_{ij}=\kappa(\bm{x_{i}},x_{j})$.\\
			\midrule
			Sparsity of Solution & 
			\begin{eqnarray*}
				y_{i}\bm{w^*}\cdot \Phi(\bm{x_{i}})>\rho^{*} \Longrightarrow \alpha_{i}^{*}=0\\
				y_{i}\bm{w^*}\cdot \Phi(\bm{x_{i}})<\rho^{*} \Longrightarrow \alpha_{i}^{*}=\frac{1}{l}
			\end{eqnarray*}
			& 
			\begin{eqnarray*}
				\bm{w^*}\cdot \Phi(\bm{x_{i}})>\rho^{*} \Longrightarrow \alpha_{i}^{*}=0 \\
				\bm{w^*}\cdot \Phi(\bm{x_{i}})<\rho^{*} \Longrightarrow \alpha_{i}^{*}=\frac{1}{\nu l}	
			\end{eqnarray*}	
			\\
			\midrule
			Screening rule
			&
			{\small
				\begin{equation*}
					\begin{aligned}
						&& \ & r_{k} \cdot y_{i}\Phi(\bm{x_{i}})-| r_{k}|^{\frac{1}{2}}\|y_{i}\Phi(\bm{x_{i}})\| >d_{\mathtt{upper}}(\lfloor l-\nu_{k} l\rfloor) \\  
						&&  \ &\Longrightarrow\alpha^{k+1}_{i}=0.
					\end{aligned}
			\end{equation*}	}			
			{\small
				\begin{equation*}
					\begin{aligned}
						&& \ & r_{k} \cdot y_{i}\Phi(\bm{x_{i}}) +| r_{k}|^{\frac{1}{2}}\|y_{i}\Phi(\bm{x_{i}})\| <d_{\mathtt{lower}}(\lceil l-\nu_{k} l\rceil)\\
						&&  \ & \Longrightarrow	\alpha^{k+1}_{i}=\frac{1}{l},
					\end{aligned}
			\end{equation*} }			
			& 
			{\small
				\begin{equation*}
					\begin{aligned}
						&&\ & r_{k} \cdot \Phi(\bm{x_{i}}) -| r_{k}|^{\frac{1}{2}}\|\Phi(\bm{x_{i}})\|
						>d_{\mathtt{upper}}(\lfloor l-\nu_{k} l\rfloor)\\
						&&\ & \Longrightarrow \alpha^{k+1}_{i}=0. 
					\end{aligned}
			\end{equation*} }
			
			{\small
				\begin{equation*}
					\begin{aligned}
						&& \ &r_{k}\cdot\Phi(\bm{x_{i}})+| r_{k}|^{\frac{1}{2}}\|\Phi(\bm{x_{i}})\|<d_{\mathtt{lower}}(\lceil l-\nu_{k} l\rceil)\\  
						&&\ & \Longrightarrow \alpha^{k+1}_{i}=\frac{1}{\nu_{k+1}l}.  
					\end{aligned}
			\end{equation*} }
			\\			
			\bottomrule
		\end{tabular}}
		\label{tab:addlabel}
	\end{table*}

	Furthermore, based on this unified formulation, we can easily give the screening rule for unsupervised OC-SVM \cite{scholkopfEstimatingSupportHighDimensional2001}. Since in OC-SVM the decision boundary $h$ is 
	$$
	h(\bm{w};\bm{x}) = \langle	\bm{w},\Phi(\bm{x})\rangle = 0.
	$$
	And the loss function $L(h(\bm{w};\bm{x}), \rho)$ is also a hinge function given in (\ref{HingeLoss}). The only difference from $\nu$-SVM is the absence of $y_{i}$ in OC-SVM. 
 
	For the sake of brevity, the derivation of the SRBO of OC-SVM is omitted. We directly provide the result of the SRBO of the OC-SVM in Table \ref{tab:addlabel}.

	\section{Numerical Experiments}\label{experiment}
	To verify the advantages of our proposed method, we conduct numerical experiments for supervised and unsupervised tasks. The experimental data sets consist of 6 artificial and 30 real-world benchmark data sets. Our codes are available on the web. \footnote{\url{https://github.com/Citrus-Gradenia/safe-screening-for-nu-svm/tree/main/nu-svm}}. 
	
	The 6 artificial data sets include normal distributions with three different means, circular shape, exclusive case, and spiral case (shown in Fig. \ref{fignusvmSimplie}), respectively. 
	
	Among the real-world benchmark data sets, 29 data sets are downloaded from the machine learning data set repository at the University of California, Irvine (UCI) \cite{Lichman:2013} or the web of LIBSVM Data\footnote{https://www.csie.ntu.edu.tw/$\thicksim$cjlin/libsvmtools/datasets/}. Another one is the MNIST data set, which comes from the National Institute of Standards and Technology (NIST) in the United States. Their statistics are shown in Table \ref{Table222}. The original 30 data sets, except for MNIST data, are binary classification data. In addition, if the test sets are not provided, we will use four-fifths of the random samples for training and the other fifth for test.
	\setlength{\tabcolsep}{4pt}
	\begin{table}[htbp!]
		\centering
		\caption{The statistics of 30 benchmark data sets. }\label{Table222}
        \resizebox{0.9\textwidth}{!}{
		\begin{tabular}{ccccc|ccccc}\hline
			Data set &$^\#$Instances &$^\#$Positive&$^\#$Negative&$^\#$Features&Data set &$^\#$Instances &$^\#$Positive&$^\#$Negative&$^\#$Features \\
			\noalign{\smallskip} \hline \noalign{\smallskip}
			Hepatitis
			&80&13&67&19&CMC
			&1473&629&844&9\\
			Fertility
			&100&88&12&9&Yeast
			&1484&463&1021&9\\
			Planning Relax
			&146&130&52&12&Wifi-localization
			&2000&500&1500&9\\
			Sonar
			&208&97&111&60&CTG
			&2126&1655&471&22\\
			SpectHeart
			&267&212&55&44&Abalone
			&4177&689&3488&8\\
			Haberman
			&306&225&81&3&Winequality
			&4898&1060&3838&11\\
			LiverDisorder
			&345&145&200&6&ShillBidding
			&6321&5646&675&10\\
			Monks
			&432&216&216&6&Musk
			&6598&5581&1017&166\\
			BreastCancer569
			&569&357&212&30&Electrical
			&10000&3620&6380&13\\
			BreastCancer683
			&683&444&239&9&Epiletic
			&11500&2300&9200&178\\ 
			Australian
			&690&307&383&14&Nursery
			&12960&8640&4320&8\\
			Pima
			&768&500&268&8&credit card
			&30000&6636&23364&23\\
			Biodegration
			&1055&356&699&41&Accelerometer
			&31991&31420&571&6\\
			Banknote
			&1372&762&610&4&Adult
			&32561&7841&24720&14\\
			HCV-Egy
			&1385&362&1023&28
   & MNIST & 70000 & - & - & 28$\times$28\\
			\bottomrule
		\end{tabular}}
	\end{table}

	 For binary classification case, the total accuracy on the test set is used as the evaluation criterion. For one class case, only positive training samples are used as a training set, and all positive and negative test samples are used to evaluate the prediction area under the curve (AUC) of the models. To measure the computational efficiency, the training time for each algorithm is also provided. 
	
	\begin{figure*}
		\centering
		\subfloat[]
		{\includegraphics[width=0.18\textwidth]{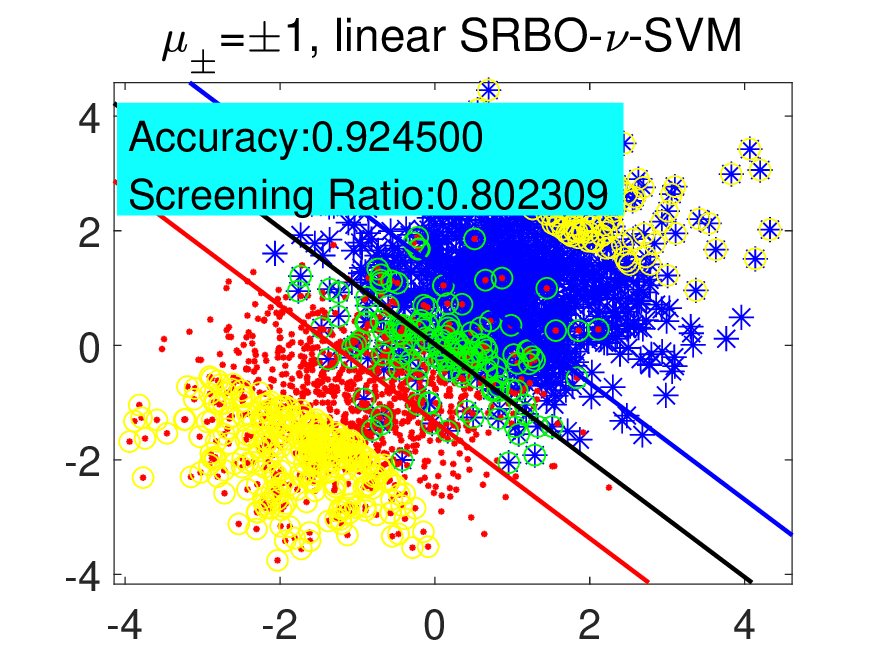}}
		\subfloat[]
		{\includegraphics[width=0.18\textwidth]{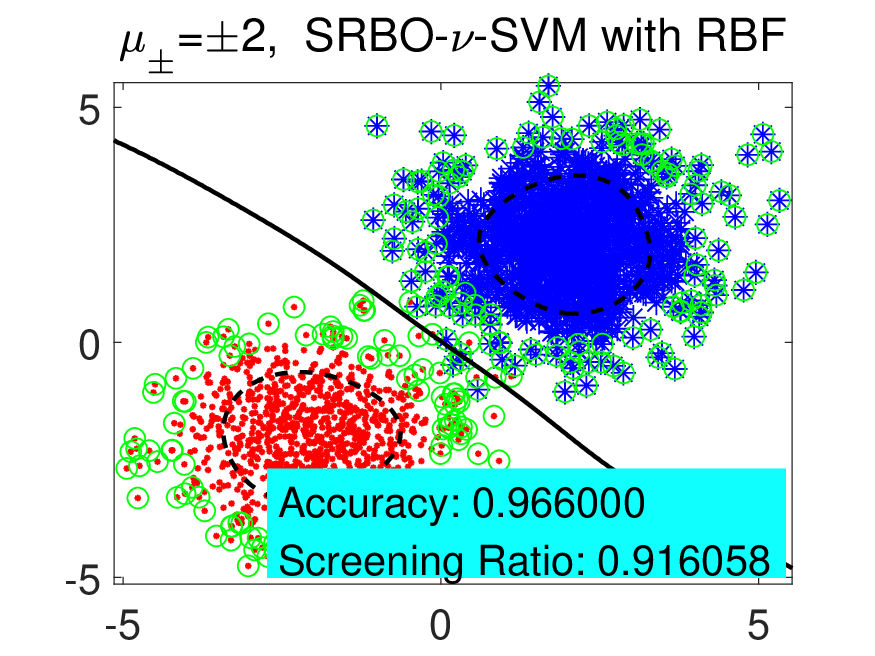}}
		\subfloat[]
		{\includegraphics[width=0.18\textwidth]{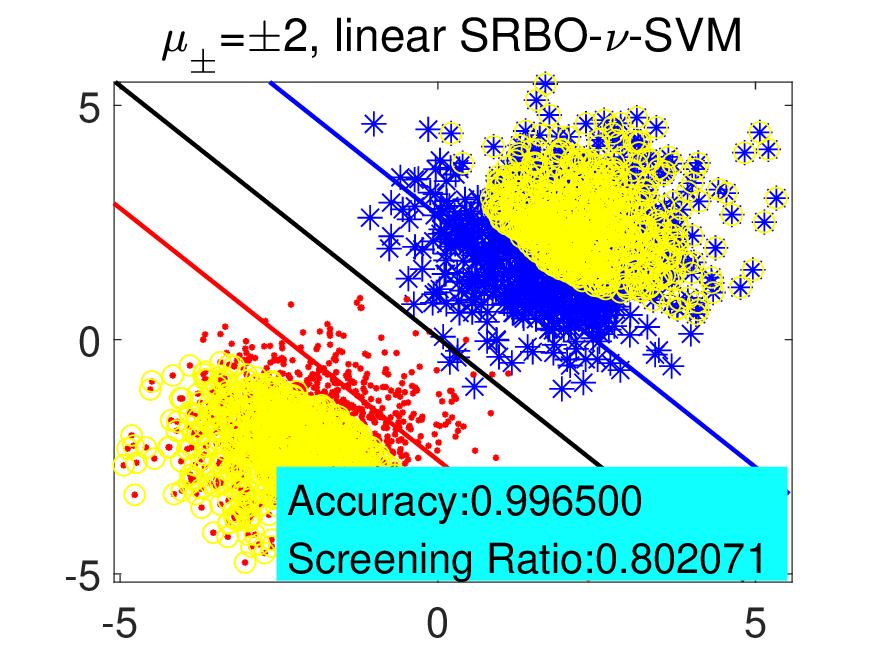}}
		\subfloat[]
		{\includegraphics[width=0.18\textwidth]{Simple2-screening_NU_SVM-rbf-194.eps}} 
		\subfloat[]
		{\includegraphics[width=0.18\textwidth]{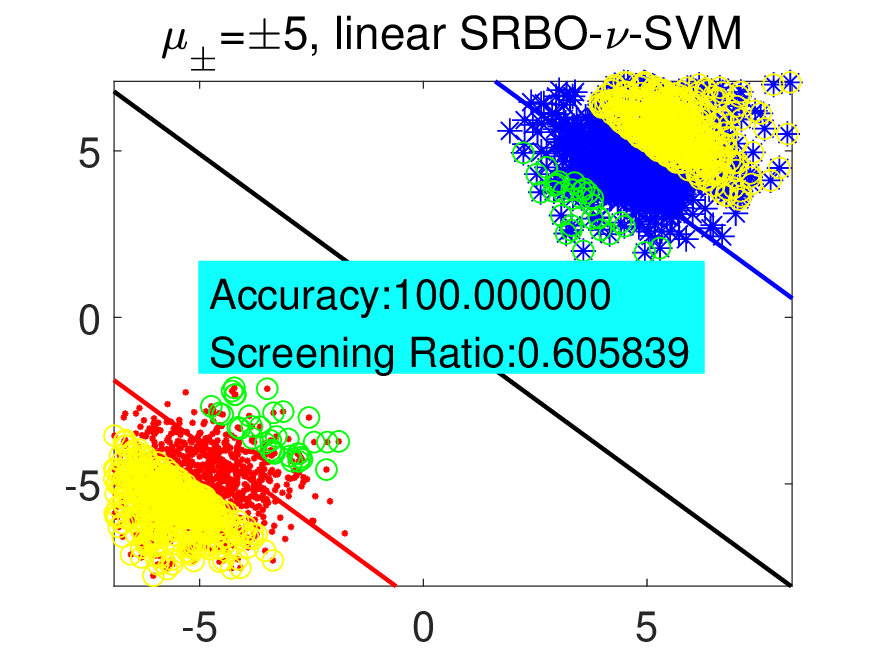}}
		
		\subfloat[]
		{\includegraphics[width=0.18\textwidth]{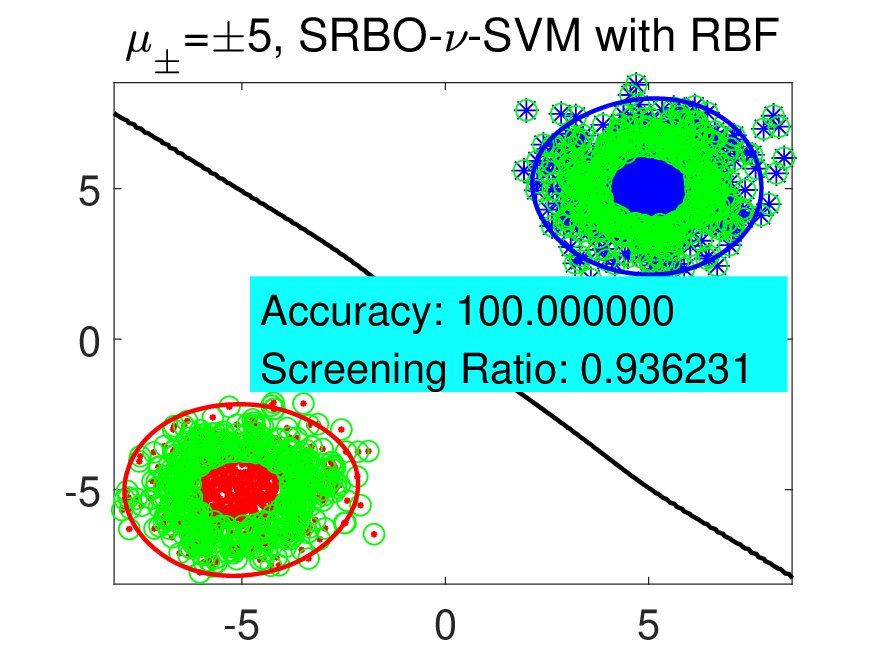}}
		\subfloat[]
		{\includegraphics[width=0.18\textwidth]{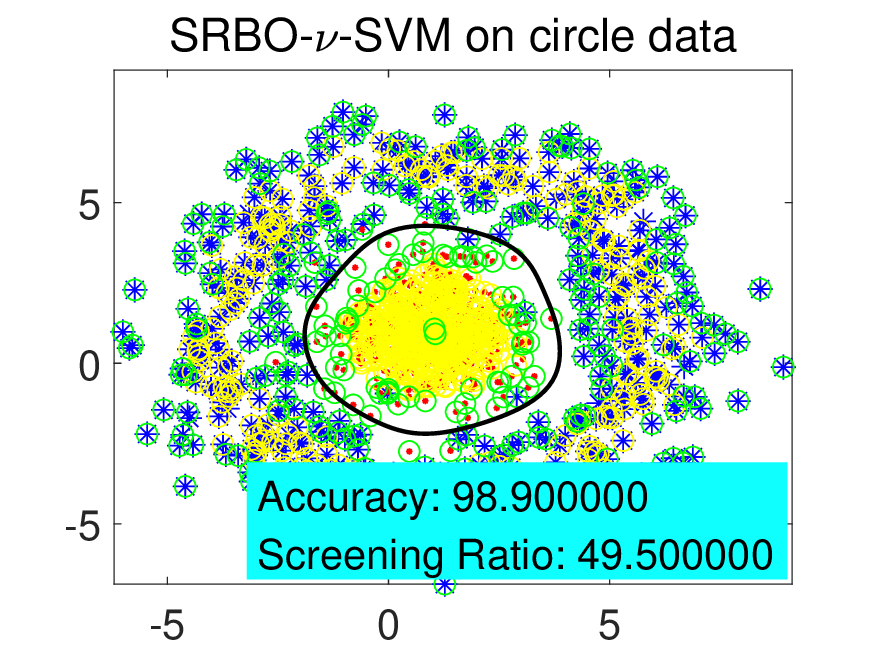}}
		\subfloat[]
		{\includegraphics[width=0.18\textwidth]{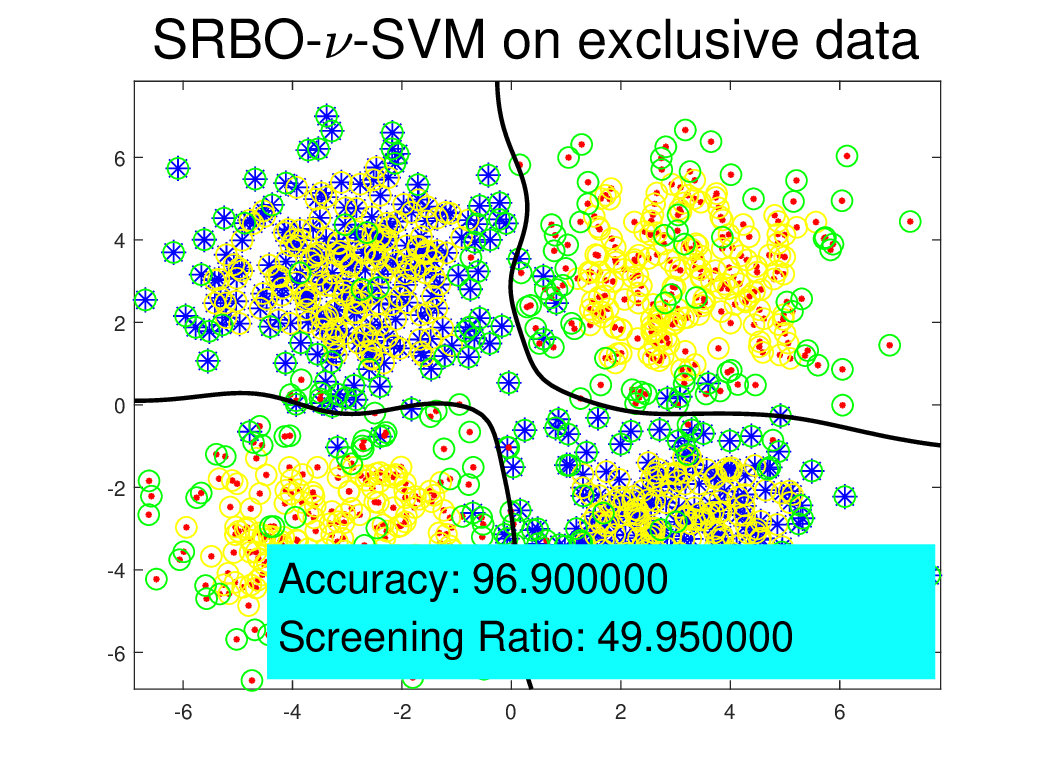}}
		\subfloat[]
		{\includegraphics[width=0.18\textwidth]{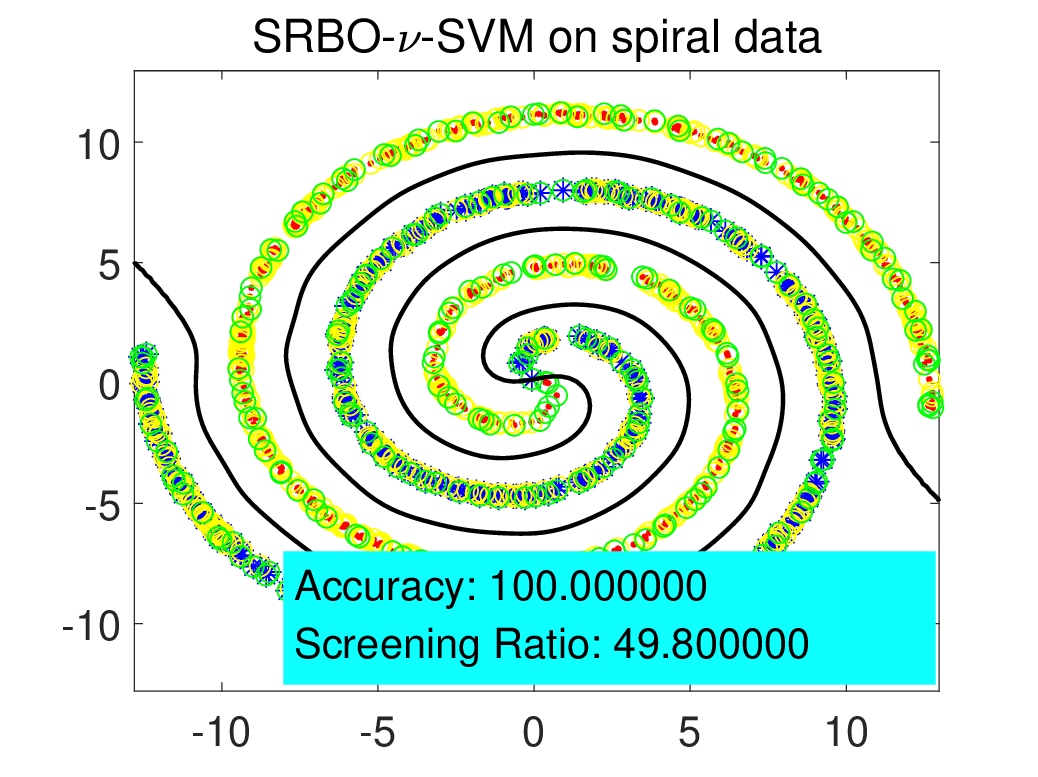}}
		\caption{Classification graphs of SRBO-$\nu$-SVM on three normally distributed data sets (respectively in linear and nonlinear case, $\mu_{\pm}=\pm 1, \pm 2, \pm 5$), nonlinear case on circle, exclusive and spiral data. The blue and red points represent the positive and negative training instances, respectively. In each graph, the black solid line is the decision boundary, and the other two lines are support hyperplanes. Each graph corresponds to the classifier under optimal parameters, and `Accuracy' represents the corresponding training accuracy. The green points correspond to the samples deleted in $\mathcal{L}$, and the yellow points correspond to the samples removed in $\mathcal{R}$. `Screening Ratio' is the average result during the whole parameter selection process by SRBO.}
		\label{fignusvmSimplie}
	\end{figure*}

       \setlength{\belowcaptionskip}{-5cm}  
	All experiments are implemented on Windows 10 platform with MATLAB R2018b. The computer configuration is an Intel (R) Core (TM) i5-6200U CPU @ 2.30GHz 8GB. For a fair comparison, in Sections 5.1 and 5.2, all QPPs are solved by the MATLAB toolbox `quadprog' with the default setting `interior-point-convex'. The efficiency of our proposed DCDM solver is verified in Sections 5.4 and 5.5.

	For the kernel function in SVM-type methods, both the linear kernel $\kappa(\bm{x_i}, \bm{x_j})=(\bm{x_i}\cdot \bm{x_j})$ and the nonlinear radial basis function (RBF) $\kappa(\bm{x_i}, \bm{x_j})=exp(- \|\bm{x_i}-\bm{x_j}\|^2/2\sigma^2)$ are considered.
	
	All manual parameters are selected through the grid search approach. The parameter $\nu$ is selected from ($0.01:0.001:1-1/l$), and the parameter of RBF $\sigma$ in each algorithm is selected over the range $\{2^i|i=-3,-2,\cdots,8\}$. 
	
	\subsection{Experiments on Supervised Models}  \label{Experiment: nu-SVM}
	First, we verify the feasibility and efficiency of our supervised SRBO-$\nu$-SVM on artificial data sets and 26 small-scale benchmark data sets (sample sizes do not exceed 13,000).
	
	\textbf{Experiments on 6 Artificial Data Sets. }
	Figs. \ref{fignusvmSimplie}a - \ref{fignusvmSimplie}f show the results of our SRBO-$\nu$-SVM on three normally distributed data sets. `Accuracy' represents the prediction accuracy under optimal parameters in each case. `Screening Ratio' denotes the average percentage of screening (\%) by SRBO (corresponding to the proportion of deleted samples in $\mathcal{L}$ and $\mathcal{R}$ for each parameter). Each data set contains two classes of samples and each class has 1000 data points, from $N(\{\mu_{+},\mu_{-}\}^{T},I)$, where $I\in\mathbb{R}^{2\times2}$ is the identity matrix. For the positive class, $\mu_{+}=1, 2, 5$, and for the negative class, $\mu_{-}=-1, -2, -5$, respectively. Figs. \ref{fignusvmSimplie}g - \ref{fignusvmSimplie}i give the results on three other data sets: circle data, exclusive data, and spiral data. For these three data sets, each class contains 500 data points.	
			
	As shown in Fig. \ref{fignusvmSimplie}, the SRBO-$\nu$-SVM screens out most inactive instances and remains a few points for training. But the original $\nu$-SVM uses all training samples to build the classifier. More importantly, the prediction accuracy remains unchanged with the screening rule. It implies the effectiveness and safety of our SRBO-$\nu$-SVM. 
	
	\textbf{Experiments on Benchmark Data Sets. }  \label{nu-SVM: benchmark}
	The comparison performances of three methods $C$-SVM, original $\nu$-SVM, and SRBO-$\nu$-SVM with linear and RBF kernel are shown in Table \ref{Table3} and \ref{Table4}. `Time' provides the average time (in seconds) of the training for each parameter. To clearly illustrate the effectiveness of our SRBO-$\nu$-SVM, the `Speedup Ratio' is defined as
	\begin{eqnarray}
		Speedup~Ratio=\frac{Time~of~\nu-SVM }{Time~of~SRBO-\nu-SVM}
	\end{eqnarray}
    We just use larger 13 data sets in linear case, because the calculations on small-scale data in linear case is very small as it is. Then it makes no sense to do the acceleration. In addition, `Win/Draw/Loss' represents the corresponding performance comparisons between our proposed SRBO method and the competitors in prediction accuracy and computational time.
	
	From Tables \ref{Table3} and \ref{Table4}, the training time of our SRBO-$\nu$-SVM is significantly shorter in both linear and nonlinear cases. Especially in the nonlinear case, the computational advantage of SRBO-$\nu$-SVM is more obvious, which demonstrates that our proposed safe screening rule shows superior performance. When the sample size is small, the original $\nu$-SVM seems to run faster, but when the sample size exceeds 500, the advantages of our SRBO-$\nu$-SVM are gradually revealed. When the sample size is tens of thousands, the advantages of our SRBO-$\nu$-SVM are more obvious. It illustrates that our SRBO can reduce computing costs, especially when dealing with large-scale data. The accuracy of $\nu$-SVM is better than that of $C$-SVM on more than half of the data sets. Furthermore, the prediction accuracies of $\nu$-SVM and SRBO-$\nu$-SVM are always the same, demonstrating the safety of our SRBO.

\begin{table*}[htbp!]
	\renewcommand\arraystretch{1.1}
    \centering
	\resizebox{0.8\textwidth}{!}{
	\begin{threeparttable}[b]
		\centering
		\caption{Comparisons of 3 kinds of supervised SVMs on 13 benchmark data sets in linear case.}\label{Table3}
		\begin{tabular}{*{9}{c}}
			\toprule
			\multirow{2}{*}{Data set}&\multicolumn{2}{c}{$C$-SVM}&\multicolumn{2}{c}{$\nu$-SVM}&\multicolumn{4}{c}{SRBO-$\nu$-SVM}\\
			\cmidrule(lr){2-3}\cmidrule(lr){4-5}\cmidrule(lr){6-9}
			&Accuracy(\%)&Time(s)&Accuracy(\%)&Time(s)&Accuracy(\%)&Time(s)&Screening Ratio(\%)&Speedup Ratio     \\
			\midrule
			Banknote
			&98.18&\bf{0.2825}&\bf{98.91}&0.3918&\bf{98.91}&0.5801&9.17&0.6754\\
			HCV-Egy
			&\bf{73.65}&\bf{0.1869}&\bf{73.65}&0.5140&\bf{73.65}&0.4135&19.46&\bf{1.2430}\\
			CMC
			&62.93&\bf{0.3525}&\bf{68.37}&0.5626&\bf{68.37}&1.1268&8.83&0.4993\\
			Yeast
			&69.02&\bf{0.3579}&\bf{69.36}&0.3647&\bf{69.36}&0.4771&22.93&0.7644\\
			Wifi-localization
			&\bf{99.50}&\bf{0.6528}&\bf{99.50}&1.4750&\bf{99.50}&1.7274&48.21&\bf{1.7339}\\
			CTG
			&96.00&\bf{0.8130}&\bf{96.47}&0.9797&\bf{96.47}&1.3249&15.25&0.7394\\
			Abalone
			&\bf{83.47}&3.3469&\bf{83.47}&10.2277&\bf{83.47}&\bf{1.1584}&73.47&\bf{8.8293}\\
			Winequality
			&\bf{78.37}&6.8799&\bf{78.37}&9.2887&\bf{78.37}&\bf{5.0118}&70.92&\bf{1.8534}\\
			ShillBidding 
			&\bf{98.26}&22.6100&98.10&18.7526&98.10&\bf{2.5058}&76.54&\bf{7.4837}\\
			Musk
			&\bf{94.31}&19.6065&93.93&18.6710&93.93&\bf{17.6803}&45.79&\bf{1.0560}\\
			Electrical
			&99.45&50.9515&\bf{99.65}&15.8504&\bf{99.65}&\bf{3.7544}&52.73&\bf{4.2218}\\
			Epiletic
			&80.61&132.7730&\bf{81.43}&12.1297&\bf{81.43}&\bf{7.0235}&10.48&\bf{1.7270}\\ 
			Nursery
			&85.29&150.6635&\bf{100.00}&16.6886&\bf{100.00}&\bf{5.5335}&51.62&\bf{3.0159}\\
           \midrule
   	  Win/Draw/Loss&7/4/2&&0/13/0&&&\\
            Win/Draw/Loss&&7/0/6&&9/0/4&\\
			\bottomrule
		\end{tabular}
	\end{threeparttable}}
\end{table*}

\begin{table*}[htbp!]
	\renewcommand\arraystretch{1.1}
    \centering
	\resizebox{0.8\textwidth}{!}{
	\begin{threeparttable}[b]
		\centering
		\caption{Comparisons of 3 kinds of supervised SVMs on 26 benchmark data sets in nonlinear case.}\label{Table4}
		\begin{tabular}{*{9}{c}}
			\toprule
			\multirow{2}{*}{Data set}&\multicolumn{2}{c}{$C$-SVM}&\multicolumn{2}{c}{$\nu$-SVM}&\multicolumn{4}{c}{SRBO-$\nu$-SVM}\\
			\cmidrule(lr){2-3}\cmidrule(lr){4-5}\cmidrule(lr){6-9}
			&Accuracy(\%)&Time(s)&Accuracy(\%)&Time(s)&Accuracy(\%)&Time(s)&Screening Ratio(\%)&Speedup Ratio\\
			\midrule
			Hepatitis
			&\bf{93.33}&\bf{0.0046}&86.67&0.0103&86.67&0.0211&15.50&0.4852\\
			Fertility
			&\bf{90.00}&0.0031&\bf{90.00}&0.0082&\bf{90.00}&0.0101&22.50&0.8107\\
			Planning Relax
			&\bf{72.22}&0.0123&\bf{72.22}&0.0154&\bf{72.22}&0.0176&22.46&0.8760\\
			Sonar
			&\bf{83.33}&\bf{0.0129}&80.95&0.0108&80.95&0.0183&2.60&0.5912\\
			SpectHeart
			&79.63&0.0161&\bf{85.19}&\bf{0.0135}&\bf{85.19}&0.0144&17.67&0.9340\\
			Haberman
			&\bf{80.33}&0.0272&\bf{80.33}&\bf{0.0186}&\bf{80.33}&0.0204&21.47&0.9126\\
			LiverDisorder
			&57.97&0.0266&\bf{71.01}&\bf{0.0151}&\bf{71.01}&0.0175&11.58&0.8628\\
			Monks
			&86.21&0.0294&\bf{95.40}&\bf{0.0263}&\bf{95.40}&0.0300&6.09&0.8756\\
			BreastCancer569
			&97.35&0.0485&\bf{99.12}&0.0471&\bf{99.12}&\bf{0.0438}&8.65&\bf{1.0796}\\
			BreastCancer683
			&95.59&0.0796&\bf{97.06}&0.0722&\bf{97.06}&\bf{0.0496}&14.83&\bf{1.4571}\\
			Australian
			&\bf{88.49}&0.0862&87.77&0.0702&87.77&\bf{0.0590}&8.52&\bf{1.1907}\\
			Pima
			&75.82&0.0950&\bf{76.47}&0.0907&\bf{76.47}&\bf{0.0691}&12.66&\bf{1.3133}\\
			Biodegration
			&86.26&\bf{0.1578}&\bf{91.00}&0.2257&\bf{91.00}&0.1650&21.49&\bf{1.3676}\\
			Banknote
			&98.55&\bf{0.3042}&\bf{100.00}&0.5184&\bf{100.00}&0.3874&11.05&\bf{1.3381}\\
			HCV-Egy
			&\bf{73.65}&0.2944&\bf{73.65}&0.3072&\bf{73.65}&\bf{0.2516}&5.69&\bf{1.2210}\\
			CMC
			&64.97&0.3404&\bf{71.09}&0.3631&\bf{71.09}&\bf{0.2991}&4.87&\bf{1.2140}\\
			Yeast
			&\bf{74.07}&0.3378&73.06&0.4074&73.06&\bf{0.3038}&9.09&\bf{1.3410}\\
			Wifi-localization
			&\bf{99.50}&0.6397&\bf{99.50}&0.9700&\bf{99.50}&\bf{0.5842}&12.25&\bf{1.6604}\\
			CTG
			&\bf{97.88}&0.8131&\bf{97.88}&1.0807&\bf{97.88}&\bf{0.7319}&29.84&\bf{1.4765}\\
			Abalone
			&83.47&7.8309&\bf{84.07}&8.6019&\bf{84.07}&\bf{6.7924}&0.79&\bf{1.2664}\\
			Winequality
			&\bf{79.18}&11.8288&78.37&10.1373&78.37&\bf{3.8114}&14.13&\bf{2.6957}\\
			ShillBidding 
			&98.42&17.5466&\bf{98.81}&11.8200&\bf{98.81}&\bf{3.5394}&20.17&\bf{3.3395}\\
			Musk
			&90.98&19.8758&\bf{98.26}&11.6853&\bf{98.26}&\bf{6.6632}&10.52&\bf{1.7537}\\
			Electrical
			&98.65&48.1828&\bf{98.95}&13.6134&\bf{98.95}&\bf{4.1703}&38.30&\bf{3.2643}\\
			Epiletic
			&94.57&100.8499&\bf{96.70}&19.5876&\bf{96.70}&\bf{9.2757}&11.86&\bf{2.1117}\\ 
			Nursery
			&\bf{100.00}&100.8287&\bf{100.00}&22.2967&\bf{100.00}&\bf{9.5264}&38.12&\bf{2.3405}\\
            \midrule
   	  Win/Draw/Loss&14/7/5&&0/26/0&&&\\
            Win/Draw/Loss&&19/0/7&&18/0/8&\\
			\bottomrule
		\end{tabular}
	\end{threeparttable}}
\end{table*}

    \begin{figure}[htbp!]
		\centering
		\includegraphics[width=0.5\textwidth]{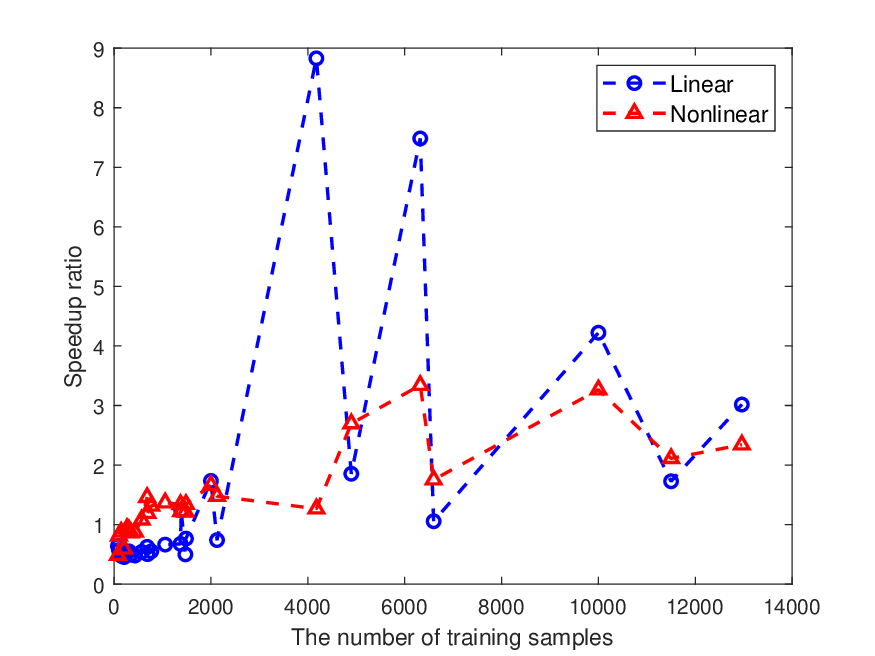}
		\caption{Speedup Ratio  of SRBO-$\nu$-SVM in linear and nonlinear cases.}
		\label{fig2}
		\vspace{-1.5em}
	\end{figure}

	The statistical results on the `Speedup Ratio' of our SRBO-$\nu$-SVM for the benchmark data sets are shown in Fig. \ref{fig2}. The blue and red lines correspond to linear and nonlinear cases, respectively. In general, with the sample size gradually increasing, the speedup ratio increases in both linear and nonlinear cases. It implies that the advantage of our SRBO is more significant for large-scale data. For the nonlinear case, when the sample size is too large, the acceleration effect is slightly limited. The main reason is that the computational overhead of the extra RBF matrix is relatively high.
	\begin{figure*}
		\centering
		\subfloat[]
		{\includegraphics[width=0.25\textwidth]{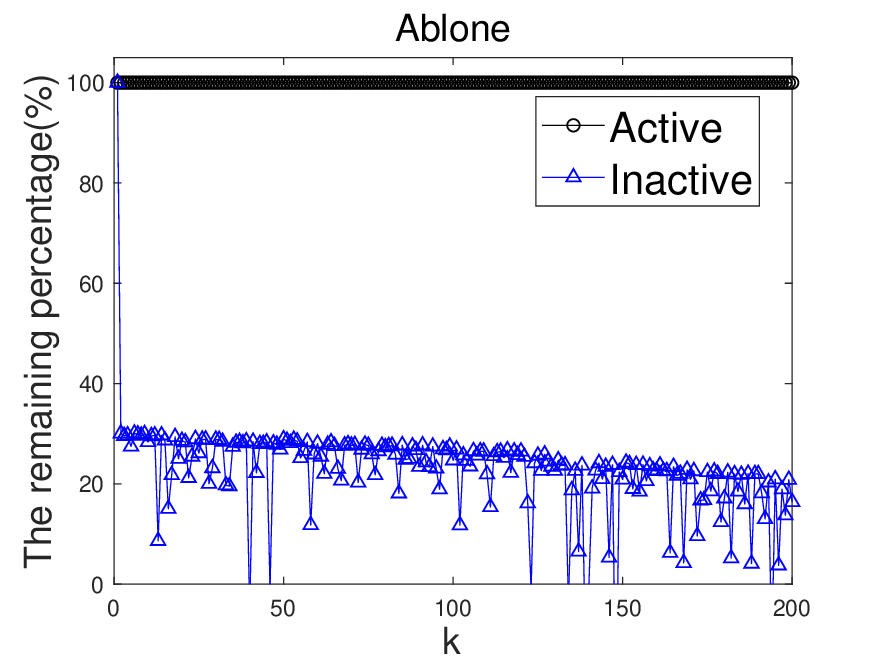}}
		\subfloat[]
		{\includegraphics[width=0.25\textwidth]{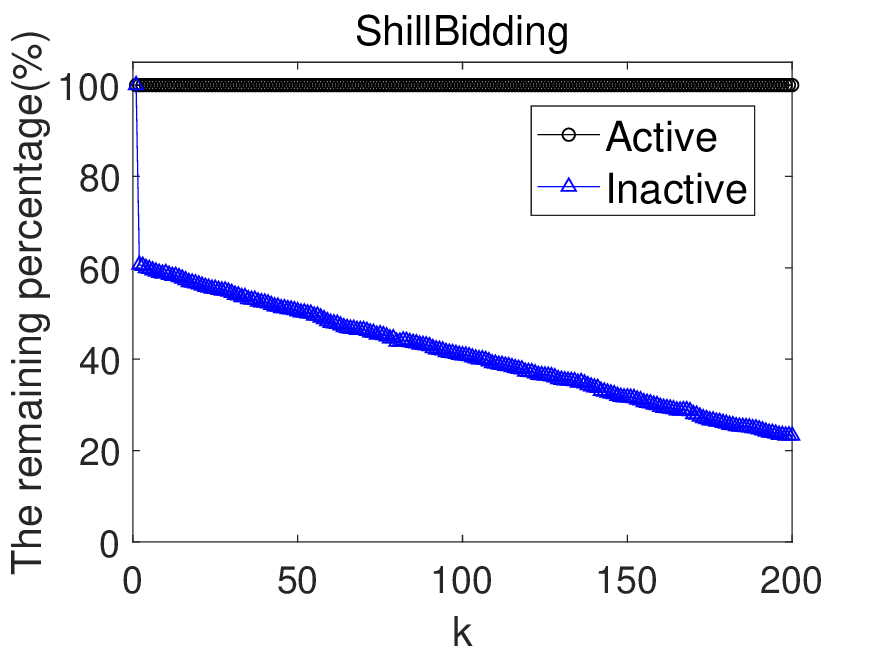}}
		\subfloat[]
		{\includegraphics[width=0.25\textwidth]{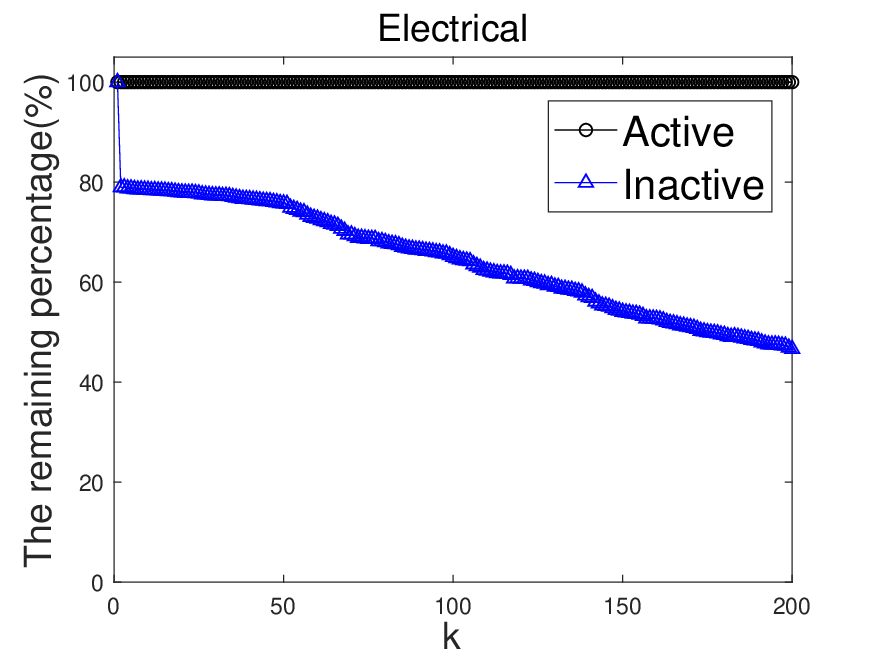}}
		\subfloat[]
		{\includegraphics[width=0.25\textwidth]{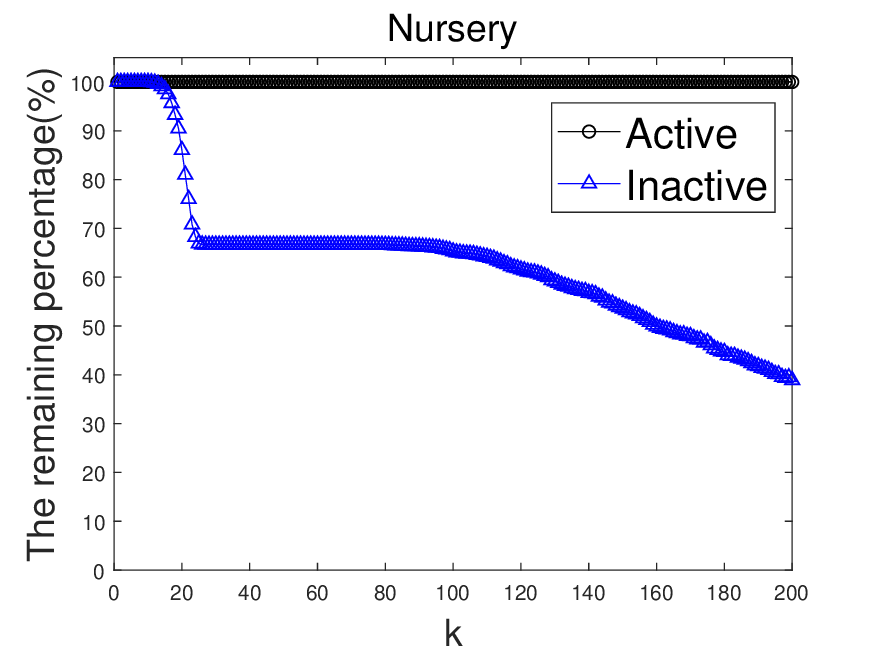}}
		
		\subfloat[]
		{\includegraphics[width=0.25\textwidth]{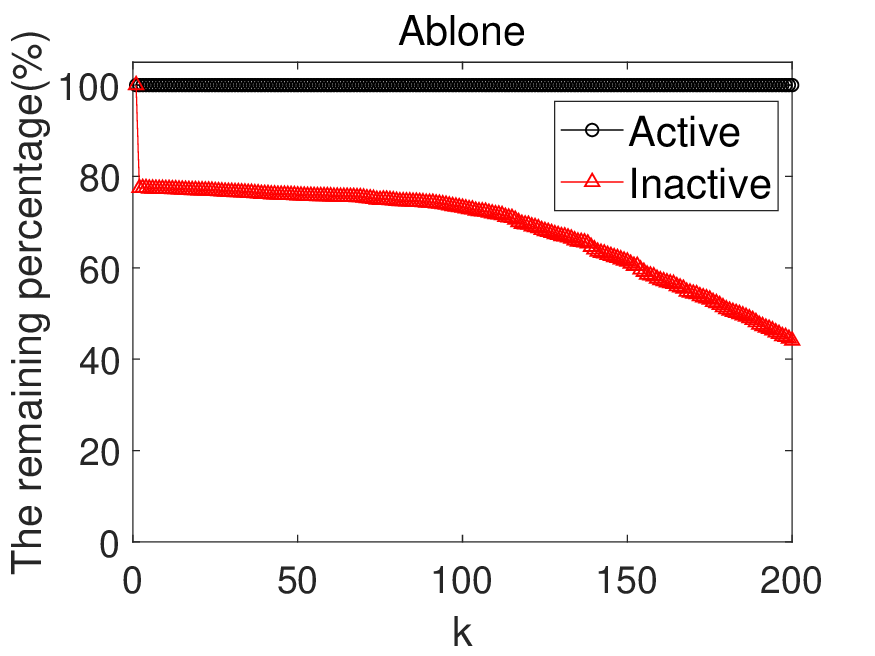}}
		\subfloat[]
		{\includegraphics[width=0.25\textwidth]{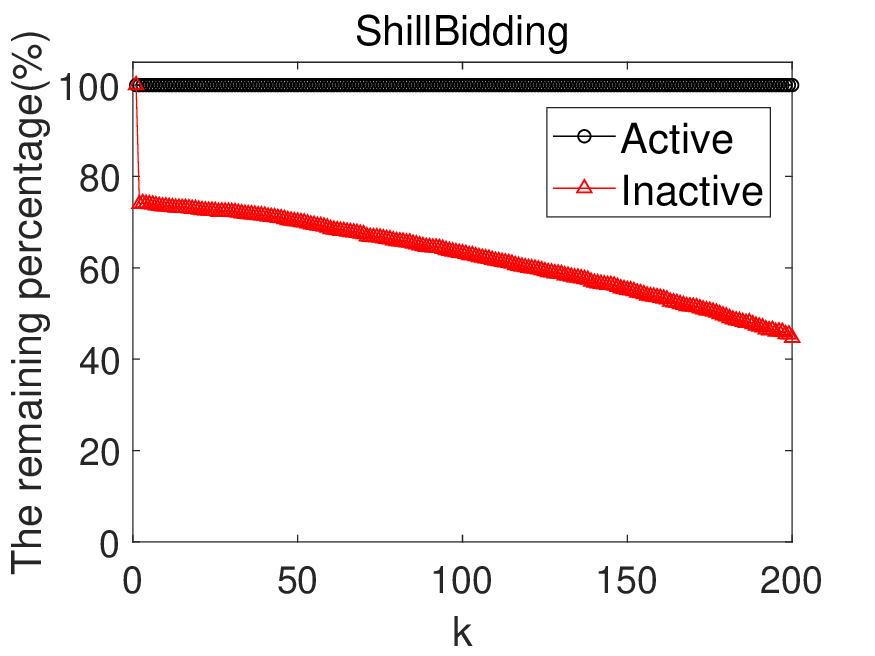}}
		\subfloat[]
		{\includegraphics[width=0.25\textwidth]{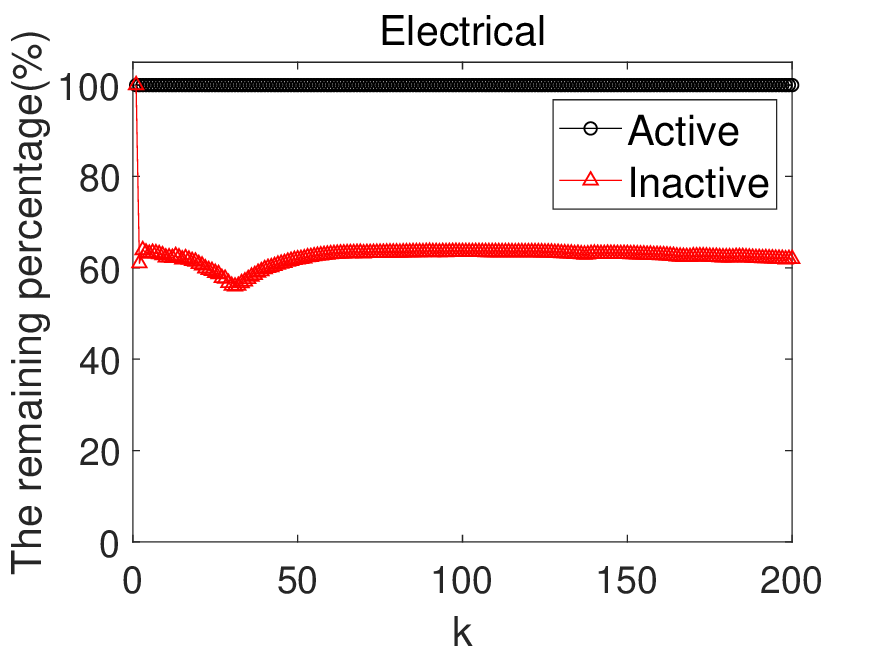}}
		\subfloat[]
		{\includegraphics[width=0.25\textwidth]{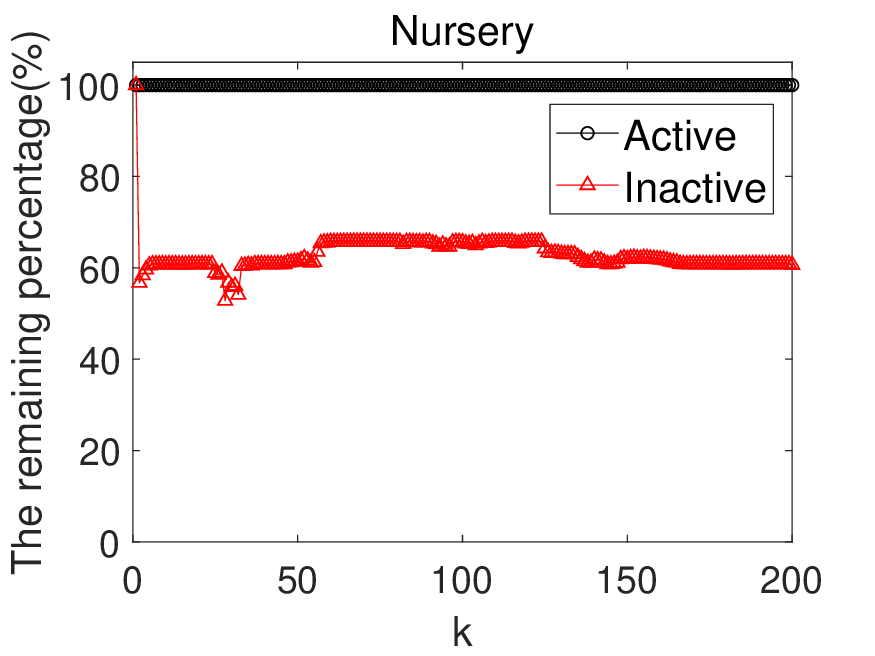}}
		\caption{The changing curves of remaining instances by the screening rule with different values of $\nu$. The horizontal axis denotes the $k$-th value of $\nu$ from the range of ($0.01:0.001:1-1/l$). The vertical axis represents the percentage of remaining instances after screening process. The first row shows linear results, and the second row shows nonlinear results. }
		\label{fig5}
  	  \vspace{-1.5em}
	\end{figure*}
	
	Fig. \ref{fig5} shows the remaining percentages of active and inactive samples at each value of the parameter $\nu$ for our SRBO. We randomly select four data sets to observe the changing curves with different parameter values. We found that most inactive instances can be deleted by our proposed SRBO-$\nu$-SVM, and all active instances remain in training, which demonstrates the safety of our proposed SRBO-$\nu$-SVM.
	
	\begin{figure*}
		\centering
		\subfloat[]
		{\includegraphics[width=0.18\textwidth]{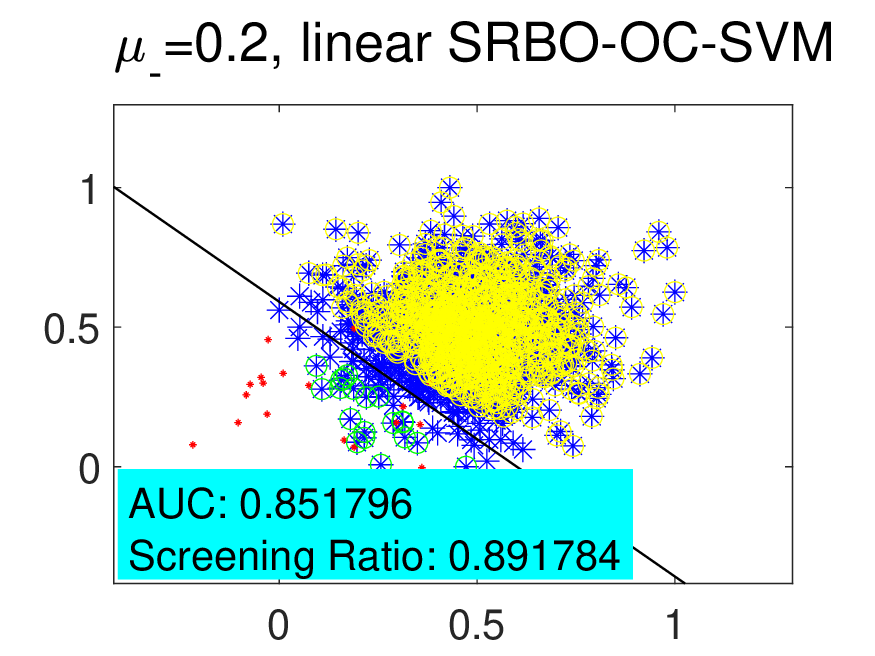}}
		\subfloat[]
		{\includegraphics[width=0.18\textwidth]{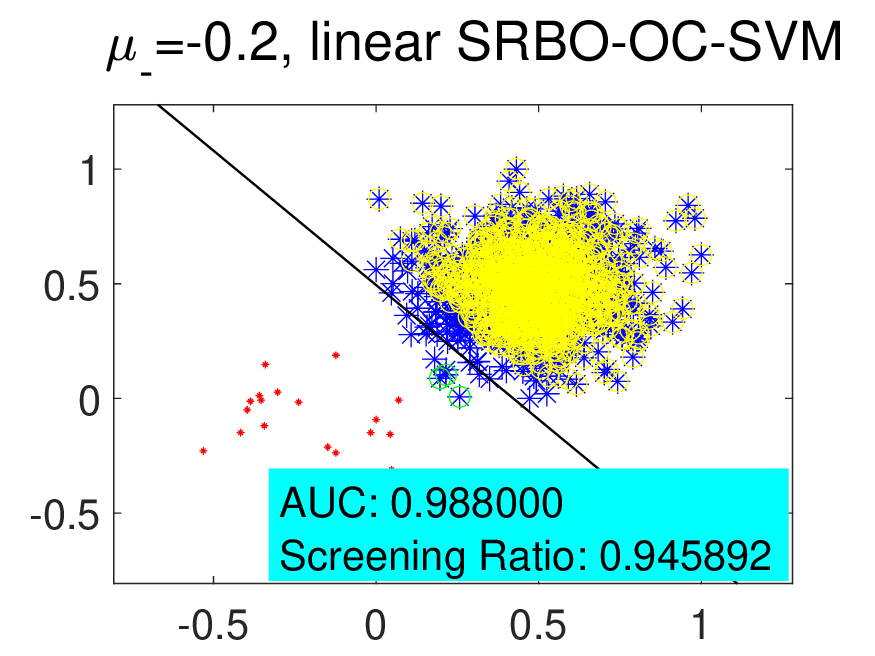}}
		\subfloat[]
		{\includegraphics[width=0.18\textwidth]{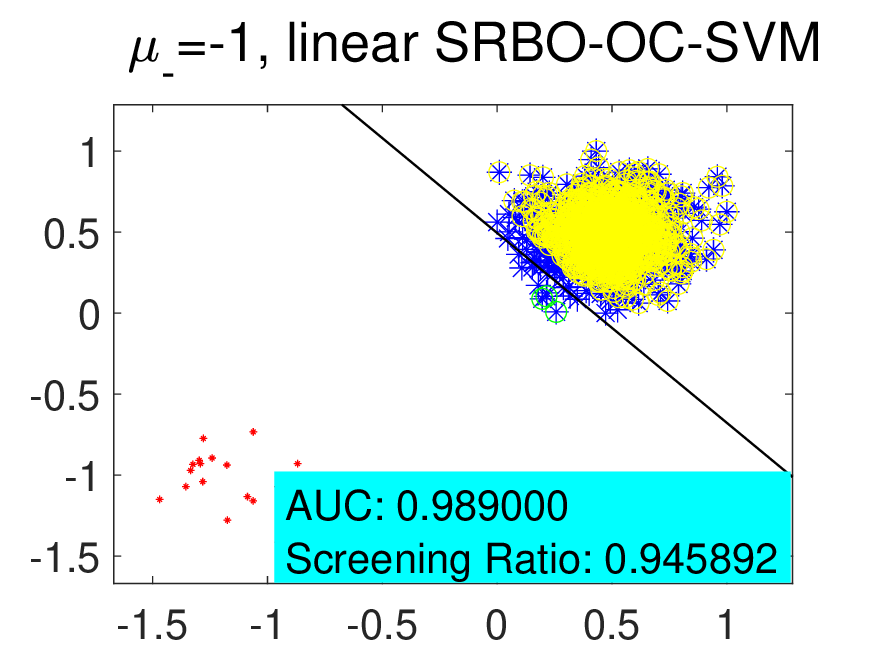}} 	    \subfloat[]
		{\includegraphics[width=0.18\textwidth]{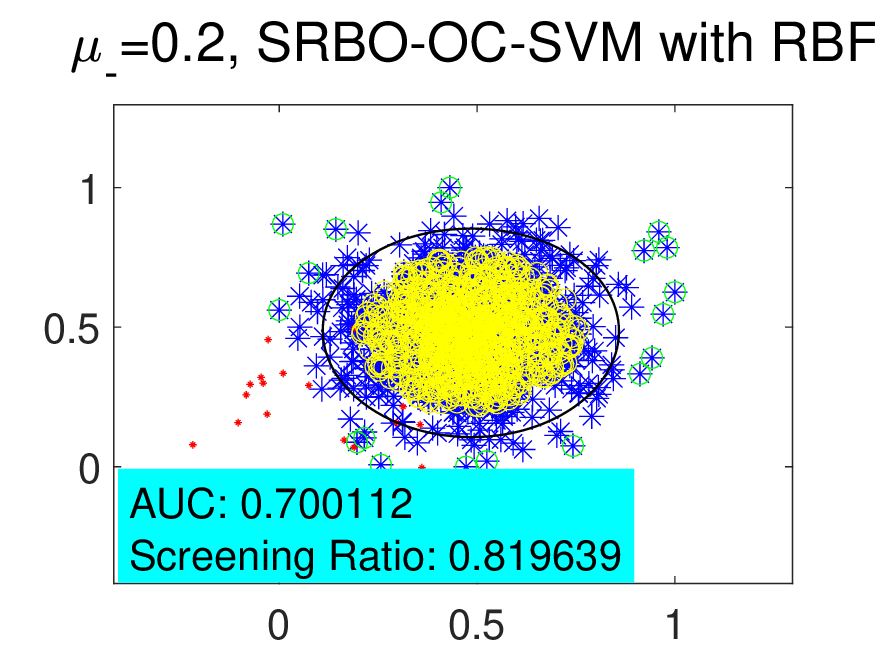}}
		\subfloat[]
		{\includegraphics[width=0.18\textwidth]{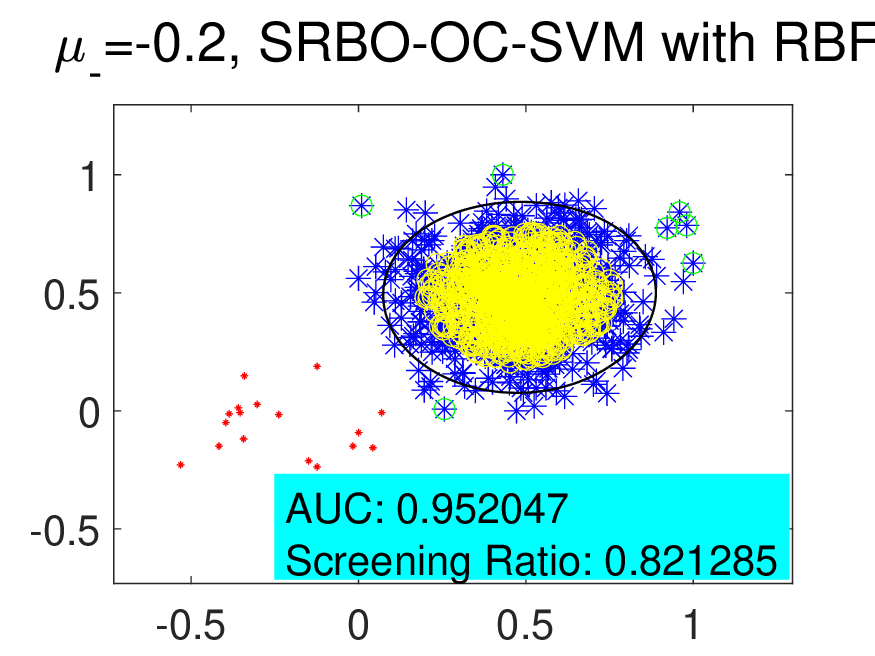}}
		
		\subfloat[]
		{\includegraphics[width=0.18\textwidth]{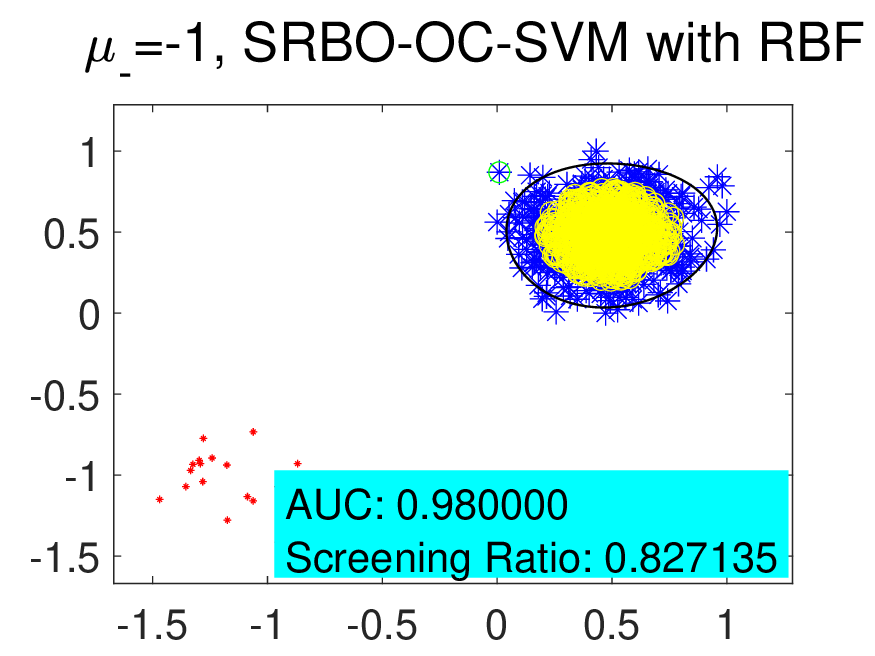}}
		\subfloat[]
		{\includegraphics[width=0.18\textwidth]{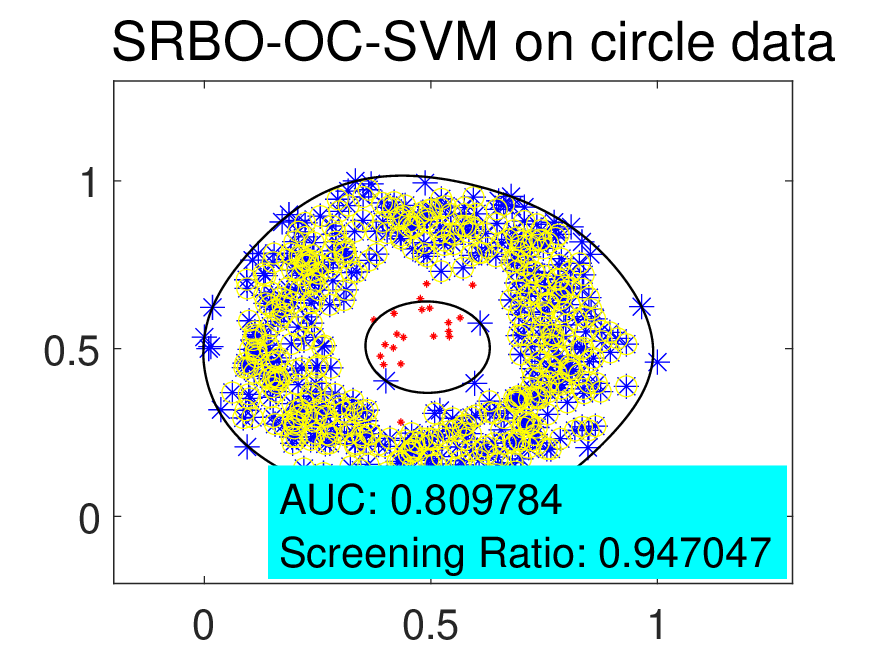}}
		\subfloat[]
		{\includegraphics[width=0.18\textwidth]{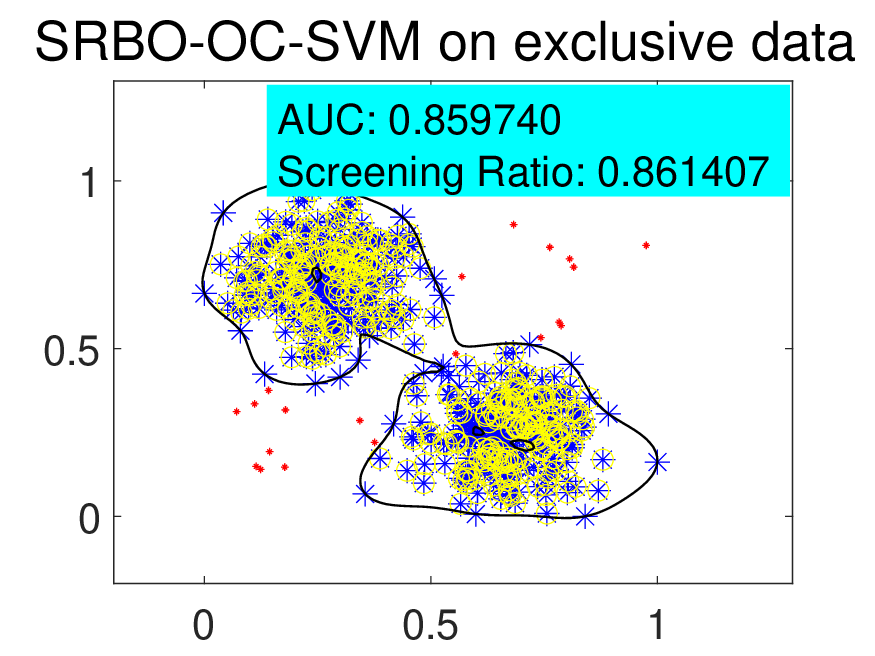}}
		\subfloat[]
		{\includegraphics[width=0.18\textwidth]{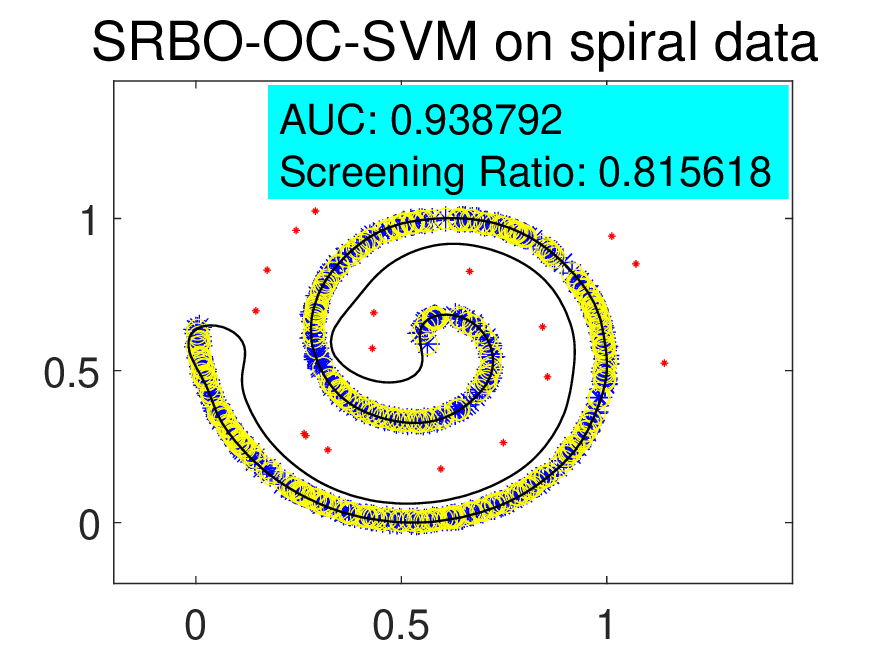}}
		\caption{Classification graphs of SRBO-OC-SVM on three normally distributed data sets (respectively in linear and nonlinear case, all of them with $\mu_{+}=0.5$, but $\mu_{-}=0.2,-0.2,-1$), nonlinear case on circle, exclusive and spiral data. The blue and red points represent the normal and abnormal training instances, respectively. In each graph, the solid black line is the decision boundary. Each graph corresponds to the classifier under optimal parameters, and `AUC' corresponds to the testing results on all normal and abnormal data. In the second row, the green points correspond to the samples deleted in $\mathcal{L}$, and the yellow points correspond to the samples removed in $\mathcal{R}$. `Screening Ratio' is the average result during the whole parameter selection process by SRBO.} 
		\label{fig333}
        \vspace{-1.5em}
	\end{figure*}
	
	In general, our proposed SRBO-$\nu$-SVM could greatly accelerate $\nu$-SVM and guarantee the same predication accuracy as the original $\nu$-SVM.

	\subsection{Experiments on Unsupervised Models}  \label{Experiment: OC-SVM}
Similarly, we evaluated the proposed SRBO in the unsupervised OC-SVM model. 
	
	\begin{table*}[htbp!]
		\renewcommand\arraystretch{1.1}
        \centering
		\resizebox{0.8\textwidth}{!}{
		\begin{threeparttable}[b]
			\centering
			\caption{Comparisons of 3 kinds of unsupervised methods on 26 benchmark data sets in linear case.}\label{Table5}
			\begin{tabular}{*{9}{c}}
				\toprule
				\multirow{2}{*}{Data set} &
				\multicolumn{2}{c}{KDE}&\multicolumn{2}{c}{OC-SVM}&\multicolumn{4}{c}{SRBO-OC-SVM}\\
				\cmidrule(lr){2-3}\cmidrule(lr){4-5}\cmidrule(lr){6-9}
				&AUC(\%)&Time(s)&AUC(\%)&Time(s)&AUC(\%)&Time(s)&Screening Ratio(\%)&Speedup Ratio \\
				\midrule
				Hepatitis
				&\bf{93.33}&0.2008&80.00&\bf{0.0090}&80.00&0.0111&68.70&0.8076\\
				Fertility
				&55.00&0.2468&\bf{90.00}&\bf{0.0048}&\bf{90.00}&0.0119&43.42&0.4045\\
				Planning Relax
				&58.33&0.4576&\bf{72.22}&\bf{0.0050}&\bf{72.22}&0.0101&60.66&0.4960\\
				Sonar
				&\bf{76.19}&0.6694&52.38&\bf{0.0046}&52.38&0.0106&65.25&0.6525\\
				SpectHeart
				&\bf{77.78}&0.6679&\bf{77.78}&0.0099&\bf{77.78}&\bf{0.0098}&42.62&\bf{1.0196}\\
				Haberman
				&49.13&1.0277&\bf{73.77}&\bf{0.0093}&\bf{73.77}&0.0181&48.62&0.5129\\
				LiverDisorder
				&59.42&1.4737&\bf{63.77}&\bf{0.0089}&\bf{63.77}&0.0177&49.09&0.4997\\
				Monks
				&\bf{59.77}&1.4005&54.02&\bf{0.0094}&54.02&0.0132&57.65&0.7150\\
				BreastCancer569
				&40.71&1.6861&\bf{61.95}&0.0182&\bf{61.95}&\bf{0.0147}&67.00&\bf{1.1242}\\
				BreastCancer683
				&\bf{89.71}&2.1590&63.97&0.0242&63.97&\bf{0.0187}&27.88&\bf{1.2894}\\
				Australian
				&\bf{76.98}&2.0877&67.63&0.0274&67.63&\bf{0.0196}&38.88&\bf{1.3974}\\
				Pima
				&62.75&2.3810&\bf{65.36}&0.0405&\bf{65.36}&\bf{0.0182}&40.51&\bf{2.2253}\\
				Biodegration
				&\bf{73.93}&3.1987&65.40&0.0729&65.40&\bf{0.0334}&54.60&\bf{2.1854}\\
				Banknote
				&67.64&4.9516&\bf{96.36}&0.0720&\bf{96.36}&\bf{0.0492}&41.28&\bf{1.4630}\\
				HCV-Egy
				&57.76&4.2513&\bf{73.65}&0.1707&\bf{73.65}&\bf{0.0351}&67.77&\bf{4.8605}\\
				CMC
				&55.10&4.9801&\bf{57.14}&0.1067&\bf{57.14}&\bf{0.0214}&74.13&\bf{4.9893}\\
				Yeast
				&55.89&4.6836&\bf{69.02}&0.1868&\bf{69.02}&\bf{0.0474}&59.81&\bf{3.9442}\\
				Wifi-localization
				&\bf{84.50}&7.0754&75.00&0.3769&75.00&\bf{0.2174}&42.55&\bf{1.7339}\\
				CTG
				&\bf{93.41}&7.3492&76.71&0.5642&76.71&\bf{0.0735}&72.90&\bf{7.6720}\\
				Abalone
				&71.38&18.5725&\bf{83.48}&2.9363&\bf{83.48}&\bf{0.8583}&56.07&\bf{3.4210}\\
				Winequality
				&\bf{77.24}&22.9347&75.41&5.0440&75.41&\bf{0.8254}&35.27&\bf{6.1109}\\
				ShillBidding
				&\bf{89.88}&36.2951&88.77&13.8901&88.77&\bf{3.1883}&57.02&\bf{4.3566}\\
				Musk
				&\bf{94.62}&57.5992&83.70&13.4376&83.70&\bf{4.4493}&50.16&\bf{3.0202}\\
				Electrical
				&61.60&99.7598&\bf{62.85}&16.7832&\bf{62.85}&\bf{6.6371}&41.17&\bf{2.5287}\\  
				Epiletic
				&66.30&204.8250&\bf{77.39}&59.6642&\bf{77.39}&\bf{2.9646}&32.82&\bf{20.1253}\\
				Nursery
				&33.49&186.5547&\bf{72.45}&42.9865&\bf{72.45}&\bf{1.6138}&78.85&\bf{26.6365}\\
                \midrule
   	        Win/Draw/Loss&14/1/11&&0/26/0&&&\\
                Win/Draw/Loss&&26/0/0&&19/0/7&\\
				\bottomrule
			\end{tabular}
		\end{threeparttable}}
	\end{table*}
	
	\begin{table*}[htbp!]
		\renewcommand\arraystretch{1.1}
       \centering
		\resizebox{0.8\textwidth}{!}{
		\begin{threeparttable}[b]
			\centering
			\caption{Comparisons of 3 kinds of unsupervised methods on 26 benchmark data sets in nonlinear case.}\label{Table6}
			\begin{tabular}{*{9}{c}}
				\toprule
				\multirow{2}{*}{Data set} &
				\multicolumn{2}{c}{KDE}&\multicolumn{2}{c}{OC-SVM}&\multicolumn{4}{c}{SRBO-OC-SVM}\\
				\cmidrule(lr){2-3}\cmidrule(lr){4-5}\cmidrule(lr){6-9}
				&AUC(\%)&Time(s)&AUC(\%)&Time(s)&AUC(\%)&Time(s)&Screening Ratio(\%)&Speedup Ratio \\
				\midrule
				Hepatitis
				&\bf{93.33}&0.2006&73.33&\bf{0.0069}&73.33&0.0078&75.26&0.8905\\
				Fertility
				&55.00&0.2538&\bf{85.00}&\bf{0.0047}&\bf{85.00}&0.0078&65.34&0.5980\\
				Planning Relax
				&58.33&0.4576&\bf{63.89}&\bf{0.0066}&\bf{63.89}&0.0083&91.34&0.7936\\
				Sonar
				&\bf{76.19}&0.5052&50.00&\bf{0.0068}&50.00&0.0110&50.86&0.6169\\
				SpectHeart
				&\bf{77.78}&0.6679&75.93&\bf{0.0116}&75.93&0.0138&34.10&0.8443\\
				Haberman
				&49.18&0.9566&\bf{60.66}&0.0080&\bf{60.66}&\bf{0.0059}&65.90&\bf{1.3555}\\
				LiverDisorder
				&59.42&0.9911&\bf{69.57}&\bf{0.0095}&\bf{69.57}&0.0118&62.32&0.8004\\
				Monks
				&\bf{59.77}&1.2957&58.62&\bf{0.0120}&58.62&0.0159&73.61&0.7560\\
				BreastCancer569
				&40.71&1.5674&\bf{93.81}&0.0196&\bf{93.81}&\bf{0.0166}&60.08&\bf{1.1831}\\
				BreastCancer683
				&89.71&2.0082&\bf{90.44}&0.0274&\bf{90.44}&\bf{0.0158}&60.65&\bf{1.7375}\\
				Australian
				&\bf{76.98}&2.0389&54.68&0.0311&54.68&\bf{0.0273}&64.25&\bf{1.1389}\\
				Pima
				&62.75&2.2388&\bf{68.63}&0.0294&\bf{68.63}&\bf{0.0116}&41.25&\bf{2.5245}\\
				Biodegration
				&\bf{73.93}&2.7417&65.41&0.0650&65.41&\bf{0.0297}&62.60&\bf{2.1917}\\
				Banknote
				&67.64&4.6092&\bf{70.55}&0.0878&\bf{70.55}&\bf{0.0297}&33.66&\bf{2.9559}\\
				HCV-Egy
				&57.76&4.2379&\bf{74.37}&0.1440&\bf{74.37}&\bf{0.0328}&32.94&\bf{4.3928}\\
				CMC
				&55.10&4.8932&\bf{57.48}&0.0993&\bf{57.48}&\bf{0.0349}&50.78&\bf{2.8484}\\
				Yeast
				&55.89&4.4471&\bf{69.70}&0.1999&\bf{69.70}&\bf{0.0370}&62.07&\bf{5.4090}\\
				Wifi-localization
				&\bf{84.50}&7.3994&74.50&0.3534&74.50&\bf{0.0661}&63.42&\bf{5.3466}\\
				CTG
				&93.41&7.1585&\bf{93.65}&0.4370&\bf{93.65}&\bf{0.0905}&61.18&\bf{4.8301}\\
				Abalone
				&71.38&18.7118&\bf{83.47}&3.2294&\bf{83.47}&\bf{1.2252}&47.31&\bf{2.6357}\\
				Winequality
				&\bf{77.24}&22.4314&72.35&3.9918&72.35&\bf{0.4204}&30.85&\bf{9.4940}\\
				ShillBidding
				&89.88&38.9719&\bf{95.18}&17.1932&\bf{95.18}&3.6904&52.15&\bf{4.6589}\\
				Musk
				&\bf{94.62}&56.6843&83.17&14.6424&83.17&\bf{4.0065}&49.93&\bf{3.6547}\\
				Electrical
				&\bf{61.60}&82.9467&57.85&17.1411&57.85&\bf{0.7888}&39.39&\bf{21.7307}\\  
				Epiletic
				&66.30&123.4692&\bf{72.13}&42.0975&\bf{72.13}&\bf{4.6391}&32.22&\bf{9.0745}\\
				Nursery
				&33.49&133.4568&\bf{98.34}&41.1782&\bf{98.34}&\bf{5.3014}&66.77&\bf{7.7675}\\
                \midrule
   	        Win/Draw/Loss&16/0/10&&0/26/0&&&\\
                Win/Draw/Loss&&26/0/0&&19/0/7&\\
				\bottomrule
			\end{tabular}
		\end{threeparttable}}
	\end{table*}

	\textbf{Experiments on 6 Artificial Data Sets. }  \label{OC-SVM: Artifical}
The advantages of our proposed SRBO-OC-SVM are verified on 6 artificial data sets. The data are generated in a manner similar to the supervised case. Considering that OC-SVM is used for anomaly detection, the negative sample size in each data set is reduced to 20\% of its original size. Positive samples are used to train the model and then both positive and negative samples are used to evaluate the model during testing. Taking into account the imbalance of positive and negative classes, AUC values are used to evaluate the prediction performance. The classification graphs are drawn in Fig. \ref{fig333}, respectively. 
	
	In Fig. \ref{fig333}, according to the `Screening Ratio', our SRBO-OC-SVM could screen out most inactive instances, which implies the effectiveness of our SRBO-OC-SVM. In addition, the decision hyperplanes and the results of `AUC' of our SRBO-OC-SVM are exactly the same as the original OC-SVM, which demonstrates the safety of SRBO-OC-SVM.
	
	\textbf{Experiments on 26 Benchmark Data Sets. }  \label{OC-SVM: benchmark}
	The performance comparisons of our SRBO-OC-SVM with the density kernel estimator (KDE) and the original OC-SVM are shown in Table \ref{Table5} and Table \ref{Table6}. 
	
	The time of SRBO-OC-SVM is significantly shorter than that of the original OC-SVM both in the linear and nonlinear cases. The larger the data sets, the more obvious acceleration effect, which verifies that our proposed safe screening rule with bi-level optimization achieves a superior performance for large-scale problem. By comparing the AUC of the three methods, OC-SVM is higher than KDE on most of the data sets, which shows that OC-SVM has better prediction ability. The AUC of OC-SVM and SRBO-OC-SVM is also consistent.
	
	In general, our SRBO has achieved excellent acceleration performance in both supervised $\nu$-SVM and unsupervised OC-SVM. It can greatly reduce computational time and guarantee safety.

	\subsection{Comparison of DCDM and `quadprog'}  \label{Experiment: comparison}
    	In this section, we compare the performance of two different solvers, i.e. our proposed DCDM and `quadprog' in the MATLAB toolbox. For a better comparison, we choose 5 medium-scale benchmark data sets (sample size greater than 10,000). Two metrics are use, i.e., `Accuracy(\%)' and `Time(s)'. `Accuracy(\%)' represents the prediction accuracy under optimal parameter in each case. The `Time(s)' refers to complete one full experiment for each model with the optimal parameter. Concretely, the `Time(s)' for $\nu$-SVM represents the computational time to solve the QPP with optimal parameters. For SRBO-$\nu$-SVM, it is the sum of three parts: (1) The time to solve the hidden variable $\bm{\delta}$. (2) The time for safe screening. (3) The time to solve the reduced small-scale problem after screening.

      As shown in Figs. \ref{bar_com_linear} and \ref{bar_com_rbf}, for each data set, the first two are significantly higher than the latter two. This indicates that the running time of DCDM is much shorter than that of `quadprog', and the difference is even nearly fifty times. Our proposed screening rule shows good acceleration in both solvers. Since the computational time of `quadprog' on `Adult' data set is very long and is not comparable to DCDM, we do not provide its results.

      Table \ref{Table_comparisons} provides the corresponding accuracy results. In each case, the use of SRBO does not affect the accuracy. This verifies the safety of our SRBO-$\nu$-SVM. 

\begin{table*}[htbp]
		\renewcommand\arraystretch{1.1}
        \centering
	  \resizebox{0.6\textwidth}{!}{
			\begin{threeparttable}[b]
				\centering
				\caption{Accuracy comparison of different algorithms of $\nu$-SVM on 5 medium-scale benchmark data sets.}\label{Table_comparisons}
				\begin{tabular}{*{9}{c}}
					\toprule
					\multirow{2}{*}{Data set} &\multicolumn{4}{c}{$\nu$-SVM}&\multicolumn{4}{c}{SRBO-$\nu$-SVM}\\
					\cmidrule(lr){2-5}\cmidrule(lr){6-9}
					&\multicolumn{2}{c}{quadprog}&\multicolumn{2}{c}{DCDM}&\multicolumn{2}{c}{quadprog}&\multicolumn{2}{c}{DCDM}\\
					\cmidrule(lr){2-3}\cmidrule(lr){4-5}\cmidrule(lr){6-7}\cmidrule(lr){8-9}
					&Linear&RBF&Linear&RBF&Linear&RBF&Linear&RBF \\
					\midrule			
					Epiletic
					&\bf{81.45}&\bf{96.70}&80.83&96.35&\bf{81.45}&\bf{96.70}&80.83&96.35\\
					Nursery
					&\bf{100.00}&\bf{100.00}&91.05&\bf{100.00}&\bf{100.00}&\bf{100.00}&91.05&\bf{100.00}\\
					credit card
					&\bf{41.47}&\bf{41.47}&\bf{41.47}&\bf{41.47}&\bf{41.47}&\bf{41.47}&\bf{41.47}&\bf{41.47}\\
					Accelerometer
					&\bf{100.00}&\bf{100.00}&\bf{100.00}&\bf{100.00}&\bf{100.00}&\bf{100.00}&\bf{100.00}&\bf{100.00}\\
					Adult
					&-&-&\bf{92.75}&\bf{76.92}&-&-&\bf{92.75}&\bf{76.92}\\
                   \bottomrule
				\end{tabular}
		\end{threeparttable}}
	\end{table*}
 
    \begin{figure*}
		\centering
		\subfloat[Linear kernel]
		{    \label{bar_com_linear}
			\includegraphics[width=0.49\textwidth]{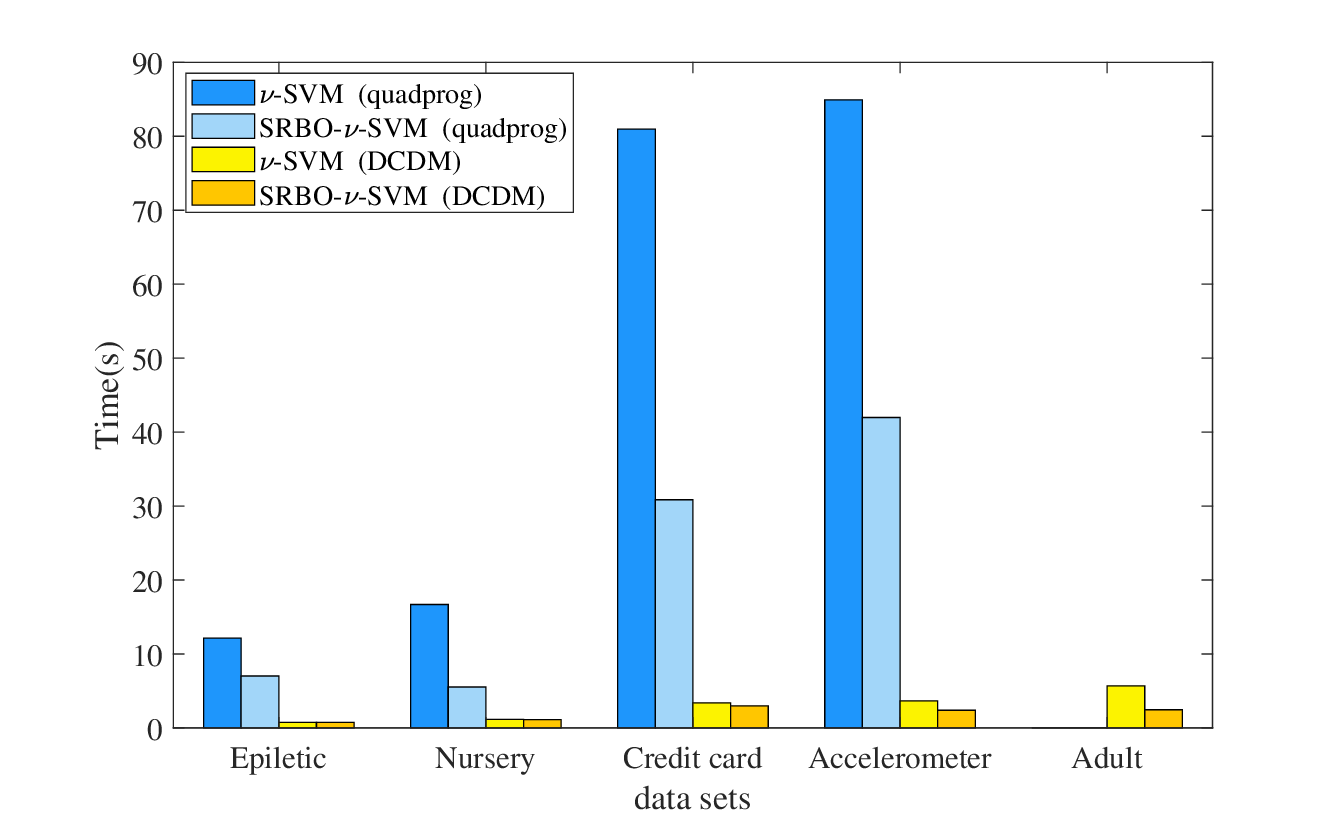}}
		\subfloat[RBF kernel]
		{ \label{bar_com_rbf}
			\includegraphics[width=0.49\textwidth]{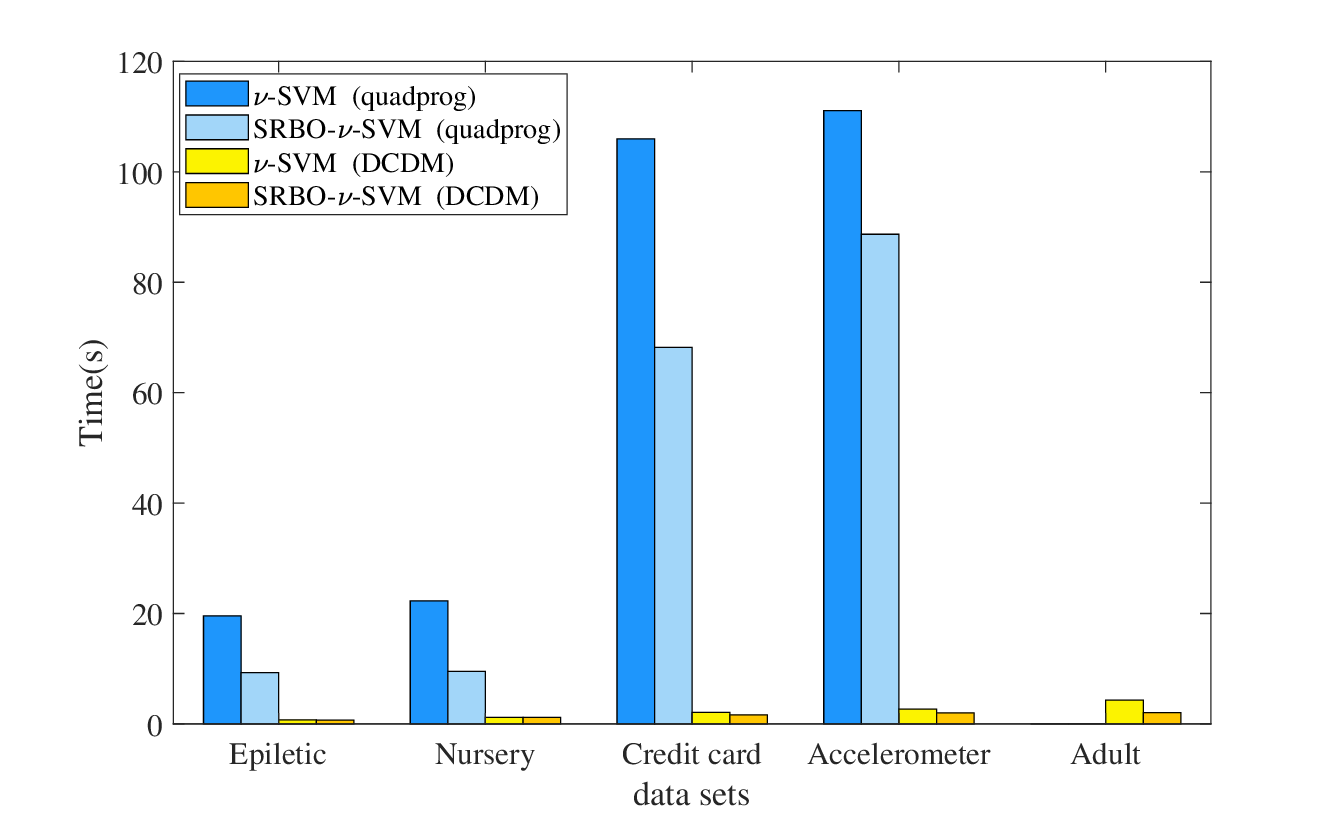}}
		\caption{Time comparison between quadprog and DCDM with linear and RBF kernel on large-scale benchmark data sets.}
		\label{com}
	\end{figure*}

	\subsection{MNIST Data Set}
	To further verify the efficiency of our method on a large-scale and high-dimensional data set, we performed experiments on the MNIST image data set. It is a common benchmark data set for the recognition of handwritten digits (0 through 9). It consists of 60000 training samples and 10000 test samples. Each sample is a handwritten grayscale digit image with $28\times28$ pixels. We have transformed each image into a vector with dimensions $782$. The sample sizes for each category are shown in Table \ref{TableMNIST}. We treated class 1 as a positive class sample and the other classes as negative classes, respectively. The results are shown in Tables \ref{MNIST_linear} and \ref{MNIST_nonlinear}. 
	
	From the time point of view, in almost each case, our proposed SRBO has accelerated the original model. Moreover, the time of DCDM is significantly shorter than that of `quadprog'. The speed of all the methods is faster in the nonlinear case, which is due to the fact that nonlinear models are more suitable for dealing with high-dimensional data. In addition, the accuracies of the methods with and without SRBO are exactly the same. This verifies the safety of our SRBO-$\nu$-SVM. 

  Notice that the screening ratio and speedup ratio of our SRBO on high-dimensional data are not as prominent as that of the previous low-dimensional data. This is caused by the complex structure of the data itself. Exploring how the increased dimensionality of data affect the performance of safe screening is one of our future work. 
	
	\begin{table}[htbp!]
		\centering
		\caption{Sample sizes for 10 categories of MNIST data set.}\label{TableMNIST}
       \resizebox{0.6\textwidth}{!}{
		\begin{tabular}{ccccccccccc}\bottomrule            
    Label & 0 & 1 & 2 & 3 & 4 & 5 & 6 & 7 & 8 & 9\\
\bottomrule
$^\#$Training & 5923& 6742&5958 &6131 &5842 &5421 &5918 &6265 & 5851 & 5949 \\
$^\#$Test & 980 & 1135 & 1032 & 1010 & 982 & 892 & 958 & 1028 & 974 & 1009\\
			\bottomrule
		\end{tabular}}
	\end{table}
	
	\begin{table*}[htbp!]
		\renewcommand\arraystretch{1.1}
		\resizebox{\textwidth}{!}{
		\begin{threeparttable}[b]
			\centering
			\caption{Comparisons of $\nu$-SVM and SRBO-$\nu$-SVM on MNIST data set with positive class of digit ``1'' in linear case.}\label{MNIST_linear}
			\begin{tabular}{ccccccc|cccccc}
				\toprule
				\multirow{2}{*}{Negative Class}&\multicolumn{2}{c}{$\nu$-SVM (quadprog)}&\multicolumn{4}{c}{SRBO-$\nu$-SVM (quadprog)}&\multicolumn{2}{c}{$\nu$-SVM (DCDM)}&\multicolumn{4}{c}{SRBO-$\nu$-SVM ((DCDM))}\\
				\cmidrule(lr){2-3}\cmidrule(lr){4-7}\cmidrule(lr){8-9}\cmidrule(lr){10-13}
				&Accuracy(\%)&Time(s)&Accuracy(\%)&Time(s)&Screening Ratio(\%)&Speedup Ratio&Accuracy(\%)&Time(s)&Accuracy(\%)&Time(s)&Screening Ratio(\%)&Speedup Ratio\\
				\midrule
				0
				&\bf{99.60}&680.9950&\bf{99.60}&676.3948&16.48&\bf{1.0068}&99.05&419.6875&99.05&384.6198&16.48&\bf{1.0911}\\
				2
				&\bf{98.99}&338.5352&\bf{98.99}&333.4777&20.34&\bf{1.0151}&97.68&264.9781&97.68&203.6748&20.34&\bf{1.3009}\\
				3
				&\bf{98.83}&415.7089&9\bf{8.83}&313.5044&35.34&\bf{1.3260}&98.62&348.3195&98.62&184.6195&35.34&\bf{1.8866}\\
				4
				&99.51&464.0792&99.51&240.8822&42.16&\bf{1.9265}&\bf{99.69}&264.8426&\bf{99.69}&203.6485&42.16&\bf{1.3004}\\
				5
				&\bf{99.09}&345.6233&\bf{99.09}&154.6874&45.31&\bf{2.2343}&98.33&213.6482&98.33&168.1644&45.31&\bf{1.2704}\\
				6
				&\bf{99.51}&470.0865&\bf{99.51}&285.6789&31.26&\bf{1.6455}&99.31&224.6183&99.31&167.3495&31.26&\bf{1.3422}\\
				7
				&99.14&522.9732&99.14&506.3948&12.34&\bf{1.0327}&\bf{99.68}&469.3165&\bf{99.68}&326.1978&12.34&\bf{1.4387}\\
				8
				&98.18&247.8925&98.18&164.3978&30.16&\bf{1.5078}&\bf{98.84}&209.1648&\bf{98.84}&111.3462&30.16&\bf{1.8785}\\
				9
				&\bf{99.35}&474.3416&\bf{99.35}&297.3648&26.17&\bf{1.5951}&97.63&316.3498&97.63&216.3497&26.17&\bf{1.4622}\\
				\bottomrule
			\end{tabular}
		\end{threeparttable}}
	\end{table*}

	\begin{table*}[htbp!]
		\renewcommand\arraystretch{1.1}
		\resizebox{\textwidth}{!}{
		\begin{threeparttable}[b]
			\centering
			\caption{Comparisons of $\nu$-SVM and SRBO-$\nu$-SVM on MNIST data set with positive class of digit ``1'' in nonlinear case.}\label{MNIST_nonlinear}
			\begin{tabular}{ccccccc|cccccc}
				\toprule
				\multirow{2}{*}{Negative Class}&\multicolumn{2}{c}{$\nu$-SVM (quadprog)}&\multicolumn{4}{c}{SRBO-$\nu$-SVM (quadprog)}&\multicolumn{2}{c}{$\nu$-SVM (DCDM)}&\multicolumn{4}{c}{SRBO-$\nu$-SVM (DCDM)}\\
				\cmidrule(lr){2-3}\cmidrule(lr){4-7}\cmidrule(lr){8-9}\cmidrule(lr){10-13}
				&Accuracy(\%)&Time(s)&Accuracy(\%)&Time(s)&Screening Ratio(\%)&Speedup Ratio&Accuracy(\%)&Time(s)&Accuracy(\%)&Time(s)&Screening Ratio(\%)&Speedup Ratio\\
                \midrule
				0
				&\bf{100.00}&17.5818&\bf{100.00}&15.1975&22.16&\bf{1.1568}&\bf{100.00}&13.9486&\bf{100.00}&13.4862&22.16&\bf{1.0342}\\
				2
				&\bf{100.00}&17.7246&\bf{100.00}&17.9639&10.34&0.9866&\bf{100.00}&15.3947&\bf{100.00}&15.6487&10.34&0.9837\\
				3
				&\bf{100.00}&18.4645&\bf{100.00}&7.9486&29.43&\bf{2.3229}&\bf{100.00}&14.3875&\bf{100.00}&7.3648&29.43&\bf{1.9535}\\
				4
				&\bf{100.00}&17.4361&\bf{100.00}&11.3497&21.46&\bf{1.5362}&\bf{100.00}&12.6487&\bf{100.00}&11.3948&21.46&\bf{1.1100}\\
				5
				&\bf{100.00}&15.8443&\bf{100.00}&11.9487&25.31&\bf{1.3260}&\bf{100.00}&11.2846&\bf{100.00}&6.3485&25.31&\bf{1.7775}\\
				6
				&\bf{100.00}&17.7388&\bf{100.00}&9.3486&42.16&\bf{1.8974}&\bf{100.00}&11.3648&\bf{100.00}&8.3691&42.16&\bf{1.3579}\\
				7
				&\bf{100.00}&18.8371&\bf{100.00}&6.1843&37.84&\bf{3.0459}&\bf{100.00}&9.3648&\bf{100.00}&7.9154&37.84&\bf{1.1831}\\
				8
				&\bf{100.00}&17.5379&\bf{100.00}&12.3648&36.27&\bf{1.4183}&\bf{100.00}&13.6489&\bf{100.00}&6.3948&36.27&\bf{2.1343}\\
				9
				&\bf{100.00}&17.6672&\bf{100.00}&18.0033&14.35&\bf{0.9813}&\bf{100.00}&16.4684&\bf{100.00}&12.9486&14.35&\bf{1.2718}\\
				\bottomrule
			\end{tabular}
		\end{threeparttable}}
	\end{table*}
	
		\subsection{Wilcoxon Signed Rank Test}
	
	From the experimental results above, we observed that our proposed SRBO method does not outperform the original algorithms for a few data sets in terms of time. Here, we give a significance test analysis using the Wilcoxon signed rank test to demonstrate the efficiency of our proposed algorithms. 
	
	The original hypothesis $H_{0}$ and checking hypothesis $H_{1}$ are
	\begin{eqnarray}\label{H}
		H_{0}:M_{0} \leq  M_{1} ~~~ v.s ~~~  H_{1}:M_{0} >  M_{1},
	\end{eqnarray}
    where $M_{0}$ is the median runtime of the original SVMs, and $M_{1}$ is the median runtime of our proposed SRBO algorithm. If $H_{0}$ is rejected, it means that our proposed method is more efficient than the original method.
    
    When the test sample size $n$ is large ($n>20$), the statistic $Z$ can be approximately normally distributed
    \begin{eqnarray}\label{Wilcoxon_test}
		Z = \frac{W^{+}-n(n+1)/4}{\sqrt{n(n+1)(2n+1)/24}} \sim N(0,1),
	\end{eqnarray}
	where $W^{+} = \sum_{j=1}^{n} R_{j}^{+}I(a_{j}>0)$, $a_{j} = time_{SVMs} - time_{SRBO-SVMs}$ and $R_{j}^{+}$ denotes the rank of $|a_{j}|$ in the sample with absolute value.

	The results of the Wilcoxon signed rank test are presented in Table \ref{test}. At the level of significance $\alpha = 0.05$, it appears that the $p$-values satisfy $p<\alpha$ in most cases. That is, the original hypothesis $H_{0}$ can be rejected in most cases. This means that the time of our SRBO-SVMs is significantly less than that of the original SVMs. Therefore, the computational advantage of our method is statistically significant.

\begin{table*}[htbp!]
	\renewcommand\arraystretch{1.1}
    \centering
	\resizebox{0.7\textwidth}{!}{
	\begin{threeparttable}[b]
		\centering
		\caption{Statistical test on time results of all the experiments}\label{test}
		\begin{tabular}{*{13}{c}}
			\toprule
			\multirow{3}{*}{}&\multicolumn{4}{c}{26 small-scale data sets}&\multicolumn{4}{c}{5 medium-scale data sets}&\multicolumn{4}{c}{MNIST data set}\\
			\cmidrule(lr){2-5}\cmidrule(lr){6-9}\cmidrule(lr){10-13}
			&\multicolumn{2}{c}{$\nu$-SVM}&\multicolumn{2}{c}{OC-SVM}&\multicolumn{2}{c}{quadprog}&\multicolumn{2}{c}{DCDM}&\multicolumn{2}{c}{quadprog}&\multicolumn{2}{c}{DCDM}\\
			\cmidrule(lr){2-3}\cmidrule(lr){4-5}\cmidrule(lr){6-7}\cmidrule(lr){8-9}\cmidrule(lr){10-11}\cmidrule(lr){12-13}
			&Linear&RBF&Linear&RBF&Linear&RBF&Linear&RBF&Linear&RBF&Linear&RBF \\
			\midrule
			$n$
			&13&26&26&26&4&4&5&5&9&9&9&9 \\
			$W^{+}$
			&20&39&47&38&0&0&0&0&0&3&0&1 \\
			$Z$
			&-&-3.46&-3.26&-3.49&-&-&-&-&-&-&-&-\\
			$p$
			&0.0402$^{*}$ &0.0007$^{*}$&0.0003$^{*}$&0.0002$^{*}$&0.125&0.125&0.0313$^{*}$&0.0313$^{*}$&0.0020$^{*}$&0.0098$^{*}$&0.0020$^{*}$&0.0039$^{*}$ \\
			\bottomrule
		\end{tabular}
  \begin{tablenotes}
          \footnotesize
           \item[1] The ``*'' indicates that the corresponding result is statistically significant with the level of $\alpha = 0.05.$
      \end{tablenotes}
	\end{threeparttable}}
 
\end{table*}
				
	\section{Conclusion}
	In this paper, a safe screening rule with bi-level optimization is proposed to reduce the computational cost of the original $\nu$-SVM without losing its accuracy. The main idea is to construct the upper and lower bounds of variables based on variational inequalities, KKT conditions and the $\nu$-property to estimate a region included optimal solutions of the optimization problem. Second, we extend this idea to a unified framework of SVM-type models and propose a safe screening rule with bi-level optimization for OC-SVM. The proposed SRBO-$\nu$-SVM and SRBO-OC-SVM can identify inactive samples before solving optimization problems, greatly reducing computational time and keeping the solution unchanged. In addition, we propose the DCDM algorithm to improve the solution speed. Finally, we conduct numerical experiments on artificial data sets and benchmark data sets to verify the safety and effectiveness of our SRBO and DCDM.
	
	There are three main factors that will affect the efficiency of our screening rule. The first is the introduced hidden vector $\bm{\delta}$. Inappropriate values of $\bm{\delta}$ may cause the feasible region to be too large, so that samples will be screened with low efficiency. To address this issue, we design a small-scale optimization problem to obtain the desired $\bm{\delta}$, i.e., QPP (\ref{deltaQPP}). The second is the value of parameter $\nu$, since our screening rule depends on the solution with the previous parameter $\nu$. We have given the experimental analysis, shown in Fig. \ref{fig5}. However, the theoretical relationship between the parameter interval and the screening ratio has not been proved so far. The third is the structure and dimensions of the data itself. In the experiments, we find that this fact has an impact on the performance of our screening rule. 
	
   The following topics will be our future work. First, study on the relationship between parameter intervals and screening ratio. Second, explore how the increased dimensionality of the data affects the performance of safe screening. Third, extend the safe screening rule to the field of deep learning.
			
	\appendices	
	
		\bibliographystyle{elsarticle-num}
	\bibliography{mybibfile}	  

\begin{thebibliography}{10}
\expandafter\ifx\csname url\endcsname\relax
  \def\url#1{\texttt{#1}}\fi
\expandafter\ifx\csname urlprefix\endcsname\relax\def\urlprefix{URL }\fi
\expandafter\ifx\csname href\endcsname\relax
  \def\href#1#2{#2} \def\path#1{#1}\fi

\bibitem{Steinwart3}
I.~Steinwart, A.~Christmann, Support vector machines, {Springer Verlag}, {New
  York}, 2008.

\bibitem{Vapnik}
V.~Vapnik, Statistical Learning Theory, {Wiley}, {New York}, 1998.

\bibitem{scholkopfNewSupportVector2000}
B.~Sch{\"o}lkopf, A.~J. Smola, R.~C. Williamson, P.~L. Bartlett, New support
  vector algorithms, Neural Computation 12~(5) (2000) 1207--1245.
\newblock \href {https://doi.org/10.1162/089976600300015565}
  {\path{doi:10.1162/089976600300015565}}.

\bibitem{haoNewSupportVector2010}
P.-Y. Hao, New support vector algorithms with parametric
  {{Insensitive}}/{{Margin}} model, Neural Networks. 23~(1) (2010) 60--73.
\newblock \href {https://doi.org/10.1016/j.neunet.2009.08.001}
  {\path{doi:10.1016/j.neunet.2009.08.001}}.

\bibitem{scholkopfEstimatingSupportHighDimensional2001}
B.~{Sch{\"o}lkopf}, J.~C. Platt, J.~{Shawe-Taylor}, A.~J. Smola, R.~C.
  Williamson, Estimating the support of a high-dimensional distribution, Neural
  Computation 13~(7) (2001) 1443--1471.
\newblock \href {https://doi.org/10.1162/089976601750264965}
  {\path{doi:10.1162/089976601750264965}}.

\bibitem{yinFaultDetectionBased2014}
S.~Yin, X.-P. Zhu, C.~Jing, Fault detection based on a robust one class support
  vector machine, Neurocomputing 145 (2014) 263--268.
\newblock \href {https://doi.org/10.1016/j.neucom.2014.05.035}
  {\path{doi:10.1016/j.neucom.2014.05.035}}.

\bibitem{chalapathyAnomalyDetectionUsing2019}
R.~Chalapathy, A.~K. Menon, S.~Chawla, Anomaly detection using one-class neural
  networks, arXiv preprint arXiv:1802.06360 (2019).
\newblock \href {http://arxiv.org/abs/1802.06360} {\path{arXiv:1802.06360}}.

\bibitem{yajimaOneClassSupportVector2006a}
Y.~Yajima, One-class support vector machines for recommendation tasks, in:
  Pacific-Asia Conference on Knowledge Discovery and Data Mining, Vol. 3918,
  {Springer Berlin Heidelberg}, {Berlin, Heidelberg}, 2006, pp. 230--239.

\bibitem{guerbai2015effective}
Y.~Guerbai, Y.~Chibani, B.~Hadjadji, The effective use of the one-class svm
  classifier for handwritten signature verification based on writer-independent
  parameters, Pattern Recognition 48~(1) (2015) 103--113.

\bibitem{kauffmann2020towards}
J.~Kauffmann, K.-R. M{\"u}ller, G.~Montavon, Towards explaining anomalies: a
  deep taylor decomposition of one-class models, Pattern Recognition 101 (2020)
  107198.

\bibitem{tax_support_2004}
D.~M. Tax, R.~P. Duin, Support {Vector} {Data} {Description}, Machine Learning
  54~(1) (2004) 45--66.
\newblock \href {https://doi.org/10.1023/B:MACH.0000008084.60811.49}
  {\path{doi:10.1023/B:MACH.0000008084.60811.49}}.

\bibitem{2018Deep}
L.~Ruff, R.~Vandermeulen, N.~G{\"o}rnitz, L.~Deecke, M.~Kloft, Deep one-class
  classification, in: International Conference on Machine Learning, 2018, pp.
  4393--4402.

\bibitem{xing2023contrastive}
H.-J. Xing, P.-P. Zhang, Contrastive deep support vector data description,
  Pattern Recognition 143 (2023) 109820.

\bibitem{changTrainingSupportVector2001}
C.-C. Chang, C.-J. Lin, Training {\emph{v}} -{{support vector classifiers}}:
  {{theory}} and {{algorithms}}, Neural Computation 13~(9) (2001) 2119--2147.
\newblock \href {https://doi.org/10.1162/089976601750399335}
  {\path{doi:10.1162/089976601750399335}}.

\bibitem{steinwartOptimalParameterChoice2003a}
I.~Steinwart, On the {{optimal parameter choice}} for /spl nu/-{{support vector
  machines}}, IEEE Transactions on Pattern Analysis and Machine Intelligence
  25~(10) (2003) 11.
\newblock \href {https://doi.org/10.1109/tpami.2003.1233901}
  {\path{doi:10.1109/tpami.2003.1233901}}.

\bibitem{changLIBSVMLibrarySupport2011}
C.-C. Chang, C.-J. Lin, {{LIBSVM}}: {{A}} library for support vector machines,
  ACM Transactions on Intelligent Systems and Technology 2~(3) (2011) 1--27.
\newblock \href {https://doi.org/10.1145/1961189.1961199}
  {\path{doi:10.1145/1961189.1961199}}.

\bibitem{platt1998fast}
J.~C. Platt, 12 fast training of support vector machines using sequential
  minimal optimization, Advances in kernel methods (1999) 185--208.

\bibitem{hsieh_dual_2008}
C.-J. Hsieh, K.-W. Chang, C.-J. Lin, S.~S. Keerthi, S.~Sundararajan, A dual
  coordinate descent method for large-scale linear {SVM}, in: Proceedings of
  the 25th international conference on {Machine} learning - {ICML} '08, ACM
  Press, Helsinki, Finland, 2008, pp. 408--415.
\newblock \href {https://doi.org/10.1145/1390156.1390208}
  {\path{doi:10.1145/1390156.1390208}}.

\bibitem{wenthundersvm18}
Z.-Y. Wen, J.-S. Shi, Q.-B. Li, B.-S. He, J.~Chen, {{ThunderSVM}}: {{A}} fast
  {{SVM}} library on {{GPUs}} and {{CPUs}}, The Journal of Machine Learning
  Research 19 (2018) 1--5.

\bibitem{xiajiantaoFastTrainingAlgorithm2003}
{J.-T. Xia }, {M.-Y. He}, {Y.-Y. Wang}, {Y. Feng}, A fast training algorithm
  for support vector machine via boundary sample selection, in: International
  {{Conference}} on {{Neural Networks}} and {{Signal Processing}}, 2003.
  {{Proceedings}} of the 2003, {IEEE}, {Nanjing}, 2003, pp. 20--22 Vol.1.
\newblock \href {https://doi.org/10.1109/ICNNSP.2003.1279203}
  {\path{doi:10.1109/ICNNSP.2003.1279203}}.

\bibitem{wangLassoScreeningRules2014}
J.~Wang, P.~Wonka, J.-P. Ye, Lasso screening rules via dual polytope
  projection, Advances in neural information processing systems (Oct. 2014).
\newblock \href {http://arxiv.org/abs/1211.3966} {\path{arXiv:1211.3966}}.

\bibitem{wangSafeScreeningRule2013}
J.~Wang, J.-Y. Zhou, J.~Liu, P.~Wonka, J.-P. Ye, A safe screening rule for
  sparse logistic regression, Advances in neural information processing systems
  (2013).
\newblock \href {http://arxiv.org/abs/1307.4145} {\path{arXiv:1307.4145}}.

\bibitem{panSafeFeatureElimination2021}
X.-L. Pan, Y.-T. Xu, A safe feature elimination rule for {{L1-regularized}}
  logistic regression, IEEE Transactions on Pattern Analysis and Machine
  Intelligence (2021) 1--1\href {https://doi.org/10.1109/TPAMI.2021.3071138}
  {\path{doi:10.1109/TPAMI.2021.3071138}}.

\bibitem{pmlr-v28-ogawa13b}
K.~Ogawa, Y.~Suzuki, I.~Takeuchi, Safe screening of non-support vectors in
  pathwise {{SVM}} computation, in: Proceedings of the 30th International
  Conference on Machine Learning, Vol.~28, {PMLR}, {Atlanta, Georgia, USA},
  2013, pp. 1382--1390.

\bibitem{dantasSafeScreeningSparse2021}
C.~F. Dantas, E.~Soubies, C.~Fevotte, Safe screening for sparse regression with
  the kullback-leibler divergence, in: {{ICASSP}} 2021 - 2021 {{IEEE
  International Conference}} on {{Acoustics}}, {{Speech}} and {{Signal
  Processing}} ({{ICASSP}}), {IEEE}, {Toronto, ON, Canada}, 2021, pp.
  5544--5548.
\newblock \href {https://doi.org/10.1109/ICASSP39728.2021.9414183}
  {\path{doi:10.1109/ICASSP39728.2021.9414183}}.

\bibitem{pmlr-v32-wangd14}
J.~Wang, P.~Wonka, J.-P. Ye, Scaling {{SVM}} and least absolute deviations via
  exact data reduction, in: Proceedings of the 31st International Conference on
  Machine Learning, Vol.~32, {PMLR}, {Bejing, China}, 2014, pp. 523--531.

\bibitem{caoMultivariableEstimationbasedSafe2020}
Y.-Z. Cao, Y.-T. Xu, J.-L. Du, Multi-variable estimation-based safe screening
  rule for small sphere and large margin support vector machine,
  Knowledge-Based Systems 191 (2020) 105223.
\newblock \href {https://doi.org/10.1016/j.knosys.2019.105223}
  {\path{doi:10.1016/j.knosys.2019.105223}}.

\bibitem{yangSafeAccelerativeApproach2018}
Z.-J. Yang, Y.-T. Xu, A safe accelerative approach for pinball support vector
  machine classifier, Knowledge-Based Systems 147 (2018) 12--24.
\newblock \href {https://doi.org/10.1016/j.knosys.2018.02.010}
  {\path{doi:10.1016/j.knosys.2018.02.010}}.

\bibitem{YANG20181}
Z.-J. Yang, Y.-T. Xu, A safe sample screening rule for laplacian twin
  parametric-margin support vector machine, Pattern Recognition 84 (2018)
  1--12.
\newblock \href {https://doi.org/https://doi.org/10.1016/j.patcog.2018.06.018}
  {\path{doi:https://doi.org/10.1016/j.patcog.2018.06.018}}.

\bibitem{fercoqMindDualityGap2015}
O.~Fercoq, A.~Gramfort, J.~Salmon, Mind the duality gap: safer rules for the
  lasso, in: International Conference on Machine Learning, PMLR, 2015, pp.
  333--342.

\bibitem{yuanBoundEstimationbasedSafe2021}
M.~Yuan, Y.-T. Xu, Bound estimation-based safe acceleration for maximum margin
  of twin spheres machine with pinball loss, Pattern Recognition 114 (2021)
  107860.
\newblock \href {https://doi.org/10.1016/j.patcog.2021.107860}
  {\path{doi:10.1016/j.patcog.2021.107860}}.

\bibitem{yuan-haishaoImprovementsTwinSupport2011}
{Y.-H. Shao}, {C.-H. Zhang}, {X.-B. Wang}, {N.-Yang. Deng}, Improvements on
  twin support vector machines, IEEE Transactions on Neural Networks 22~(6)
  (2011) 962--968.
\newblock \href {https://doi.org/10.1109/TNN.2011.2130540}
  {\path{doi:10.1109/TNN.2011.2130540}}.

\bibitem{xuImprovedNtwinSupport2014}
Y.-T. Xu, R.~Guo, An improved {{$\nu$}}-{{Twin}} support vector machine,
  Applied Intelligence 41~(1) (2014) 42--54.
\newblock \href {https://doi.org/10.1007/s10489-013-0500-2}
  {\path{doi:10.1007/s10489-013-0500-2}}.

\bibitem{gulerFoundationsOptimization2010}
O.~G{\"u}ler, Foundations of {{Optimization}}, Vol. 258, {Springer New York},
  {New York, NY}, 2010.
\newblock \href {https://doi.org/10.1007/978-0-387-68407-9}
  {\path{doi:10.1007/978-0-387-68407-9}}.

\bibitem{Lichman:2013}
M.~Lichman, {{UCI}} machine learning repository (2013).

\end{thebibliography}
\end{document}